\definecolor{gainsboro}{rgb}{0.86, 0.86, 0.86}
\newcommand{\expect}[1]{\mathop{{}\mathbb{E}}\left[{#1}\right]}
\newcommand{\condexpect}[2]{\mathbb{E}_{#1}\left[{#2}\right]}
\newcommand{\suchthat}{\ensuremath{~\middle|~}}
\newcommand{\knowing}{\suchthat{}}
\newcommand{\card}[1]{\left\lvert{#1}\right\rvert}
\newcommand{\absv}[1]{\card{#1}}
\newcommand{\norm}[1]{\left\lVert{#1}\right\rVert}
\newcommand{\floor}[1]{\left\lfloor{#1}\right\rfloor}
\newcommand{\indexvar}[3]{\ensuremath{{{#3}^{\ifthenelse{\equal{#1}{}}{}{\left({#1}\right)}}_{#2}}}}
\newcommand{\indexvarNoPar}[3]{\ensuremath{{{#3}^{\ifthenelse{\equal{#1}{}}{}{\left{#1}\right}}_{#2}}}}
\newcommand{\params}[2]{\indexvarNoPar{#1}{#2}{\theta}}
\providecommand{\iprod}[2]{\ensuremath{\left\langle #1,\,#2  \right\rangle}}
\providecommand{\mnorm}[1]{\ensuremath{\left\lvert#1\right\rvert}}
\providecommand{\norm}[1]{\ensuremath{\left\lVert#1\right\rVert }}
\newcommand{\weight}[1]{\params{}{#1}}
\newcommand{\gradient}[2]{\indexvar{#1}{#2}{g}}
\newcommand{\worker}[2]{\indexvar{#1}{#2}{w}}
\newcommand{\proba}[2]{\ensuremath{\text{P}\!\left({#1}\ifthenelse{\equal{#2}{}}{}{\knowing{}{#2}}\right)}}
\DeclareMathOperator*{\argmin}{argmin}
\renewcommand{\paragraph}[1]{\textbf{#1}~}
\newcommand{\drift}[1]{\xi_{#1}}
\newcommand{\dev}[1]{\delta_{#1}}
\newcommand{\mmt}[2]{m^{(#1)}_{#2}}
\newcommand{\AvgMmt}[1]{\overline{m}_{#1}}
\newcommand{\diffmmt}[2]{{\widetilde{m}}^{(#1)}_{#2}}
\def\D{\mathcal{D}}
\def\R{\mathbb{R}}
\def\P{\mathcal{P}}
\def\H{\mathcal{H}}
\newtheorem{assumption}{Assumption}
\newtheorem{theorem}{\bfseries Theorem}
\newtheorem{theorem**}{\bfseries Theorem}
\newtheorem{theorem*}{\bfseries Theorem}
\newtheorem{lemma*}{\bfseries Lemma}
\newtheorem{proposition}{Proposition}
\newtheorem{proposition*}{Proposition}
\newtheorem{corollary}{Corollary}
\newtheorem{corollary*}{Corollary}
\newtheorem{definition}{Definition}
\newtheorem*{definition*}{Definition}
\newtheorem{remark}{Remark}
\newtheorem{lemma}{Lemma}
\newcounter{resetdummycounter}
\icmltitlerunning{Byzantine Machine Learning Made Easy}
\begin{document}

\twocolumn[
\icmltitle{Byzantine Machine Learning Made Easy \\ By Resilient Averaging of Momentums} 
%\begin{small}
%On the benefit of distributed momentum on Byzantine-proof SGD 
%\end{small}}

% It is OKAY to include author information, even for blind
% submissions: the style file will automatically remove it for you
% unless you've provided the [accepted] option to the icml2021
% package.

% List of affiliations: The first argument should be a (short)
% identifier you will use later to specify author affiliations
% Academic affiliations should list Department, University, City, Region, Country
% Industry affiliations should list Company, City, Region, Country

% You can specify symbols, otherwise they are numbered in order.
% Ideally, you should not use this facility. Affiliations will be numbered
% in order of appearance and this is the preferred way.
% \icmlsetsymbol{equal}{*}

\begin{icmlauthorlist}
\icmlauthor{Sadegh Farhadkhani$^*$}{epfl}
\icmlauthor{Rachid Guerraoui$^*$}{epfl}
\icmlauthor{Nirupam Gupta$^*$}{epfl}
\icmlauthor{Rafael Pinot$^*$}{epfl}
\icmlauthor{John Stephan$^*$}{epfl}
\end{icmlauthorlist}

\icmlaffiliation{epfl}{Distributed Computing Laboratory (DCL), School of Computer and Communication Sciences, École Polytechnique Fédérale de Lausanne (EPFL), Switzerland}
% \icmlaffiliation{goo}{Googol ShallowMind, New London, Michigan, USA}
% \icmlaffiliation{ed}{School of Computation, University of Edenborrow, Edenborrow, United Kingdom}
% \icmlcorrespondingauthor{Sadegh Farhadkhani}{sadegh.farhadkhani@epfl.ch}
\icmlcorrespondingauthor{Nirupam Gupta}{nirupam.gupta@epfl.ch}
\icmlcorrespondingauthor{Rafael Pinot}{rafael.pinot@epfl.ch}
% \icmlcorrespondingauthor{John Stephan}{john.stephan@epfl.ch}

\doparttoc % Tell to minitoc to generate a toc for the parts
\faketableofcontents % Run a fake tableofcontents command for the partocs

% You may provide any keywords that you
% find helpful for describing your paper; these are used to populate
% the "keywords" metadata in the PDF but will not be shown in the document
\icmlkeywords{Machine Learning, ICML}

\vskip 0.3in
]

% this must go after the closing bracket ] following \twocolumn[ ...

% This command actually creates the footnote in the first column
% listing the affiliations and the copyright notice.
% The command takes one argument, which is text to display at the start of the footnote.
% The \icmlEqualContribution command is standard text for equal contribution.
% Remove it (just {}) if you do not need this facility.

%\printAffiliationsAndNotice{}  % leave blank if no need to mention equal contribution
\printAffiliationsAndNotice{\icmlEqualContribution} % otherwise use the standard text.

\begin{abstract}
Byzantine resilience emerged as a prominent topic within the distributed machine learning community.
Essentially, the goal is to enhance distributed optimization algorithms, such as distributed SGD, in a way that guarantees convergence despite the presence of some misbehaving (a.k.a., {\em Byzantine}) workers. 
Although a myriad of techniques addressing the problem have been proposed, the field arguably rests on fragile foundations. These techniques are hard to prove correct and rely on assumptions that are (a) quite unrealistic, i.e., often violated in practice, and (b) heterogeneous, i.e., making it difficult to compare approaches. 

We present \emph{RESAM (RESilient Averaging of Momentums)}, a unified framework that makes it simple to establish optimal Byzantine resilience, relying only on standard machine learning assumptions. Our framework is mainly composed of two operators: \emph{resilient averaging} at the server and \emph{distributed momentum} at the workers. We prove a general theorem stating the convergence of distributed SGD under RESAM. Interestingly,  demonstrating and comparing the convergence of many existing techniques become direct corollaries of our theorem, without resorting to stringent assumptions. We also present an empirical evaluation of the practical relevance of RESAM.
\end{abstract}

%Our notion of resilient averaging induces a novel criterion of robustness, which can be easily verified, in particular by a wide class of existing aggregation rules.
%We prove a general theorem showing the convergence of distributed SGD whenever resilient averaging is combined with distributed momentum.
% We prove a general theorem stating that Byzantine resilience can be achieved by any combination of resilient averaging with distributed momentum. %without the need for any stringent assumptions.  

% We also show how we can leverage our \emph{RESAM} framework to compare the convergence speed of different approaches both analytically and empirically.  
% Furthermore, the worst case convergence guarantees of these techniques can be compared 

%Our main technical contribution lies in showing that, for many existing approaches, the use of \emph{distributed momentum} eliminates the need for stringent assumptions. Overall, our result 
% (a) establishes the vital importance of distributed momentum for demonstrating Byzantine resilience in machine learning, and (b) 
% Overall, our result provides a solid theoretical ground to compare existing and future approaches to Byzantine resilience, through the use of distributed momentum.

\section{Introduction}\label{sec:intro}
The vast amount of data collected every day, combined with the increasing complexity of machine learning models, has led to the emergence of distributed learning schemes~\cite{Tensorflow2015,OpenProblemsinFed2021}. In the now classical parameter server distributed architecture, the learning procedure consists of multiple data owners (or {\em workers}) collaborating to build a global model with the help of a central entity (the \emph{parameter server}), typically using the celebrated distributed stochastic gradient descent (SGD) algorithm~\cite{tsitsiklis1986distributed,bertsekas2015parallel}. The server essentially maintains an estimate of the model parameter which is updated iteratively using the \emph{average} of the~{\em stochastic gradients} computed by the workers. 

%Since the first attempt to describe the problem of Byzantine resilient machine learning~\cite{feng2015distributed}, a wealth of approaches have been proposed to provide protection against Byzantine workers. Some of the notable schemes adapted from robust statistics include \emph{minimum diameter averaging}~\cite{rousseeuw1985multivariate, brute_bulyan}, \emph{geometric median}~\cite{chen2017distributed,pillutla2019robust,prasad2020robust}, \emph{coordinate-wise trimmed mean}~\cite{yin2018byzantine}, \emph{coordinate-wise median}~\cite{yin2018byzantine}, and \emph{mean-around-median}~\cite{meamed}. Researchers have also proposed original schemes that utilize the iterative nature of the SGD based learning algorithm for Byzantine resilience, e.g., {\em Krum}~\cite{krum}, outlier detection~\cite{alistarh2018byzantine, allen2020byzantine}, \emph{centered clipping}~\cite{Karimireddy2021} and \emph{comparative gradient elimination}~\cite{gupta2021byzantine}. ~\cite{feng2015distributed,feng2017onlinelearning, krum, alistarh2018byzantine,yin2018byzantine,phocas,meamed, bernstein2018signsgd,diakonikolas2019sever, brute_bulyan, el2020genuinely,prasad2020robust, allen2020byzantine,data2021byzantine, ghosh2021communication, gupta2021byzantine}.

Nevertheless, this algorithm is vulnerable to "misbehaving" workers that could (either intentionally or inadvertently) 
sabotage the learning by sending arbitrarily bad gradients to the server~\cite{feng2015distributed, su2016fault}. These workers are commonly referred to as \emph{Byzantine}~\cite{ByzProblem}. To address this critical issue, a large body of research has been devoted to adapting distributed SGD to make it converge despite the presence of (a fraction of) Byzantine workers, e.g.,~\cite{krum,chen2017distributed,brute_bulyan,yin2018byzantine,meamed, alistarh2018byzantine, diakonikolas2019sever, allen2020byzantine, prasad2020robust, Karimireddy2021}. The general idea consists in replacing the averaging step of the algorithm with a {\em robust aggregation rule}, basically seeking to filter out the bad gradients. 

Demonstrating the correctness of the resulting algorithms reveals however very challenging, and previous works rely on unusual assumptions. For instance, a large body of work assumes stochastic gradients that follow a specific distribution, e.g., sub-Gaussian/exponential~\cite{chen2017distributed,feng2017onlinelearning, yin2018byzantine, prasad2020robust}. Some approaches rely on stronger assumptions that are not even satisfied by a Gaussian distribution, such as {\em almost surely absolutely boundedness}~\cite{alistarh2018byzantine,diakonikolas2019sever,allen2020byzantine}, or {\em vanishing variance}~\cite{krum,meamed,brute_bulyan,collaborativeElMhamdi21}. Indeed, these assumptions are often violated in practice, resulting in the failure of these approaches when some workers behave maliciously~\cite{little,empire}. Ultimately, the considerable difference in these assumptions from one approach to another makes it quite difficult to compare the underlying techniques.  

%\textcolor{blue}{Besides the theoretical inconsistencies, many of these techniques have been empirically shown to fail against certain Byzantine behaviors~\cite{little, empire}.}
% even with such impractical assumptions, many of these works can only show their algorithm to converge {\em approximately}.
In short, whilst Byzantine resilience is considered crucial to establish robustness in distributed machine learning, the field arguably rests on fragile foundations. %Ideally, we would like to prove BYzantine resilience under only minimal assumptions that are anyway made in vanilla machine learning.

\subsection{Our Contributions}
% To address the shortcoming we present a standardized theory of Byzantine resilience in the context of distributed SGD. More precisely, 
% {\bf  Please say here what in what sense we unify the theory and build solid foundations}
We present {\bf RESAM} {\em (RESilient Averaging of Momentums)}, a general framework for studying Byzantine resilience in distributed machine learning under minimal assumptions: (1) {\em unbiased} stochastic gradients with {\em bounded variance} and (2) first-order {\em Lipschitz smoothness}.\footnote{These assumptions are elementary for analyzing SGD, even in the non-Byzantine setting~\cite{bottou2018optimization}, and are used in all prior works on Byzantine resilience.} RESAM integrates two main components within distributed SGD, namely \emph{resilient averaging} and \emph{distributed momentum}.

 \begin{enumerate}[label=(\alph*), leftmargin=*]
     \item We introduce resilient averaging as a new elementary criterion of robustness for aggregation rules. It can be verified in an off-line manner and is readily satisfied by many existing schemes, under classical assumptions. It also standardizes the way to measure the robustness of aggregation rules through a parameter $\lambda$, that we call the {\em resilience coefficient}.
     \item We make use of distributed momentum which adapts the notion of gradient momentum~\cite{momentum} to distributed architectures. Specifically, at each step of the algorithm, honest (i.e., non-Byzantine) workers send the momentums of their stochastic gradients to the server, instead of simply sending their gradients.
 \end{enumerate}

\textbf{Byzantine resilience.} We prove a general theorem establishing finite-time convergence of distributed SGD enhanced through RESAM. As an immediate corollary, we make the following contributions.

\begin{enumerate}[label=(\alph*), leftmargin=*]
    \item We show (for the first time) the Byzantine resilience of several existing schemes, without resorting to non-standard assumptions. Our result holds as long as the Byzantine workers represent less than $1/2$ of the system, which is optimal~\cite{alistarh2018byzantine}. %Essentially {\em optimal Byzantine resilience} of these schemes via RESAM.
    \item  We precisely characterize the convergence rates of these schemes through our framework, enabling comparison of their performances on a common theoretical ground. Essentially, our analysis indicates that using aggregation rules with smaller resilience coefficient $\lambda$ results in faster convergence.
    % \textcolor{blue}{To illustrate our approach, we determine the resilience coefficients of many rules of which we compare for the first time the (worst-case) convergence rate.}
\end{enumerate}

% To summarize even a simple criterion of robustness, namely resilient averaging, proves to be sufficient for Byzantine resilience when combined with distributed momentum. 

% (a) demonstrate for the first time the \emph{optimal Byzantine resilience} of several existing schemes, without resorting to stringent assumptions and (b) compare their convergence rates on a common theoretical ground. Our analysis essentially indicates that resilient averaging rules with smaller coefficient $\lambda$ converge faster. 

\textbf{Technical significance.}
A key observation that enables us to prove our theorem is that the momentums of honest workers' gradients \emph{converge toward one another} as the learning algorithm proceeds. This significantly mitigates the impact of Byzantine workers when using a resilient averaging rule. The caveat is that the conventional techniques used for analyzing the convergence of SGD do not readily apply, since the honest workers' momentums {\em deviate} from the true gradient. To overcome this challenge, we devise a proof technique based on a {\em novel Lyapunov function} which is also of independent interest to the optimization community.

\textbf{Practical relevance.} We report on a comprehensive set of experiments evaluating RESAM on benchmark image classification tasks: MNIST, Fashion-MNIST, and CIFAR-10. We simulate Byzantine behavior using $4$ state-of-the-art attacks. We observe that the algorithm works best when combining resilient averaging and distributed momentum, but performs poorly against some attacks when 
using only one of these notions. This advocates that the combination proposed by RESAM is critical to Byzantine resilience. 

\subsection{Closely Related Work}\label{sec:related work}
We present below comparisons to closely related work. %More discussion on related work is deferred to Section~\ref{sec:concusion}.
%We discuss below some closely related work. Refer to Section~\ref{sec:concusion} for additional discussion.
%We also discuss some future possible extensions of our approach. 

\textbf{Resilient averaging.}  
Whilst the robustness criterion of \emph{C-averaging agreement} introduced in~\cite{collaborativeElMhamdi21} shares similarities with our notion of 
resilient averaging, it is studied under the non-standard assumption of \emph{vanishing variance} of the stochastic gradients (and without exploiting the power of distributed momentum). 
% , without considering distributed momentum.
% to have a vanishing variance, a both unrealistic and non-standard.
%{\bf RG I believe we should say what the assumption is} %  and does not consider distributed momentum. 
Our notion of resilient averaging should also not be confused with the notion of resilience introduced by~\cite{steinhardt2018resilience}, for the latter is an assumption on the distribution of honest workers' gradients. Our notion, on the other hand, is a criterion that can be satisfied by an aggregation rule regardless of the distribution of the workers' gradients.

\textbf{Distributed momentum.}
The first paper to discuss the usefulness of distributed momentum for boosting Byzantine resilience in distributed machine learning is~\cite{distributed-momentum}. Essentially, the paper observes through an extensive set of experiments that distributed momentum helps \emph{some} robustness techniques counter two state-of-the art attacks, namely \emph{little}~\cite{little} and \emph{empire}~\cite{empire}. However, the work lacks concrete theoretical explanations. 
% In contrast, we precisely characterize the reason why distributed momentum enhances the resilience of many robustness techniques. 
Moreover, our experimental findings go beyond~\cite{distributed-momentum} by considering a wider range of attacks and robustness techniques. 
Another related work~\cite{Karimireddy2021} attempts to formally demonstrate that distributed momentum grants provable Byzantine resilience to the robustness technique they devise, called {\em centered clipping} (CC). While the proof relies on standard assumptions, the algorithm requires prior knowledge on the variance of the gradients, which is quite impractical.
Furthermore, their result only holds for small fractions of Byzantine workers less than $1/9.7$, which is clearly sub-optimal.

\subsection{Paper Outline}
Section~\ref{sec:ModelSetting} formally presents the problem of Byzantine resilience in distributed learning. Section~\ref{sec:KeyConcept} introduces RESAM. Section~\ref{sec:ConvergenceAnalysis} presents our main theorem and its corollary showing resilience of some prominent existing approaches. Section~\ref{sec:experiments} presents our experimental results. Section~\ref{sec:concusion} provides additional related work and discussions. Due to space constraints, we defer proofs to appendices ~\ref{sec:resultsskeleton}, ~\ref{app:proofs}, and~\ref{app:resilience_coefficient}.

\section{Problem Statement}\label{sec:ModelSetting}
We consider the parameter server architecture with $n$ workers $\worker{}{1}, \dots,\worker{}{n}$, and a trusted central server. The workers only communicate with the server and there is no inter-worker communication. We let $\D$ be an unknown data distribution. For a given parameter $\weight{} \in \R^d$, a data point $x \sim \D$ has a real-valued loss function $q(\weight{}, \, x)$. The server aims to compute, by collaborating with the workers, a parameter $\weight{}^{*}$ minimizing the expected loss function $Q(\weight{})$ defined to be
\begin{align}
    Q(\weight{}) = \condexpect{x \sim \D}{q(\weight{}, \, x)} \quad \forall \weight{} \in \R^d. \label{eqn:exp_loss}
\end{align}
We assume $Q$ to be differentiable and to have a minimum, i.e., $\min_{\weight{} \in \R^d} Q(\weight{})$ exists and has a finite value. However, as the loss function $Q$ could be non-convex, e.g., when considering deep neural networks, solving the above optimization problem may be NP-hard~\cite{boyd2004convex}. Thus, a more reasonable goal is to compute a critical point of $Q$, i.e., $\weight{}^*$ such that $\norm{\nabla Q(\weight{}^*)} = 0$ where $\nabla Q$ denotes the {\em gradient} of $Q$ and $\norm{\cdot}$ the Euclidean norm on $\mathbb{R}^d$.

\subsection{Vanilla Distributed SGD} 
The traditional way to solve this learning problem is through a distributed implementation of the classical stochastic gradient descent (SGD) method~\cite{bertsekas2015parallel}. This is an iterative algorithm where, in each step $t$, the server maintains a parameter vector $\weight{t}$ which is broadcast to all the workers. Each worker $\worker{}{i}$ then returns an {\em unbiased} stochastic estimate $g^{(i)}_t$ of the gradient $\nabla Q (\weight{t})$. Specifically, 
\begin{align}
    \gradient{i}{t} = \nabla Q(\weight{t}) + u^{(i)}_t,\label{eqn:grad_i}
\end{align}
where $u^{(i)}_t$ is the realization of a random vector $U(\weight{t})$, defined over $\R^d$, that characterizes the \emph{noise} in the gradient computation at $\weight{t}$.\footnote{The noise $U(\weight{t})$ is usually assumed to be a result of sampling data points from $\D$. However, to keep our discussion more general, we let $U(\weight{t})$ follow any distribution subject to Assumption~\ref{asp:bnd_var}.} Ultimately, the server updates $\weight{t}$ by using the average of the received gradients as follows,
\begin{align}
    \weight{t+1} = \weight{t} - \gamma_t \, \frac{1}{n} \sum_{i = 1}^n \gradient{i}{t}, \label{eqn:naive_SGD}
\end{align}
where $\gamma_t \geq 0$ is referred to as the {\em learning rate} at step $t$. 

\subsection{Classical Assumptions} When all the workers are honest, i.e., they follow the prescribed instructions correctly, the above iterative algorithm provably converges to a critical point of function $Q$, under the following assumptions.

\begin{assumption}[Lipschitz smooth loss function]
\label{asp:lip}
There exists $L < \infty$ such that for all $\weight{}, \, \weight{}' \in \R^d$, $$\norm{\nabla Q(\weight{}) - \nabla Q(\weight{}')} \leq L \norm{\weight{} - \weight{}'}.$$
\end{assumption}

\begin{assumption}[Unbiased gradients with bounded variance]
\label{asp:bnd_var}
For all $\weight{} \in \R^d$, the random vector $U(\theta)$ characterizing the gradient noise at $\theta$ is such that $\expect{U(\weight{})} = 0$, and there exists $\sigma < \infty$ such that $\expect{\norm{U(\weight{})}^2} \leq \sigma^2$.
\end{assumption}

These assumptions are indeed satisfied in many learning problems~\cite{ghadimi2013stochastic, bottou2018optimization}.

\subsection{Byzantine Resilience}
\label{sec:ModelSettingResilience}
We study a scenario where up to $f$ workers of \emph{unknown identities} may be {\em Byzantine}~\cite{ByzProblem}. Such workers may send arbitrarily incorrect information to the server, preventing it from solving the learning problem~\cite{su2016fault}. 
The goal is then to design a learning algorithm that computes a critical point despite the fact that a fraction of the workers may be Byzantine. Formally, given $f$ and a real value $\epsilon > 0$, we aim to design an {\em $(f, \, \epsilon)$-resilient} algorithm, as defined below.

%\begin{definition}[{\bf $(f, \, \epsilon)$-Resilience}]
%\label{def:resilience}
%For a real value $\epsilon > 0$, a distributed learning algorithm is said to be {\em $(f, \, \epsilon)$-resilient} if it allows all the honest workers to compute a learning parameter $\widehat{\weight{}}$ such that $\expect{\norm{\nabla Q\left(\widehat{\weight{}} \right)}^2} \leq \epsilon$, where $\expect{\cdot}$ denotes the expectation over the randomness of the algorithm, despite the presence of up to $f$ (less than $n$) Byzantine workers in the system.
%\end{definition}

\begin{definition}[{\bf $(f, \, \epsilon)$-Resilience}]
\label{def:resilience}
A distributed learning algorithm is said to be {\em $(f, \, \epsilon)$-resilient} if, despite the presence of up to $f$ Byzantine workers, it enables the server to output a learning parameter $\widehat{\weight{}}$ such that $$\expect{\norm{\nabla Q\left(\widehat{\weight{}} \right)}^2} \leq \epsilon, $$ where $\expect{\cdot}$ is defined over the randomness of the algorithm. Moreover, an algorithm is said to be \emph{optimally resilient} if it is $(f, \, \epsilon)$-resilient for any $f < n/2$ and $\epsilon >0$.
\end{definition}

A standard approach to confer Byzantine resilience to distributed SGD is to replace the simple averaging of the workers' gradients at the server by a more sophisticated aggregation rule that seeks to mitigate the adversarial impact of any incorrect information sent by the Byzantine workers. In particular, consider an aggregation rule $F: \R^{d \times n} \to \R^d$. Then, at every step $t$ the server updates $\weight{t}$ as follows:
\begin{align}
    \weight{t+1} = \weight{t} - \gamma_t \, F\left( \gradient{1}{t}, \ldots, \, \gradient{n}{t}\right).
\end{align} 
Note that the gradient $\gradient{i}{t}$ of any Byzantine worker $\worker{}{i}$ need not follow~\eqref{eqn:grad_i} and may take arbitrary values.

\begin{table*}[t]
\centering
\bgroup
\def\arraystretch{1.5}
\begin{tabular}{c||c|c|c|c|c|c||c} 
%\hline
{\bf Aggregation rule} & MDA & CWTM & MeaMed & Krum${^*}$ & GM  & CWMed & \textbf{Lower bound} \\ \specialrule{.1em}{.05em}{.05em} 
$\lambda$ & $\frac{2f}{n-f}$ & $\frac{f}{n-f}\Delta$ & $\frac{2f}{n-f}\Delta$ & $1+\sqrt{\frac{n-f}{n-2f}}$ & $1+\frac{n-f}{\sqrt{(n-2f)n}}$ & $\frac{n}{2(n-f)}\Delta$ & $\frac{f}{n-f}$  %\hline 
\vspace{0.3cm}
\end{tabular}
\egroup
$\Delta \coloneqq \min \{2\sqrt{n-f}, \sqrt{d}\}$
\caption{Resilience coefficients $\lambda$ for various aggregation rules satisfying Definition~\ref{def:rational}, when $f < n/2$. Note that the lower bound for $\lambda$ is $\nicefrac{f}{n-f}$. Thus, MDA has an order-optimal coefficient (differs from the lower bound only by a constant factor).}
\label{tab:resilience_coefficient}
\end{table*}

\textbf{Some notable aggregation rules.}  
In this paper, we consider a wide range of aggregation rules: Krum${^*}$~\citeyear{krum},\footnote{Krum${^*}$ is a variant of Krum, described in Appendix~\ref{sec:Krum}.} \emph{geometric median} (GM)~\citeyear{chen2017distributed}, \emph{minimum diameter averaging} (MDA)~\citeyear{brute_bulyan}, \emph{coordinate-wise trimmed mean} (CWTM)~\citeyear{yin2018byzantine}, \emph{coordinate-wise median} (CWMed)~\citeyear{yin2018byzantine}, \emph{mean-around-median} (MeaMed)~\citeyear{meamed}, \emph{centered clipping} (CC)~\citeyear{Karimireddy2021}, and \emph{comparative gradient elimination} (CGE)~\citeyear{gupta2021byzantine}. We refer the interested reader to Appendix~\ref{app:resilience_coefficient} for a detailed description of these aggregation rules.
% on these rules.

%\paragraph{Important aggregation rules.} 
%\noindent \fcolorbox{black}{gainsboro!40}{
%\parbox{0.465\textwidth}{\centering
%\footnotesize
%\begin{itemize}[leftmargin=*]
%\setlength{\itemsep}{0.1em}
%    \item[] \hspace{-0.3cm} \emph{Krum${^*}$~\cite{krum}},
%    \item[] \hspace{-0.3cm} \emph{Geometric median} (GM)~\cite{chen2017distributed},
%    \item[] \hspace{-0.3cm} \emph{Minimum diameter averaging} (MDA)~\cite{brute_bulyan},
%    \item[] \hspace{-0.3cm} \emph{Coordinate-wise trimmed mean} (CWTM)~\cite{yin2018byzantine},
%    \item[] \hspace{-0.3cm} \emph{Coordinate-wise median} (CWMed)~\cite{yin2018byzantine},
%    \item[] \hspace{-0.3cm} \emph{Mean-around-median} (MeaMed)~\cite{meamed},
%    \item[] \hspace{-0.3cm} \emph{Centered clipping} (CC)~\cite{Karimireddy2021}, and
%    \item[] \hspace{-0.3cm} \emph{Comparative gradient elimination} (CGE)~\cite{gupta2021byzantine}.
%\end{itemize}
%\normalsize
%}}

\section{RESilient Averaging of Momentums}\label{sec:KeyConcept}
Our framework incorporates the notions of {\em resilient averaging} and {\em distributed momentum} in distributed SGD. 
We first recall distributed momentum, followed by the introduction of resilient averaging. Finally, we present the skeleton of a learning algorithm within RESAM. 
%To analyze existing schemes under a unified theoretical framework, we introduce a general class of {\em deterministic} aggregation rules, hereafter refereed to as {\em $(f, \, \lambda)$-resilient averaging}. Our analysis on the convergence of distributed SGD for this class of aggregation rules will be conducted by incorporating the \emph{distributed momentum} technique. Together, these two components constititue the main pillars to the RESAM framework.
\subsection{Distributed Momentum}\label{sub:dist_mmt}
At each step $t$ of this scheme, upon receiving the current learning parameter $\weight{t}$ from the server, each honest worker $\worker{}{i}$ returns the {\em Polyak's momentum} of its stochastic gradient~\cite{momentum}. This momentum is defined as 
\begin{align}
    \mmt{i}{t} = \beta \mmt{i}{t-1} + (1 - \beta) \gradient{i}{t}, \label{eqn:mmt_i}
\end{align}
where $\mmt{i}{0} = 0$ by convention, $\beta \in [0, \, 1)$, and $\gradient{i}{t}$ is as defined in~\eqref{eqn:grad_i}. We refer to $\beta$ as the {\em momentum coefficient}. Recall that for a Byzantine worker $\worker{}{i}$, the momentum $\mmt{i}{t}$ need not follow~\eqref{eqn:mmt_i}. Upon receiving workers' momentums, the server applies the aggregation rule $F$ to update the parameter $\weight{t}$. Specifically, the server computes
\begin{align}
    \weight{t+1} = \weight{t} - \gamma_t F\left(\mmt{1}{t}, \ldots, \, \mmt{n}{t} \right). \label{eqn:SGD}
\end{align}

\begin{remark}
Distributed momentum differs from its centralized counterpart in that the momentum operation in the former is performed by the workers, unlike in the latter where it is applied by the server after aggregating the gradients.
\end{remark}

\subsection{Resilient Averaging}\label{sub:resilient_avg}
The idea behind the notion of resilient averaging is to ensure that the distance between the result of the aggregation rule and the average of honest workers' momentums is bounded by their {\em diameter} times a factor $\lambda$. We refer to $\lambda$ as the {\em resilience coefficient}. Essentially, smaller the $\lambda$ better the resilience.
% We show that this robustness criterion, in conjunction with the use of distributed momentum, helps mitigate the detrimental impact of Byzantine workers on the learning process. 
We formally define this notion below. 

\begin{definition}[{\bf $(f, \, \lambda)$-Resilient averaging}]
\label{def:rational}
For $f < n$ and real value $\lambda \geq 0$, an aggregation rule $F$ is called {\em $(f, \, \lambda)$-resilient averaging} if for any collection of $n$ vectors $x_1, \ldots, \, x_n$, and any set $S \subseteq \{1, \ldots, \, n\}$ of size $n-f$,
\begin{align*}
    \norm{F(x_1, \ldots, \, x_n) - \overline{x}_S} \leq \lambda \max_{i, j \in S} \norm{x_i - x_j}
\end{align*}
where $\overline{x}_S \coloneqq \frac{1}{\mnorm{S}} \sum_{i \in S} x_i$, and $\mnorm{S}$ is the cardinality of $S$.
\end{definition}

\textbf{Salient features.} Resilient averaging is a simple robustness criterion that is verifiable in an off-line manner, i.e., independently of the dynamics of the learning algorithm. Moreover, this criterion is so elementary that it can be satisfied by a wide class of state-of-the-art aggregation rules under only standard assumptions. This makes it possible to study and compare their resilience properties on a common theoretical ground. Indeed, we show (in Proposition~\ref{prop:resilience_coeffs} below) that all the aggregation rules mentioned in Section~\ref{sec:ModelSettingResilience} satisfy this criterion, except CC and CGE that we discuss separately. \vspace{0.1cm}
% Specifically, we derive the following proposition.

%Specifically, we derive the following proposition.
% We present this statement formally in Proposition~\ref{prop:resilience_coeffs} for the rules . 

%\noindent \fcolorbox{black}{gainsboro!40}{
%\parbox{0.465\textwidth}{\centering
%\vspace{0.2cm}
\begin{proposition*}
\label{prop:resilience_coeffs}
Consider an aggregation rule $F \in \{\textnormal{MDA}, \textnormal{CWTM}, \textnormal{MeaMed}, \textnormal{Krum$^*$}, \textnormal{GM}, \textnormal{CWMed}\}$. For any $f < n/2$, there exists a resilience coefficient $\lambda$ for which $F$ is $(f, \, \lambda)$-resilient averaging.
\end{proposition*}
%}}

% Proposition~\ref{prop:resilience_coeffs} demonstrates the unifying property of RESAM, establishing that it can be used as a comprehensive analytical framework to study a large family of schemes. 
We list in Table~\ref{tab:resilience_coefficient} the respective values of $\lambda$ for several aggregation rules satisfying Definition~\ref{def:rational}. Formal derivations of these coefficients can be found in Appendix~\ref{app:resilience_coefficient}. It is worth noting that an $(f,\lambda)$-resilient averaging rule cannot have a resilience coefficient smaller than $\nicefrac{f}{n-f}$ (Lower bound in Table~\ref{tab:resilience_coefficient}). Accordingly, the resilience coefficient we compute for MDA is \emph{order-optimal}, i.e., it differs from the lower bound by a constant factor.

%\begin{remark}
% \textcolor{red}{@Sadegh I moved your comment to the experiments.}
%\end{remark}

%\noindent \fcolorbox{black}{gainsboro!40}{
%\parbox{0.46\textwidth}{\centering
%\begin{proposition*}
%\label{prop:lowerbound}
%For $0 \leq f < n$, there cannot exist an $(f, \lambda)$-resilient averaging rule with $\lambda < \frac{f}{n-f}$.
%\end{proposition*}
%}}

\textbf{Sanity check.}
When the inputs of the honest workers are identical, the output of an $(f, \, \lambda)$-resilient averaging rule is equal to their inputs from Definition~\ref{def:rational} (as the diameter of at least $n-f$ inputs is null). 
% \textcolor{red}{This simple yet important sanity check guarantees that when there is no uncertainty in gradient computations (i.e., $U(\theta)$ is null for all $\theta$), the aggregation rule mimics the majority voting scheme which indeed is known to be optimal when there is no uncertainty in the correct responses
% distributed gradient descent~\cite{lynch1996distributed}. Note that this sanity check is a necessary condition for being $(f, \, \lambda)$-resilient averaging. }
% This simple yet important sanity check guarantees that when the gradients for honest workers are \textcolor{red}{generated} by a deterministic oracle (e.g., $U(\theta)$ is null for all $\theta$), the aggregation rule mimics the majority voting scheme known to be optimal when there is no uncertainty in the correct responses~\cite{lynch1996distributed}. Note that this sanity check is a necessary condition for being $(f, \, \lambda)$-resilient averaging. 
This simple yet important sanity check guarantees that when the gradients of honest workers are computed without uncertainty (i.e., $U(\theta)$ is null for all $\theta \in \mathbb{R}^d$) the aggregation rule mimics the majority voting scheme, which is known to be optimal when there is no uncertainty in the correct responses~\cite{lynch1996distributed}. Note that satisfying this sanity check is a necessary condition for being $(f, \, \lambda)$-resilient averaging. 

\textbf{The cases of CGE and CC.} When studying existing rules we encountered two special cases, namely CGE and CC. While CGE clearly does not satisfy the condition of resilient averaging, CC may only satisfy it approximately. 
% As the sanity check is a necessary condition for the rule to be resilient averaging, these aggregation rules cannot satisfy our criterion for any value of $\lambda$. 
Besides, CC uses a {\em clipping parameter} that requires a priori knowledge on $\sigma$, and an initial guess on the average of the honest vectors with {\em known} bounded error. These are impractical requirements that are not needed by other rules we consider. As it is unclear whether CC can satisfy our definition under the classical assumptions, in the remaining we adopt an agnostic point of view assuming that it does not.

\subsection{Skeleton of an Algorithm within RESAM} The overall learning procedure combining distributed momentum and a resilient averaging rule is captured in Algorithm~\ref{algo}, presented below.

\begin{algorithm}[ht!]
%\small
\SetAlgoLined
{\bf Initialization:} \textcolor{blue}{\bf Server} chooses an arbitrary initial parameter vector $\weight{1} \in \R^d$, a set of $T$ learning rates $\{\gamma_1,\dots,\gamma_T\}$, a deterministic aggregation rule $F: \R^{d \times n} \to \R^d$, and sends the momentum coefficient $\beta \in [0, \, 1)$ to all the workers. Each \textcolor{blue}{\bf honest worker $\worker{}{i}$} sets its initial momentum $\mmt{i}{0} = 0$.\\ \vspace{0.3cm}

{\bf Algorithm's body:}
In each {\bf step $t=1,\dots, \, T$}. \\
\begin{enumerate}[]
    \item \textcolor{blue}{\bf Server} broadcasts $\weight{t}$ to all workers.
    \item Each \textcolor{blue}{\bf honest worker $\worker{}{i}$} sends to the server the momentum $\mmt{i}{t}$ defined by~\eqref{eqn:mmt_i}, i.e., $\mmt{i}{t} = \beta \mmt{i}{t-1} + (1 - \beta) \gradient{i}{t}$ where $\gradient{i}{t}$ is a stochastic gradient as defined in~\eqref{eqn:grad_i}. \\
        \textcolor{teal}{\it (A Byzantine worker $\worker{}{i}$ may send an arbitrary value for its "momentum" $\mmt{i}{t}$.)}
    \item \textcolor{blue}{\bf Server} updates the parameter vector as per~\eqref{eqn:SGD}, i.e., $\weight{t+1} = \weight{t} - \gamma_t \, F\left(\mmt{1}{t}, \ldots, \, \mmt{n}{t} \right)$.
\end{enumerate}
{\bf Output:} \textcolor{blue}{\bf Server} outputs a learning parameter $\widehat{\weight{}}$ chosen randomly from the set $\{\weight{1}, \ldots, \, \weight{T}\}$.
\caption{Distributed SGD using distributed momentum and an $(f,\lambda)$-resilient averaging rule $F$}
\label{algo}
\end{algorithm}

\begin{table*}[htb!]
\centering
\bgroup
\def\arraystretch{2}
\begin{tabular}{c||c|c|c|c|c|c} 
%\hline
\textbf{Aggregation rule}  & MDA  & CWTM  & MeaMed & Krum${^*}$ & GM & CWMed  \\ 
\specialrule{.1em}{.05em}{.05em} 
$ \epsilon \in  \mathcal{O}\left( \sqrt{ \frac{\sigma^2}{T} \left( \frac{1}{n-f} + \textcolor{blue}{\boldsymbol{\kappa}} \right)} \right) $& $\frac{f^2}{(n-f)}$ & $\frac{f^2}{(n-f)} \, \Delta^2$ & $\frac{f^2}{(n-f)} \, \Delta^2$ & $ \frac{(n-f)^2}{n-2f}$ & $ \frac{(n-f)^3}{(n-2f)n}$  & $ \frac{n^2}{(n-f)} \, \Delta^2 $ %\\  \hline 
\vspace{0.3cm}
 \end{tabular}
 \egroup
 $\vartheta \in (0, \, 1) ~  \text{ and } ~ \Delta \coloneqq \min \{2\sqrt{n-f}, \sqrt{d}\}$
 \caption{Rates of convergence, as defined in Corollary~\ref{cor:resilience}, for several $(f, \lambda)$-resilient averaging rules when $f < n/2$. Note that the rates only differ in the $\textcolor{blue}{\boldsymbol{\kappa}}$ term. For simplicity, we only present the values of $\textcolor{blue}{\boldsymbol{\kappa}}$ for the different rules.}
 \label{tab:epsilon}
\end{table*}

\vspace{-0.2cm}
\section{General Convergence Theorem}\label{sec:ConvergenceAnalysis}
We present below our main technical result demonstrating the convergence of Algorithm~\ref{algo} when up to $f$ workers may be Byzantine. Then, as an immediate corollary, we derive the $(f,\epsilon)$-resilience property of the algorithm. Formal proofs of the results are deferred to appendices~\ref{sec:resultsskeleton} and~\ref{app:proofs}.

% In~\eqref{eqn:step-size}, $\gamma_o$ is a positive real value subject to 
% \begin{align}
%     2 \gamma^{\vartheta}_o \left(1 + 2 L  \left( \gamma^{1 - \vartheta}_o + 2 L  \left( 1 + \gamma_o \right) \right) \right) \leq \varrho - 1, \label{eqn:gamma_o}
% \end{align}
% where $\varrho$ and $\vartheta$ are respectively in $(1, \, 2)$ and $(0, \, 1)$. Note that for a given $\vartheta \in (0, \, 1)$, the left-hand-side of~\eqref{eqn:gamma_o} is monotonically increasing in $\gamma_o$, and equals zero when $\gamma_o = 0$. Therefore, we have that for all $\varrho \in (1, \, 2)$ there exists $\gamma_o$ small enough that satisfies~\eqref{eqn:gamma_o}.

%Note that for any $(\varrho, \, \vartheta)$, there always exists a $\gamma_o$ satisfying~\eqref{eqn:gamma_o}. 
\subsection{Formal Statements}
\label{sub:main_conv}
We first present our main result in Theorem~\ref{thm:main_conv} below. Essentially, we analyze Algorithm~\ref{algo} upon assuming a sufficient small constant learning rate $\gamma_t$ for all steps $t$, provided that assumptions~\ref{asp:lip} and~\ref{asp:bnd_var} hold true. For simplified presentation of our formal results, we introduce the following notation.
 \begin{align*}
     Q^* & \coloneqq \min_{\weight{} \in \R^d} Q(\weight{}), \\
     a_o & \coloneqq 4 \left(2 \left(Q(\weight{1}) - Q^* \right) + \frac{1}{8L} \norm{\nabla Q \left( \weight{1} \right)}^2 \right), \\
     a_1 & \coloneqq 6912L, \quad \text{ and } \quad a_2 \coloneqq 288L.
 \end{align*}
% $Q^* = \min_{\weight{} \in \R^d} Q(\weight{})$, and define the following constants:
% \begin{itemize}
%     \item $Q^* \coloneqq \min_{\weight{} \in \R^d} Q(\weight{})$,
%     \item $a_o \coloneqq 4 \left(2 \left(Q(\weight{1}) - Q^* \right) + \frac{1}{8L} \norm{\nabla Q \left( \weight{1} \right)}^2 \right)$,
%     \item $a_1 \coloneqq 6912L$, and
%     \item $a_2 \coloneqq 288L$.
% \end{itemize}
% \begin{align*}
%     a_o & = 4 \left(2 \left(Q(\weight{1}) - Q^* \right) + \frac{1}{8L} \norm{\nabla Q \left( \weight{1} \right)}^2 \right), \\
%     a_1 & = 6912L, ~ \text{ and } a_2 =288L.
% \end{align*}
% We let,
% \begin{align*}
%     a_o & = 4 \left(2 \left(Q(\weight{1}) - Q^* \right) + \frac{1}{8L^2} \norm{\nabla Q \left( \weight{1} \right)}^2 \right), ~ a_1 = 256 \left( 2 + (L + 1) \gamma_o \right) \left(20L^2 + 1\right), ~ \text{ and } \nonumber \\
%     a_2 & = \frac{2}{L^2} \left(20L^2 + 1\right)^2. 
% \end{align*}
% Recall that $Q$ admits a minimum, and we denote $Q^* = \min_{\weight{} \in \R^d} Q(\weight{})$.\\

\noindent \noindent \fcolorbox{black}{gainsboro!10}{
\parbox{0.47\textwidth}{\centering
\begin{theorem}
\label{thm:main_conv}
%Suppose that assumptions~\ref{asp:lip} and~\ref{asp:bnd_var} hold true. Let us denote 
%\begin{align*}
%    Q^* & \coloneqq \min_{\weight{} \in \R^d} Q(\weight{}), \\
%    a_o & \coloneqq 4 \left(2 \left(Q(\weight{1}) - Q^* \right) + \frac{1}{8L} \norm{\nabla Q \left( \weight{1} \right)}^2 \right), \\
%    a_1 & \coloneqq 6912L, \quad \text{ and } \quad a_2 \coloneqq 288L.
%\end{align*}
% We denote
% \begin{align*}
%     a_o & = 4 \left(2 \left(Q(\weight{1}) - Q^* \right) + \frac{1}{8L} \norm{\nabla Q \left( \weight{1} \right)}^2 \right), \\
%     a_1 & = 6912L, ~ \text{ and } a_2 =288L.
% \end{align*}
Consider Algorithm~\ref{algo} with an $(f,\lambda)$-resilient averaging rule and a constant learning rate $\gamma$, i.e., $\gamma_t = \gamma, \, \forall t$ where
\begin{align*}
    \gamma =  \left( \sqrt{\frac{a_o (n-f)}{a_1 \lambda^2 (n-f)^2 + a_2} } \right) \frac{1}{\sigma \sqrt{T}}. 
\end{align*}
If $T \geq \frac{a_o L }{12 \sigma^2 \lambda^2 (n - f)}$ and $\beta = \sqrt{1 - 24 \gamma L} $, then
% for all $\vartheta \in (0, \, 1)$ and $\varrho \in (1, \, 2)$ the following holds true:
% \footnotesize
\begin{align*}
   & \expect{\norm{\nabla Q\left(\widehat{\weight{}} \right)}^2} \leq 2 \sqrt{ \left( a_1 \lambda^2 (n - f) + \frac{ a_2}{n-f}\right)  \frac{a_o \sigma^2}{T} } \\
   & + \left( \frac{a_2 \sigma}{n-f}\right) \left( \sqrt{\frac{a_o (n-f)}{a_1 \lambda^2 (n-f) + a_2} } \right) \frac{1}{T^{\nicefrac{3}{2}}}.
\end{align*}
\end{theorem}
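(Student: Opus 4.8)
\textbf{Proof plan for Theorem~\ref{thm:main_conv}.}

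The plan is to run a descent-lemma argument on a carefully chosen Lyapunov function, since the usual one, $Q(\weight{t})$, will not decrease on its own: the server steps in the direction $F(\mmt{1}{t},\dots,\mmt{n}{t})$, which deviates both from the average honest momentum and from $\nabla Q(\weight{t})$. First I would set $\AvgMmt{t} \coloneqq \frac{1}{|\H|}\sum_{i\in\H}\mmt{i}{t}$, the average of honest workers' momentums over the honest set $\H$ (with $|\H| = n-f$), and decompose the drift of $Q$ along the iterates as $Q(\weight{t+1}) - Q(\weight{t}) \le -\gamma \iprod{\nabla Q(\weight{t})}{F(\cdot)} + \tfrac{L\gamma^2}{2}\norm{F(\cdot)}^2$ via Assumption~\ref{asp:lip}. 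I would then insert $\AvgMmt{t}$ by writing $F(\cdot) = \AvgMmt{t} + (F(\cdot) - \AvgMmt{t})$; the resilient-averaging property (Definition~\ref{def:rational}) bounds $\norm{F(\cdot) - \AvgMmt{t}} \le \lambda \max_{i,j\in\H}\norm{\mmt{i}{t}-\mmt{j}{t}}$, so everything reduces to controlling (i) the deviation $\dev{t} \coloneqq \norm{\AvgMmt{t} - \nabla Q(\weight{t})}$ of the average honest momentum from the true gradient, and (ii) the pairwise dispersion of honest momentums, $\max_{i,j\in\H}\norm{\mmt{i}{t}-\mmt{j}{t}}$.

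Next I would establish the two recursive estimates that these quantities satisfy. Writing $\diffmmt{i}{t} \coloneqq \mmt{i}{t} - \mmt{j}{t}$ for honest $i,j$ (or, more cleanly, $\mmt{i}{t} - \AvgMmt{t}$), the momentum recursion~\eqref{eqn:mmt_i} gives $\mmt{i}{t} - \AvgMmt{t} = \beta(\mmt{i}{t-1}-\AvgMmt{t-1}) + (1-\beta)(\gradient{i}{t} - \bar g_t)$ where $\bar g_t$ is the honest gradient average; since the noises $\{u^{(i)}_t\}_{i\in\H}$ are independent, zero-mean, and bounded in second moment by $\sigma^2$ (Assumption~\ref{asp:bnd_var}), unrolling the recursion and using $\sum_k \beta^{2k} \le 1/(1-\beta^2)$ bounds $\expect{\max_{i,j}\norm{\mmt{i}{t}-\mmt{j}{t}}^2}$ by roughly $\frac{(1-\beta)^2}{1-\beta^2}\sigma^2 \cdot (\text{const}) \asymp (1-\beta)\sigma^2$ — this is the crucial "momentums converge toward one another" phenomenon, quantified. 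For the deviation $\dev{t}$, I would again unroll~\eqref{eqn:mmt_i}: $\AvgMmt{t} - \nabla Q(\weight{t})$ splits into an averaged-noise term of size $\mathcal{O}\big(\sqrt{(1-\beta)}\,\sigma/\sqrt{n-f}\big)$ in $L_2$ and a bias term $\sum_k \beta^k(1-\beta)(\nabla Q(\weight{t-k}) - \nabla Q(\weight{t}))$ which, by $L$-smoothness and $\norm{\weight{t-k+1}-\weight{t-k}} = \gamma\norm{F(\cdot)}$, is controlled by $\gamma L$ times a weighted sum of past step-norms. This yields a recursive inequality for $\expect{\dev{t}^2}$ of the form $\expect{\dev{t}^2} \le \beta\,\expect{\dev{t-1}^2} + \mathcal{O}((1-\beta)^2\sigma^2/(n-f)) + \mathcal{O}(\gamma^2 L^2 \beta (\text{past terms}))$.

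The decisive step is to combine these pieces into a single potential, $V_t \coloneqq Q(\weight{t}) - Q^* + c_1 \dev{t}^2 + c_2 \max_{i,j\in\H}\norm{\mmt{i}{t}-\mmt{j}{t}}^2$ (or an expectation-smoothed variant), with constants $c_1,c_2$ and the relation $\beta = \sqrt{1-24\gamma L}$ — hence $1-\beta \asymp \gamma L$ — tuned precisely so the cross terms telescope: the $-\gamma\iprod{\nabla Q}{\nabla Q}$ term produces the $-\frac{\gamma}{2}\norm{\nabla Q(\weight{t})}^2$ needed after absorbing $\pm\gamma\iprod{\nabla Q}{\AvgMmt{t}-\nabla Q}$ via Young's inequality, while the increase in the $\dev{t}^2$ and dispersion slots is paid for by the descent slack and by the $(1-\beta)$-smallness. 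Summing $V_{t+1}-V_t$ over $t=1,\dots,T$, dividing by $T$, and noting $\widehat\weight{}$ is uniform over $\{\weight{1},\dots,\weight{T}\}$ gives $\expect{\norm{\nabla Q(\widehat\weight{})}^2} \le \frac{2(V_1 - \mathbb{E} V_{T+1})}{\gamma T} + \mathcal{O}\big(\gamma L(\lambda^2(n-f)+\tfrac{1}{n-f})\sigma^2\big)$; substituting the stated $\gamma \asymp \sqrt{a_o(n-f)}\,/\,(\sigma\sqrt{T}\sqrt{a_1\lambda^2(n-f)^2+a_2})$ balances the two terms and, after tracking constants ($a_o$ from $V_1 \le \tfrac14 a_o$, and $a_1 = 6912L$, $a_2 = 288L$ from the accumulated factors), yields exactly the claimed bound, with the $T^{-3/2}$ residual coming from the lower-order $\gamma^2$ contribution in the descent. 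The condition $T \ge \frac{a_o L}{12\sigma^2\lambda^2(n-f)}$ is what guarantees $\gamma$ is small enough for $\beta=\sqrt{1-24\gamma L}$ to be well-defined and for the potential argument to close. I expect the main obstacle to be the bookkeeping in the potential-function step: choosing $c_1, c_2$ and verifying that with $1-\beta \asymp \gamma L$ every error term generated at step $t$ is genuinely dominated — in particular handling the $\gamma^2 L^2$ "past step-norm" terms in the $\dev{t}$ recursion, which couple back to $\norm{F(\cdot)}^2 \le 2\norm{\AvgMmt{t}}^2 + 2\lambda^2(\text{dispersion})$ and hence must be reabsorbed without circularity; a clean way is to first sum the dispersion and deviation recursions separately over all $t$ to get $\sum_t \expect{\dev{t}^2}$ and $\sum_t \expect{(\text{dispersion})_t}$ bounds in terms of $\sum_t \gamma^2\expect{\norm{F(\cdot)}^2}$, then feed those into the summed descent inequality.
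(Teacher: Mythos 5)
Your plan is essentially the paper's own argument: the same drift/deviation decomposition with resilient averaging bounding $\norm{F(\cdot)-\AvgMmt{t}}$ by $\lambda$ times the honest diameter, the same noise-driven bound showing the honest momentums' dispersion is of order $(1-\beta)\sigma^2$, and the same Lyapunov function built from $Q(\weight{t})$ plus a weighted $\norm{\dev{t}}^2$ term with $\beta=\sqrt{1-24\gamma L}$ tuned so the deviation coefficient cancels, followed by summing, bounding $V_1$, and substituting $\gamma$. The only cosmetic difference is that you place the dispersion term inside the potential (or alternatively sum its recursion separately), whereas the paper keeps it out of the Lyapunov function and bounds $\sum_t\expect{\norm{\drift{t}}^2}$ directly by unrolling, which is harmless since that recursion is purely noise-driven and does not couple back to the iterates.
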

}}
% \paragraph{Idea of the proof.} We first note that the update rule~\eqref{eqn:SGD} in Algorithm~\ref{algo} can be decomposed as follows:
% \begin{align*}
%     \weight{t+1} = \weight{t} - \gamma_t   \, \AvgMmt{t} - \gamma_t \left(F\left(\mmt{1}{t}, \ldots, \, \mmt{n}{t} \right) -   \, \AvgMmt{t} \right)
% \end{align*}
% where $\AvgMmt{t}$ denotes the average of the momentums of $n-f$ arbitrary honest workers. Hence, Algorithm~\ref{algo} can be treated 

% \begin{remark}
% For a given $\vartheta \in (0, \, 1)$, the left hand side of~\eqref{eqn:gamma_o} is monotonically increasing in $\gamma_o$ and equals zero when $\gamma_o = 0$. Hence for all $\varrho \in (1, \, 2)$ there exists $\gamma_o$ small enough that satisfies~\eqref{eqn:gamma_o}. 
% \end{remark}

\begin{proof}[Idea of the Proof] Recall that $\mmt{i}{t}$ denotes the momentum of worker $\worker{}{i}$ at step $t$. Below, we denote by $\AvgMmt{t}$ the average momentum of all the honest workers at step $t$. Our proof of Theorem~\ref{thm:main_conv} rests on two key observations, detailed below.

\begin{enumerate}[label=(\alph*),leftmargin=*]
    \item At every step $t$ of Algorithm~\ref{algo}, the growth of the loss function (i.e., $Q(\weight{t+1}) - Q(\weight{t})$) depends positively on both the {\em drift} of each honest worker $\worker{}{i}$ (i.e., $\mmt{i}{t} - \AvgMmt{t}$) and the {\em deviation} of the honest workers from the true gradient (i.e., $\AvgMmt{t} - \nabla Q(\weight{t})$). Essentially, to prove convergence, we need the accumulation of both the drift and the deviation to be inversely proportional to $T$, when scaled by the learning rate $\gamma$.
    \item Upon analyzing these two quantities separately, we observe that whilst increasing the momentum coefficient $\beta$ decreases the accumulation of drift, it increases the accumulation of deviation. Hence, we need to carefully determine an appropriate value for $\beta$ to establish the convergence of Algorithm~\ref{algo}. However, the traditional Lyapunov function of $\expect{Q(\weight{t})}$ turns out to be inadequate for solving this problem. 
\end{enumerate}

To address this issue, we devise a novel Lyapunov function
\[V_t \coloneqq \expect{2 Q(\weight{t}) + \frac{1}{8L^2} \norm{\AvgMmt{t} - \nabla Q(\weight{t})}^2}.\]
By analyzing the growth of $V_t$ along the steps of Algorithm~\ref{algo}, we show that setting the momentum coefficient $\beta = \sqrt{1 - 24 \gamma L} $ yields the stated finite-time convergence. Note that this momentum coefficient is well defined (i.e., it belongs to $[0, \, 1)$) as soon as $T \geq \frac{a_o L }{12 \sigma^2 \lambda^2 (n - f)}$, which explains the condition on $T$ in Theorem~\ref{thm:main_conv}.
\end{proof}

Using Theorem~\ref{thm:main_conv}, we can show that Algorithm~\ref{algo} is $(f, \, \epsilon)$-resilient.
% under the stated assumptions, when 
% $F$ is $(f, \, \lambda)$-resilient averaging and 
% the $\beta$ is chosen appropriately. 
Specifically, by ignoring the higher-order term in $T$, and the constants, we obtain the following corollary.
% Corollary~\ref{thm:main_conv} below. 

%\hspace{-0.1cm} \noindent \fcolorbox{black}{gainsboro!40}{
%\parbox{0.47\textwidth}{\centering
\begin{corollary}\label{cor:resilience}
%\label{cor:resilience}
Suppose that assumptions~\ref{asp:lip} and~\ref{asp:bnd_var} hold true. Then, Algorithm~\ref{algo} with an $(f,\lambda)$-resilient averaging rule, and parameters $\gamma_t$, $T$ and $\beta$ as defined in Theorem~\ref{thm:main_conv}, is
% and learning rate of $\gamma$ as defined in Theorem~\ref{thm:main_conv}. If $T > \frac{a_o \max\left\{ \left( 20L^2 + 1 \right)^2, \, (1/\gamma_o)^2  \right\} }{128 \sigma^2 \lambda^2}$ and $\beta = \sqrt{1 - 2 \left(20L^2 + 1\right) \gamma} $, 
% Then, Algorithm~\ref{algo} 
is $(f,\epsilon)$-resilient with
\[  \epsilon \in  \mathcal{O}\left( \sqrt{ \frac{\sigma^2}{T} \left( \frac{1}{n-f} + \lambda^2 (n-f) \right)} \right).\]
\end{corollary}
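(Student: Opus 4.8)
The plan is to derive Corollary~\ref{cor:resilience} directly from Theorem~\ref{thm:main_conv} by reading off the dominant term in the stated bound on $\expect{\norm{\nabla Q(\widehat{\weight{}})}^2}$ and absorbing all problem-independent constants (namely $a_o$, $a_1$, $a_2$, which are fixed once $L$ and the initialization $\weight{1}$ are) into the $\mathcal{O}(\cdot)$ notation. First I would observe that the right-hand side in Theorem~\ref{thm:main_conv} is a sum of two terms: the leading term
\[
2 \sqrt{\left( a_1 \lambda^2 (n-f) + \frac{a_2}{n-f} \right) \frac{a_o \sigma^2}{T}},
\]
which scales as $T^{-1/2}$, and a second term scaling as $T^{-3/2}$. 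Since $T^{-3/2}$ is asymptotically negligible compared to $T^{-1/2}$, the second term is a higher-order term that may be dropped inside $\mathcal{O}(\cdot)$; this is exactly the "ignoring the higher-order term in $T$" step flagged in the text preceding the corollary. I would make this precise by noting that for $T \geq 1$ the second term is bounded by a constant multiple of the first (or simply that its contribution is $o(T^{-1/2})$), so the whole sum lies in $\mathcal{O}\!\left(\sqrt{(a_1 \lambda^2 (n-f) + a_2/(n-f))\, a_o \sigma^2 / T}\right)$.

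Next I would simplify the leading term by pulling the constants $a_o$, $a_1$, $a_2$ out: since these depend only on $L$ and $Q(\weight{1}) - Q^*$ and $\norm{\nabla Q(\weight{1})}$ — all treated as fixed — we have
\[
\sqrt{\left( a_1 \lambda^2 (n-f) + \frac{a_2}{n-f} \right) \frac{a_o \sigma^2}{T}} \in \mathcal{O}\!\left( \sqrt{ \frac{\sigma^2}{T} \left( \lambda^2 (n-f) + \frac{1}{n-f} \right) } \right),
\]
which is exactly the claimed rate. Finally, to conclude $(f,\epsilon)$-resilience in the sense of Definition~\ref{def:resilience}, I would invoke the definition of $\widehat{\weight{}}$ in Algorithm~\ref{algo} (chosen uniformly at random from $\{\weight{1},\dots,\weight{T}\}$) together with the fact that the expectation in Theorem~\ref{thm:main_conv} is over all randomness of the algorithm, so setting $\epsilon$ equal to (a constant times) the bound above immediately yields $\expect{\norm{\nabla Q(\widehat{\weight{}})}^2} \leq \epsilon$, which is the definition of $(f,\epsilon)$-resilience. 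The hypotheses on $\gamma_t$, $T$, and $\beta$ carry over verbatim from Theorem~\ref{thm:main_conv}.

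Honestly, there is no real obstacle here: the corollary is a cosmetic restatement of Theorem~\ref{thm:main_conv} in asymptotic form, and the only thing requiring a word of justification is why the $T^{-3/2}$ term is subsumed — which follows since, under the standing condition $T \geq \frac{a_o L}{12\sigma^2 \lambda^2 (n-f)}$, one can bound the ratio of the second term to the first by an absolute constant, so both terms together are $\Theta$ of the first. If I wanted to be fully careful I would also note that the $\mathcal{O}(\cdot)$ hides dependence on $L$ and the initialization gap but not on $n$, $f$, $\lambda$, $\sigma$, or $T$, so that the comparison of aggregation rules via their resilience coefficients $\lambda$ (as summarized in Table~\ref{tab:epsilon}) is meaningful.
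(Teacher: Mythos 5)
Your proposal is correct and follows essentially the same route as the paper's own proof: invoke the bound of Theorem~\ref{thm:main_conv}, identify $\epsilon = \expect{\norm{\nabla Q(\widehat{\weight{}})}^2}$ via Definition~\ref{def:resilience}, and then absorb the constants $a_o$, $a_1$, $a_2$ and the higher-order $T^{-\nicefrac{3}{2}}$ term into the $\mathcal{O}(\cdot)$ to obtain the stated rate. The extra remark justifying why the $T^{-\nicefrac{3}{2}}$ term is dominated is a harmless refinement of what the paper simply states as ``ignoring the higher-order term.''
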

%}}

Basically, we can obtain an arbitrarily small $\epsilon$ if the algorithm is run for a sufficiently large number of steps. In particular, we can use Corollary~\ref{cor:resilience} to determine, for any $f < n/2$ and $\epsilon > 0$, the number of steps $T$ and the momentum coefficient $\beta$ for which Algorithm~\ref{algo} is $(f, \epsilon)$-resilient, for any of the six aggregation rules listed in Table~\ref{tab:resilience_coefficient}. This shows that, by Definition~\ref{def:resilience}, Algorithm~\ref{algo} is \emph{optimally resilient} for any of these rules. In Table~\ref{tab:epsilon}, we summarize the rates of convergence (i.e., order of $\epsilon$) for the aggregation rules we consider. These rates are simply computed by substituting in Corollary~\ref{cor:resilience} the values of $\lambda$ from Table~\ref{tab:resilience_coefficient}.

\subsection{Analysis \& Discussion}\label{sec:Overview}

\textbf{Impact of the fraction of Byzantine workers.} From Table~\ref{tab:epsilon} we note that the order of $\epsilon$ grows proportionally to $f/n$ for all the aggregation rules listed, except for CWMed. Basically, a smaller fraction of Byzantine workers enables faster convergence to Algorithm~\ref{algo} when using an appropriate resilience averaging rule. 
% However, this need not be true for CWMed when $d > 4 (n-f)$ (in which case we also have $\Delta = 2\sqrt{n-f}$). 

% when $d \leq 4 (n-f)$ in which case $\Delta = 2\sqrt{n-f}$, $\epsilon$ grows proportionally to $f/n$ for all the aggregation rules listed in Table~\ref{tab:epsilon}, i.e., a smaller fraction of Byzantine workers implies faster convergence. This trend holds true for all these rules, expect CWMed, regardless of the dimension. However, we observe that this trend need not be true for CWMed when the dimension of the parameter space is large, specifically when $d > 4 (n-f)$ (in which case $\Delta = 2\sqrt{n-f}$).

\textbf{Comparison of convergence rates.}
The rate of convergence of Algorithm~\ref{algo}, shown in Corollary~\ref{cor:resilience}, matches that of vanilla distributed SGD~\cite{lei2019stochastic} in terms of the total number of steps $T$.\footnote{Vanilla distributed SGD refers to the case when the server uses the simple averaging rule and there are no Byzantine workers.} 
Moreover, when the Byzantine workers are very few, i.e., $f \ll n$, the rate for MDA, CWTM, and MeaMed is $\mathcal{O}\left( \nicefrac{\sigma}{\sqrt{n T}} \right)$. Thus, their rate improves with larger $n$ in a similar manner as
% these schemes indeed \textcolor{red}{match the case with vanilla distributed SGD}~\cite{lian2015asynchronous}.
% their rate becomes closer to that of 
vanilla distributed SGD~\cite{lian2015asynchronous}. However, in the same scenario, the rate for Krum$^*$, GM and CWMed is $\mathcal{O}\left( \nicefrac{\sigma \sqrt{n}}{\sqrt{T}} \right)$, i.e., it is directly proportional to $n$. 
% This analysis indicates better performances for the former aggregation rules compared to the latter. 

This phenomenon could be explained by the fact that Krum$^*$, CWMed, and GM are simply {\em median-based} aggregation rules, without any averaging operation. Thus, the variance of their outputs grows with $n$, as suggested by the standard bounds from {\em order statistics}~\cite{arnold1979bounds, bertsimas2006tight}. On the contrary, MDA, CWTM, and MeaMed perform an averaging operation after filtering out dubious vectors, thus mimicking the variance reduction property of the averaging scheme traditionally used in the vanilla distributed SGD.

\begin{figure*}[!ht]
    \centering
    \includegraphics[width=\textwidth]{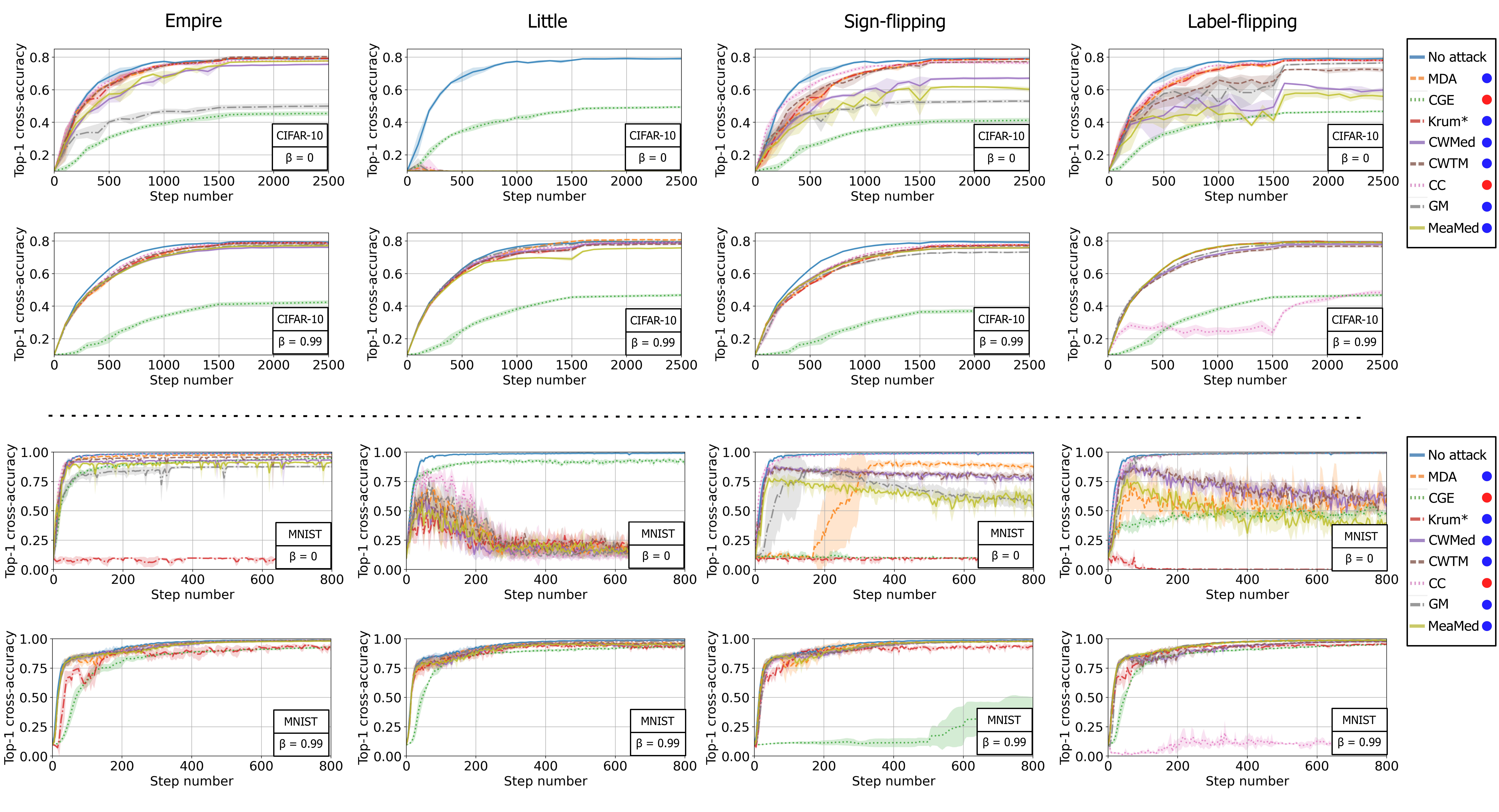}
    \caption{The $1$st and $2$nd rows present experiments performed on CIFAR-10 with $\beta = 0$ and $\beta = 0.99$, respectively. The $3$rd and $4$th rows depict results on MNIST for $\beta = 0$ and $\beta = 0.99$, respectively. The columns convey the performance of the learning under the \textit{empire}, \textit{little}, \textit{sign-flipping}, and \textit{label-flipping} attacks with $f = 5$ Byzantine workers. The resilient averaging rules are marked with a blue circle, while the other {\em non} resilient averaging rules are marked with a red one.}
    \label{fig:plots}
\end{figure*}

% \begin{figure*}[!ht]
%     \centering
%     %\includegraphics[width=\textwidth]{plots/cifar-}
%     \includegraphics[width=0.25\textwidth]{plots/cifar10-empire-f_5-m_0-lr_0.25.pdf}%
%     \includegraphics[width=0.25\textwidth]{plots/cifar10-empire-f_5-m_0.99-lr_0.25.pdf}%
%     \includegraphics[width=0.25\textwidth]{plots/cifar10-little-f_5-m_0-lr_0.25.pdf}%
%     \includegraphics[width=0.25\textwidth]{plots/cifar10-little-f_5-m_0.99-lr_0.25.pdf}\\%
%     \includegraphics[width=0.25\textwidth]{plots/cifar10-signflipping-f_5-m_0-lr_0.25.pdf}%
%     \includegraphics[width=0.25\textwidth]{plots/cifar10-signflipping-f_5-m_0.99-lr_0.25.pdf}%
%     \includegraphics[width=0.25\textwidth]{plots/cifar10-labelflipping-f_5-m_0-lr_0.25.pdf}%
%     \includegraphics[width=0.25\textwidth]{plots/cifar10-labelflipping-f_5-m_0.99-lr_0.25.pdf}%
%     \caption{The $1$st and $2$nd rows present experiments performed on CIFAR-10 with $\beta = 0$ and $\beta = 0.99$, respectively. The $3$rd and $4$th rows depict results on MNIST for $\beta = 0$ and $\beta = 0.99$, respectively. The columns convey the performance of the learning under the \textit{empire}, \textit{little}, \textit{sign-flipping}, and \textit{label-flipping} attacks with $f = 5$ Byzantine workers. The resilient averaging rules are marked with a blue circle, while the other {\em non} resilient averaging rules are marked with a red one.}
%     \label{fig:plots}
% \end{figure*}

\section{Empirical Evaluation}\label{sec:experiments}
To investigate the practical relevance of RESAM, we report on a comprehensive set of experiments evaluating it on benchmark image classification tasks under four different Byzantine threats. We implement Algorithm~\ref{algo} with six different resilient averaging rules and six momentum coefficients. To verify the benefits of our framework, we also run the same set of experiments using two \emph{non} resilient averaging rules. Essentially, our experiments suggest that combining resilient averaging and distributed momentum is critical to Byzantine resilience even in practice.

%We simulate Byzantine behavior using four state-of-the-art attacks. We observe that the algorithm works best when combining resilient averaging and distributed momentum, but performs poorly against some attacks when 
%using only one of these notions. This advocates that the combination proposed by RESAM is critical to Byzantine resilience. 
% run a set of experiments  for solving three benchmark image classification tasks under four different Byzantine threats.
% We vary the Byzantine threat as well as the level of momentum used. 
%Additionally, to verify the benefits of RESAM, we also run the same set of experiments with two \emph{non} resilient averaging rules. %Our experimental setup is described below.

\subsection{Experimental Setup}\label{expSetup}
\textbf{Datasets.} We use MNIST~\cite{mnist}, Fashion-MNIST~\cite{fashion-mnist}, and CIFAR-10~\cite{cifar}. The datasets are pre-processed as in~\cite{little} and \cite{distributed-momentum}.

\textbf{Architectures and fixed hyperparameters.} For MNIST and Fashion-MNIST, we consider a convolutional neural network (CNN) with two convolutional layers followed by two fully-connected layers. To train the model, we use a Cross Entropy loss, a total number of workers $n = 15$, a constant learning rate $\gamma = 0.75$, and a clipping parameter $C = 2$. We also add an $\ell_2$-regularization factor of $10^{-4}$. Finally, we use a mini-batch size of $b = 25$. For CIFAR-10, we use a CNN with 4 convolutional layers and 2 fully-connected layers, a Cross Entropy loss, and an $\ell_2$-regularization factor of $10^{-2}$. We set $n = 25$, $\gamma = 0.25$, $C = 5$, and $b = 50$. Refer to Appendix~\ref{app:model_arch} for more details on our models.

\textbf{Varying hyperparameters.}
We vary the number of Byzantine workers $f$ in $\left\lbrace 1,3,5,6,7 \right\rbrace$ for MNIST and Fashion-MNIST, and $\left\lbrace 5,11 \right\rbrace$ for CIFAR-10. We also vary the attack implemented by the Byzantine workers. Specifically, we consider \textit{little}~\cite{little}, \textit{empire}~\cite{empire}, \textit{sign-flipping}~\cite{allen2020byzantine}, and \textit{label-flipping}~\cite{allen2020byzantine}. We consider six resilient aggregation rules (MDA, CWTM, CWMed, Krum$^*$, MeaMed, and GM), and two that are not resilient averaging (CGE and CC). As benchmark, we also use the \emph{averaging} aggregation rule without Byzantine workers (denoted by ``No attack''). Finally, we vary the momentum coefficient $\beta$ in $\{0, 0.6, 0.8, 0.9, 0.99, 0.999 \}$.

\textbf{Intractability of MDA and GM.}
Although MDA presents an \textit{order-optimal} resilience coefficient, it is computationally demanding. As pointed out in~\cite{el2020genuinely}, its time complexity is in $\mathcal{O}\left(\binom{n}{f} + d n^2 \right)$. Additionally, GM does not have a closed-form solution. Existing methods implementing GM, such as~\cite{cohen2016geometric,pillutla2019robust} and references therein, are iterative and only approximate GM. Moreover, these methods require expensive computations, e.g., determining eigenvalues and eigenvectors of $d \times d$ matrices~\cite{cohen2016geometric} in each iteration. Here, we use the approximation algorithm from~\cite{pillutla2019robust} to compute GM and only implement MDA whenever its computational complexity is not prohibitive, i.e., when neither $\binom{n}{f}$ nor $d n^2$ are too large. 

\noindent \noindent \fcolorbox{black}{gainsboro!10}{
\parbox{0.45\textwidth}{
\textbf{Reproducibility and reusability.} 
Each experiment is repeated 5 times using seeds from 1 to 5 for reproducibility purposes. Overall, we performed over $1,512$ experiments ($7,560$ runs), of which we provide a brief overview below. Additional plots and code base to reproduce our experiments are available in the supplementary material. Our implementation will also be made accessible online. 
}}

\subsection{Experimental Results}\label{sec:experiment-results}
We present in Figure~\ref{fig:plots} the top-1 cross-accuracy achieved on MNIST and CIFAR-10 when running distributed SGD for 800 and 2500 steps respectively for different aggregation rules and Byzantine attacks. We consider $f = 5$ Byzantine workers in both cases. Due to space limitations, we only show here the results for MNIST and CIFAR-10. Similar results for Fashion-MNIST are deferred to Appendix~\ref{app:exp_results_FM}.

The \textbf{main takeaway} of our experiments is that RESAM is crucial to Byzantine resilience in practice. For all datasets considered, we observe from Figure~\ref{fig:plots} that combining resilient averaging rules (identified by blue points) and distributed momentum (with $\beta = 0.99$) consistently provides similar cross-accuracies as the benchmark (``No attack'') in all attack scenarios. However, when using a resilient averaging rule without momentum ($\beta = 0$), the Byzantine workers can deteriorate the learning (e.g., see second column, \emph{little} attack). Furthermore, using momentum by itself might not suffice either. For instance, on CIFAR-10, using CGE (which is not resilient averaging) results in equally-bad cross-accuracies both when $\beta = 0$ and when $\beta = 0.99$.

\textbf{The case of CC.} In Figure~\ref{fig:plots}, we observe that CC does not present a consistent behavior regarding momentum. In fact, setting $\beta = 0.99$ clearly mitigates the impact of the {\it little} attack, but drastically deteriorates the performance of the algorithm against {\it label-flipping}. Similar inconsistencies are observed for MNIST. Note however that although CC does not behave as a resilient averaging rule, it can present good performances when combined with other levels of momentum (e.g., see $\beta =0.9$ in Appendix~\ref{app:exp_results_CC}).
\vspace{-0.2cm}
\section{Additional Related Work \& Discussion}\label{sec:concusion}

We discuss hereafter other work that we believe to be related to ours, as well as some possible extensions of our approach. 

\textbf{Applicability to robust estimation.} The problem of robust estimation with corrupted data~\cite{lai2016agnostic, charikar2017learning, diakonikolas2017statistical, diakonikolas2019robust, diakonikolas2019sever, steinhardt2018resilience} can be treated as a special case of Byzantine resilience in distributed machine learning where a Byzantine worker behaves just like an honest worker, except that its stochastic gradients may correspond to an incorrect data distribution (instead of $\mathcal{D}$).
% that is fixed for the entire learning procedure. 
RESAM can thus be readily used for robust estimation over an arbitrary distribution $\mathcal{D}$. %, providing
% In this context, RESAM provides 
% a common basis to compare various robust estimation techniques.
% Hence, providing a common basis to compare various robust estimation techniques. 
% This would allow us to compare the various robust estimation techniques on 
% , provided the rate of corruption is less than $50\%$. 
% In this context, RESAM would provide a solid basis to compare various robust estimation techniques.

\textbf{Momentum variants.} Besides Polyak's momentum, which we considered, it would be interesting to study the impact of the recently proposed {\em momentum-based variance reduction} (MVR) technique, which has been shown to have optimal convergence rate in non-convex learning~\cite{cutkosky2019momentum}. However, to apply this technique, the gradients (of honest workers) must be defined in a different way than in~\eqref{eqn:grad_i}. Basically, $U(\theta)$ cannot have an arbitrary distribution subject to Assumption~\ref{asp:bnd_var} anymore. 

\textbf{Second-order stationarity.} Although a critical point, i.e., a first-order stationary point, represents a global minimum when the loss function $Q$ is convex, this need not be true in general. Indeed, a critical point may not even represent a local minimum when $Q$ is non-convex, and theoretically speaking, our algorithm may get entrapped at {\em saddle points}. Thus, a stronger learning goal would be to output a second-order stationary point, assuming $Q$ to be second-order Lipschitz smooth. Previous works achieving this goal in the presence of Byzantine workers include~\cite{allen2020byzantine,yin2019defending}. However, they again resort to non-standard assumptions for stochastic gradients. Showing second-order convergence via RESAM under only standard assumptions represents an interesting future work.

\textbf{Non-identical workers.}
When honest workers do not have identical 
% gradient estimates (in expectation), i.e., when their 
data distributions, Byzantine resilience becomes much more challenging~\cite{su2019finite, gupta2020fault,data2021byzantine_icml}. In this case, the goal changes to minimizing the average of the honest workers' loss functions~\cite{su2016fault}. 
% \textcolor{red}{corresponding to their respective datapoints}. 
More importantly, we cannot achieve a desirable level of resilience anymore unless there is some redundancy in the data~\cite{liu2021approximate}. Apart from using a robust aggregation rule, there has been some work on the use of $\ell_1$-norm regularization~\cite{li2019rsa}. 
% \textcolor{red}{In~\cite{data2021byzantine}, the applicability of the robust mean estimation technique of~\cite{charikar2017learning} was analyzed}. 
Recently,~\cite{karimireddy2020byzantine} also proposed a meta scheme called {\em bucketing} that helps in this setting. 
% {\color{blue} but at the expense of increasing the fraction of Byzantine vectors in the system}. 
Extending RESAM to incorporate non-identical honest workers is an interesting future direction.
% , and help generalize our results.
% At this point, however, it remains unclear whether distributed momentum helps or degrades Byzantine resilience when workers have non-identical gradient estimates. \\
% {\color{red}SF: Apparently the bucketing paper also uses momentum (look at their main algorithm). Maybe clarifying a bit the last sentence that we mean only using momentum without additional techniques?}

\textbf{Knowledgeable server.} There is some work studying Byzantine resilience in "non-standard" distributed learning settings where the server either has prior knowledge on specific verified datapoints~\cite{, cao2019distributed, yao2019federated, suspicion, xie2020zeno++, regatti2020bygars}, or has control over the sampling of datapoints~\cite{redundancy, rajput2019detox, gupta2019randomized, data2020data}. In the latter case, we can simply use {\em error-correction coding}. In the former case, we can also tolerate a majority of Byzantine workers. While these solutions might reveal impractical, deriving an optimal condition to overcome the limit of $1/2$ Byzantine workers remains an interesting future direction.

\vspace{-0.2cm}
\section*{Acknowledgments}
Sadegh and Nirupam are partly supported by Swiss National Science Foundation (SNSF) project 200021\_200477, controlling the spread of Epidemics. John is partly supported by SNSF project 200021\_182542, machine learning. Rafaël is partly supported by an Ecocloud postdoctoral fellowship. The authors are thankful to Pierre-Louis Roman for fruitful discussion on the introduction, to Youssef Alouah for proof-reading the technical part, and to the anonymous reviewers of ICML 2022 for their constructive comments. 

\bibliography{example_paper}
\bibliographystyle{icml2022}

\newpage

\appendix
\onecolumn
\addcontentsline{toc}{section}{Appendix} % Add the appendix text to the document TOC

\includepdf{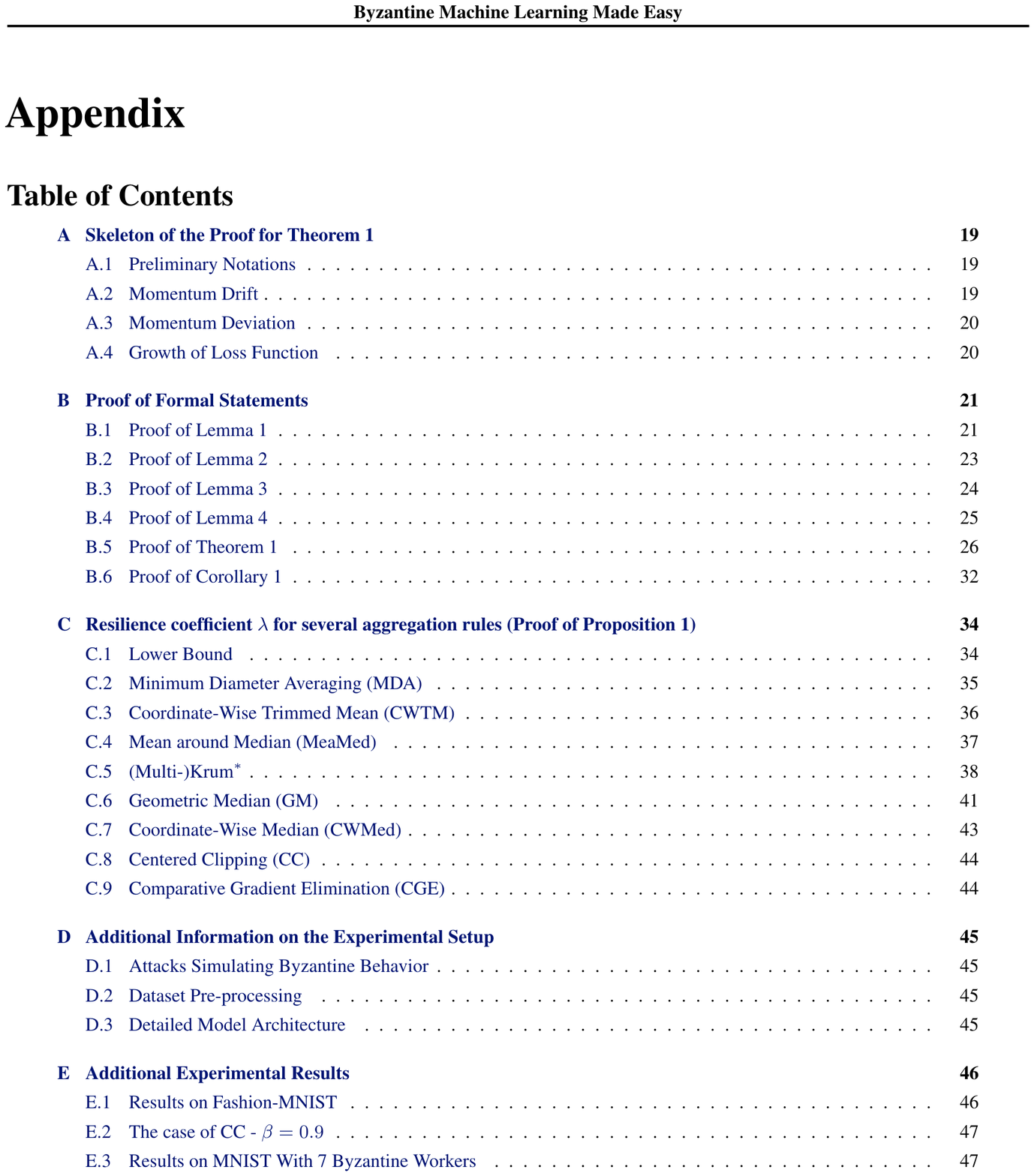}
% \part{Appendix} % Start the appendix part

% \begin{figure*}[!ht]
%     \centering
%     \includegraphics[width=\textwidth]{table_content.png}
%     % \includegraphics[width=71mm]{plots/mnist-empire-f_5-m_0.0-lr_0.75.pdf}%
%     % \includegraphics[width=71mm]{plots/mnist-empire-f_5-m_0.99-lr_0.75.pdf}\\%
%     % \includegraphics[width=71mm]{plots/mnist-little-f_5-m_0.0-lr_0.75.pdf}%
%     % \includegraphics[width=71mm]{plots/mnist-little-f_5-m_0.99-lr_0.75.pdf}\\%
%     % \includegraphics[width=71mm]{plots/mnist-signflipping-f_5-m_0.0-lr_0.75.pdf}%
%     % \includegraphics[width=71mm]{plots/mnist-signflipping-f_5-m_0.99-lr_0.75.pdf}\\%
%     % \includegraphics[width=71mm]{plots/mnist-labelflipping-f_5-m_0.0-lr_0.75.pdf}%
%     % \includegraphics[width=71mm]{plots/mnist-labelflipping-f_5-m_0.99-lr_0.75.pdf}%
%     % \caption{The $1$st and $2$nd rows present experiments performed on CIFAR-10 with $\beta = 0$ and $\beta = 0.99$, respectively. The $3$rd and $4$th rows depict results on MNIST for $\beta = 0$ and $\beta = 0.99$, respectively. The columns convey the performance of the learning under the \textit{empire}, \textit{little}, \textit{sign-flipping}, and \textit{label-flipping} attacks with $f = 5$ Byzantine workers. The resilient averaging rules are marked with a blue circle, while the other {\em non} resilient averaging rules are marked with a red one.}
%     % \label{fig:plots}
% \end{figure*}

% \noptcrule 
% \parttoc

\newpage
\section{Skeleton of the Proof for Theorem~\ref{thm:main_conv}}\label{sec:resultsskeleton}
Our formal analysis of Algorithm~\ref{algo} constitutes of three critical elements

\begin{enumerate}
    \item The {\em momentum drift} (see Section~\ref{sec:drift})
    \item The {\em momentum deviation} (see Section~\ref{sec:deviation})
    \item The  {\em growth of loss function $Q$} (see Section~\ref{sec:growth})
\end{enumerate}

Ultimately, we combine these elements to obtain the final convergence result stated in Theorem~\ref{thm:main_conv}. Essentially, the proof of Theorem~\ref{thm:main_conv}, deferred to Appendix~\ref{app:main_conv}, is obtained by combining the three sub-results presented by lemmas~\ref{lem:drift},~\ref{lem:dev} and~\ref{lem:growth_Q} below.

\subsection{Preliminary Notations}

For a positive integer $T$, we let $[T]$ denote the set $\{1, \ldots, \, T\}$. For a finite set $S$, we let $\mnorm{S}$ denote its cardinality. For each step $t$, we denote by $R_t$ the output of aggregation rule $F$, i.e., 
\begin{align}
    R_t \coloneqq F\left(\mmt{1}{t}, \ldots, \, \mmt{n}{t} \right). \label{eqn:R}
\end{align}
We denote by $\mathcal{P}_t$ the history from steps $1$ to $t$. Specifically, 
\[\P_t \coloneqq \left\{\weight{1}, \ldots, \, \weight{t}; ~ \mmt{i}{1}, \ldots, \, \mmt{i}{t-1}; i = 1, \ldots, \, n \right\}.\] 
By convention, $\P_1 = \{ \weight{1}\}$. We denote by $\condexpect{t}{\cdot}$ and $\expect{\cdot}$ the conditional expectation $\expect{\cdot ~ \vline ~ \P_t}$ and the total expectation, respectively. Thus, $\expect{\cdot} = \condexpect{1}{ \cdots \condexpect{T}{\cdot}}$.
% Finally, recall that in each step $t$ the gradient $\gradient{i}{t}$ of each honest worker $\worker{}{i}$, defined in~\eqref{eqn:grad_i}, is a realization of an identical random vector $G(\weight{t})$ defined to be
% \begin{equation}
%     G(\weight{t}) \coloneqq \nabla Q(\weight{t}) + U(\weight{t}).  \label{eqn:def_G}
% \end{equation}
% Throughout this section, we will assume that random vector $U(\weight{t})$ satisfies Assumption~\ref{asp:bnd_var}.\\

% for Algorithm~\ref{algo} in Theorem~\ref{thm:main_conv} below.\\

%%%%%%%%%%%%%%%%%%%%%%%%%%%%%%%%%%%%%%%%%% DRIFT %%%%%%%%%%%%%%%%%%%%%%%%%%%%%%%%%%%%%%%%%%%%%%%%%
\subsection{Momentum Drift}
\label{sec:drift}

We first note that at any step $t$, given the history $\P_t$, the momentums $\mmt{i}{t}$ of the honest workers need not be identically distributed, even when the said property is true for their stochastic gradients $\gradient{i}{t}$. Nevertheless, we show in Lemma~\ref{lem:mmt_drift} below that the {\em drift} between the honest workers' momentums can be controlled up to a certain extent by tuning the momentum coefficient $\beta$. We consider an arbitrary subset $\H \subseteq [n]$ of $n-f$ honest workers, i.e., $\mnorm{\H} = n-f$ and $i \in \H$ only if $\worker{}{i}$ is an honest worker. Such a set always exists as there are at least $n-f$ honest workers in the system. Then, defining
\begin{align}
    \AvgMmt{t} \coloneqq \nicefrac{1}{(n-f)} \sum_{i \in \H} \mmt{i}{t}, \label{eqn:def_avg_mmt}
\end{align}
we can demonstrate the following. (Proof of Lemma~\ref{lem:mmt_drift} can be found in Appendix~\ref{app:lem_mmt_drift}.)

\begin{lemma}
\label{lem:mmt_drift}
Suppose that Assumption~\ref{asp:bnd_var} holds true. Consider Algorithm~\ref{algo}. For each $i \in \H$ and $t \in [T]$, we obtain that
\begin{align*}
    \expect{\norm{\mmt{i}{t} - \AvgMmt{t}}^2} \leq 2 \sigma^2 \, (1 - \beta)^2 \beta^{2(t - 1)}  + 2 \left(\frac{1 - \beta}{1 + \beta}\right)  \left(1 + \frac{1}{n-f} \right) \sigma^2. 
\end{align*}
\end{lemma}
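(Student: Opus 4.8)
The plan is to fix an honest worker $i \in \H$ and a step $t$, and unroll the momentum recursion~\eqref{eqn:mmt_i}. Since $\mmt{i}{0} = 0$ and $\mmt{i}{t} = \beta \mmt{i}{t-1} + (1-\beta) \gradient{i}{t}$, a straightforward induction gives $\mmt{i}{t} = (1 - \beta) \sum_{k=1}^{t} \beta^{t-k} \gradient{i}{k}$. Averaging over $\H$ and using~\eqref{eqn:grad_i}, the true-gradient parts cancel in the difference $\mmt{i}{t} - \AvgMmt{t}$, leaving only the noise terms:
\[
    \mmt{i}{t} - \AvgMmt{t} = (1-\beta)\sum_{k=1}^t \beta^{t-k}\left( u^{(i)}_k - \frac{1}{n-f}\sum_{j\in\H} u^{(j)}_k \right).
\]
First I would introduce the zero-mean random vectors $\zeta^{(i)}_k \coloneqq u^{(i)}_k - \frac{1}{n-f}\sum_{j\in\H} u^{(j)}_k$, and record the two elementary facts: (i) for fixed $k$, $\expect{\norm{\zeta^{(i)}_k}^2} \leq \left(1 + \frac{1}{n-f}\right)\sigma^2$ (expand the square, use that cross terms vanish in expectation conditionally on $\P_k$ because distinct honest workers' noises are independent with mean zero, and bound each $\expect{\norm{u^{(j)}_k}^2}$ by $\sigma^2$ via Assumption~\ref{asp:bnd_var}); and (ii) for $k \neq k'$, $\zeta^{(i)}_k$ and $\zeta^{(i)}_{k'}$ are uncorrelated, since conditioning on the later history kills the mean of the later-indexed noise term.

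Next I would expand $\expect{\norm{\mmt{i}{t}-\AvgMmt{t}}^2} = (1-\beta)^2 \sum_{k,k'} \beta^{t-k}\beta^{t-k'} \expect{\iprod{\zeta^{(i)}_k}{\zeta^{(i)}_{k'}}}$. By fact (ii) the off-diagonal terms drop, leaving $(1-\beta)^2 \sum_{k=1}^t \beta^{2(t-k)} \expect{\norm{\zeta^{(i)}_k}^2}$. There is one subtlety: the factorization-of-the-variance bound in fact (i) relies on conditional independence, but the $\beta$-weighted sum over $k$ mixes terms from different conditioning levels; this is handled by applying the tower rule from $k=t$ down to $k=1$, i.e. using the nested-expectation identity $\expect{\cdot} = \condexpect{1}{\cdots \condexpect{T}{\cdot}}$ recorded in the preliminary notations, so that at each level only the innermost noise vector is random. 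Then I bound $\expect{\norm{\zeta^{(i)}_k}^2}$ by $\left(1 + \frac{1}{n-f}\right)\sigma^2$ for every $k$, giving
\[
    \expect{\norm{\mmt{i}{t}-\AvgMmt{t}}^2} \leq (1-\beta)^2 \left(1 + \tfrac{1}{n-f}\right)\sigma^2 \sum_{k=1}^t \beta^{2(t-k)}.
\]
Finally I would bound the geometric sum $\sum_{k=1}^t \beta^{2(t-k)} = \sum_{j=0}^{t-1}\beta^{2j}$. The clean bound $\frac{1}{1-\beta^2}$ would already give $(1-\beta)^2 \cdot \frac{1}{1-\beta^2} = \frac{1-\beta}{1+\beta}$, matching the second term in the statement; the extra first term $2\sigma^2(1-\beta)^2\beta^{2(t-1)}$ and the factor $2$ come from the stated lemma being slightly loose — presumably the authors split off the $k=1$ term (which carries $\mmt{i}{0}=0$, no noise) or bound a partial geometric tail more crudely; I would reconcile the constant by writing $\sum_{j=0}^{t-1}\beta^{2j} \le \beta^{2(t-1)} + \frac{1}{1-\beta^2}$ or a similar split and absorbing the factor $2$ conservatively.

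The main obstacle I anticipate is purely bookkeeping rather than conceptual: getting the cross-term cancellations rigorous across the different conditioning $\sigma$-algebras, since $\zeta^{(i)}_k$ is measurable with respect to $\P_{k+1}$ but not $\P_k$, so "zero mean" and "uncorrelated" must be stated as $\condexpect{k}{\zeta^{(i)}_k} = 0$ and then propagated via the tower property. Once the orthogonality of the noise increments is set up correctly, the rest is the geometric-sum estimate above. A secondary (minor) point is to justify that the subset $\H$ of exactly $n-f$ honest workers is legitimate and that the claimed per-step variance bound $\left(1+\frac{1}{n-f}\right)\sigma^2$ is tight enough — expanding $\norm{u - \bar u_\H}^2 = \norm{u}^2 - 2\iprod{u}{\bar u_\H} + \norm{\bar u_\H}^2$ and taking conditional expectation gives exactly $\sigma^2 - \frac{1}{n-f}\expect{\norm{u^{(i)}_k}^2} + \frac{1}{n-f}\sigma^2 \le \left(1 + \frac{1}{n-f}\right)\sigma^2$ after dropping the negative middle contribution, so this is immediate.
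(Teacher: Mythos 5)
Your proposal is correct, and it reaches a slightly stronger conclusion than the lemma by a somewhat different route. The paper works with the one-step recursion $\diffmmt{i}{t} = \beta\,\diffmmt{i}{t-1} + (1-\beta)\bigl(\gradient{i}{t} - \overline{g}_t\bigr)$, kills the cross term at each step using $\condexpect{t}{\gradient{i}{t}-\overline{g}_t}=0$, bounds the per-step noise crudely via $\norm{a+b}^2 \le 2\norm{a}^2+2\norm{b}^2$ to get $2\left(1+\frac{1}{n-f}\right)\sigma^2$, telescopes the resulting inequality, and handles the initial step $t=1$ separately (which is exactly where the $2\sigma^2(1-\beta)^2\beta^{2(t-1)}$ term and the factor $2$ come from). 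You instead solve the recursion in closed form and exploit the martingale-difference orthogonality of the increments $\zeta^{(i)}_k$ across $k$, with the exact expansion $\expect{\norm{u^{(i)}_k-\bar u_k}^2}\le\left(1+\frac{1}{n-f}\right)\sigma^2$; your measurability bookkeeping ($\zeta^{(i)}_k$ is determined by $\P_{k'}$ for $k<k'$, while $\condexpect{k'}{\zeta^{(i)}_{k'}}=0$) is the right way to make the cross terms vanish, and it goes through because the gradients (hence noises) of distinct honest workers are conditionally independent given the history, just as the paper assumes. This yields the uniform bound $\left(\frac{1-\beta}{1+\beta}\right)\left(1+\frac{1}{n-f}\right)\sigma^2$, which is dominated by the lemma's right-hand side, so the reconciliation you worry about at the end is immediate: no splitting of the geometric sum or absorption of a factor $2$ is needed, since the paper's looser constants are artifacts of its cruder one-step bound and its separate treatment of $t=1$, not something your argument must reproduce. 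In short, both proofs rest on the same cancellation-by-conditional-unbiasedness idea, but yours trades the paper's recursion-plus-telescope bookkeeping for a global orthogonality argument and obtains a tighter estimate.
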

% }}
% ~\\

Not that the above result holds even when $F$ is not a resilient averaging rule, as it only analyzes the behavior of the worker's momentum. By building upon this first lemma, we can obtain a bound on the distance between the actual output of $F$ and the average momentum of honest workers for the case when $F$ is $(f, \, \lambda)$-resilient averaging. Specifically, when defining
\begin{align}
    \drift{t} \coloneqq R_t - \AvgMmt{t}, \label{eqn:drift}
\end{align}
we get the following. (Proof of Lemma~\ref{lem:drift} can be found in Appendix~\ref{app:lem_drift}.)
% \noindent \fcolorbox{black}{white}{
% \parbox{0.97\textwidth}{\centering
\begin{lemma}
\label{lem:drift}
Suppose that Assumption~\ref{asp:bnd_var} holds true. Consider Algorithm~\ref{algo} when $F$ is $(f, \, \lambda)$-resilient averaging. For each step $t \in [T]$, we obtain that
\begin{align*}
    \expect{\norm{\drift{t}}^2} \leq 8 \sigma^2 \lambda^2 (n-f) (1 - \beta)^2 \beta^{2(t - 1)} +  8 \left(\frac{1 - \beta}{1 + \beta}\right) \, \left(n - f + 1 \right) \lambda^2  \, \sigma^2.
\end{align*}
\end{lemma}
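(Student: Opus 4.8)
The plan is to obtain Lemma~\ref{lem:drift} as an essentially immediate corollary of Lemma~\ref{lem:mmt_drift} together with the defining inequality of $(f,\lambda)$-resilient averaging. Since the aggregation rule $F$ is deterministic, every inequality I use holds pointwise (for each realization of the momentums), so I would only take total expectations at the very end. First I would invoke Definition~\ref{def:rational} with the choice $S = \H$: the set $\H$ of $n-f$ honest workers introduced before Lemma~\ref{lem:mmt_drift} is an admissible set of size $n-f$, and $\AvgMmt{t}$ is exactly $\overline{x}_\H$ for the vectors $x_i = \mmt{i}{t}$. Hence, recalling $R_t = F(\mmt{1}{t},\ldots,\mmt{n}{t})$ and $\drift{t} = R_t - \AvgMmt{t}$,
\[
\norm{\drift{t}} \;\leq\; \lambda \,\max_{i,j\in\H}\norm{\mmt{i}{t} - \mmt{j}{t}}.
\]

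Next I would bound the diameter of the honest momentums by their deviation from the average. For any $i,j \in \H$, the triangle inequality through $\AvgMmt{t}$ gives $\norm{\mmt{i}{t} - \mmt{j}{t}} \leq \norm{\mmt{i}{t} - \AvgMmt{t}} + \norm{\mmt{j}{t} - \AvgMmt{t}} \leq 2\max_{k\in\H}\norm{\mmt{k}{t} - \AvgMmt{t}}$, so after squaring and then bounding the maximum by the sum of nonnegative terms,
\[
\max_{i,j\in\H}\norm{\mmt{i}{t} - \mmt{j}{t}}^2 \;\leq\; 4\max_{k\in\H}\norm{\mmt{k}{t} - \AvgMmt{t}}^2 \;\leq\; 4\sum_{k\in\H}\norm{\mmt{k}{t} - \AvgMmt{t}}^2 .
\]
Combining with the previous display yields $\norm{\drift{t}}^2 \leq 4\lambda^2 \sum_{k\in\H}\norm{\mmt{k}{t} - \AvgMmt{t}}^2$. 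Taking total expectation and applying Lemma~\ref{lem:mmt_drift} to each of the $n-f$ summands, then using $(n-f)\bigl(1 + \tfrac{1}{n-f}\bigr) = n-f+1$ and distributing the constant, gives precisely the claimed bound $\expect{\norm{\drift{t}}^2} \leq 8\sigma^2\lambda^2(n-f)(1-\beta)^2\beta^{2(t-1)} + 8\bigl(\tfrac{1-\beta}{1+\beta}\bigr)(n-f+1)\lambda^2\sigma^2$.

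I do not expect a genuine obstacle here: the substantive work — showing that the honest workers' momentums concentrate around their average even though, conditionally on the history, they need not be identically distributed — is already carried out in Lemma~\ref{lem:mmt_drift}. The only points demanding a little care are (i) checking that $\H$ is a legitimate choice for the set $S$ in Definition~\ref{def:rational} and that the resulting pointwise bound is preserved under expectation because $F$ is deterministic, and (ii) bookkeeping of constants: routing the diameter bound through $\max_{k\in\H}\norm{\mmt{k}{t}-\AvgMmt{t}}$ (rather than the slightly tighter $\norm{\mmt{i}{t}-\mmt{j}{t}}^2 \leq 2\norm{\mmt{i}{t}-\AvgMmt{t}}^2 + 2\norm{\mmt{j}{t}-\AvgMmt{t}}^2$) is what produces the factor $8$ stated in the lemma.
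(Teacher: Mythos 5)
Your proposal is correct and follows essentially the same route as the paper's proof: apply Definition~\ref{def:rational} with $S=\H$, bound the diameter of the honest momentums by $4\max_{i\in\H}\norm{\mmt{i}{t}-\AvgMmt{t}}^2 \leq 4\sum_{i\in\H}\norm{\mmt{i}{t}-\AvgMmt{t}}^2$, take expectations, and invoke Lemma~\ref{lem:mmt_drift} term by term. Your minor variation in how you reach the factor $4$ (via $2\max_{k\in\H}$ before squaring, rather than the paper's $2a^2+2b^2$ step) yields the same constant and does not change the argument.
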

\subsection{Momentum Deviation}
\label{sec:deviation}

Next, we study the distance between the average honest momentum $\AvgMmt{t}$ and the true gradient $\nabla Q(\weight{t})$. Specifically, we define {\em deviation} to be
\begin{align}
    \dev{t} \coloneqq \AvgMmt{t} - \nabla Q\left( \weight{t} \right), \label{eqn:dev}
\end{align}
and obtain in Lemma~\ref{lem:dev} below an upper bound on the growth of the deviation over the learning steps $t \in [T]$. (Proof of Lemma~\ref{lem:dev} can be found in Appendix~\ref{app:lem_dev}.) 
% \textcolor{red}{change $t$ to $t + 1$ below}.

% \noindent \fcolorbox{black}{white}{
% \parbox{0.97\textwidth}{\centering
\begin{lemma}
\label{lem:dev}
Suppose that assumptions~\ref{asp:lip} and~\ref{asp:bnd_var} hold true. Consider Algorithm~\ref{algo} with $T > 1$. For all $t > 1$ we obtain that
\begin{align*}
    \expect{\norm{\dev{t}}^2} \leq & \beta^2 \zeta_{t-1} \expect{\norm{\dev{t-1}}^2} +  4 \gamma_{t-1}L ( 1 + \gamma_{t-1}L) \beta^2  \expect{\norm{\nabla Q(\weight{t-1})}^2} +(1 - \beta)^2 \frac{\sigma^2}{(n-f)} \\
    & + 2 \gamma_{t-1}L ( 1 + \gamma_{t-1}L)\beta^2  \expect{\norm{\drift{t-1}}^2}.
\end{align*}
where $\zeta_{t} \coloneqq (1 + \gamma_{t}L ) \left(1 + 4 \gamma_{t}  L \right)$.
\end{lemma}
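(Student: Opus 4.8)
The plan is to prove Lemma~\ref{lem:dev} by expanding the recursive definition of the momentum and carefully tracking how the deviation $\dev{t} = \AvgMmt{t} - \nabla Q(\weight{t})$ evolves from step $t-1$ to step $t$. First I would write the recursion for the average honest momentum. Since $\AvgMmt{t} = \frac{1}{n-f}\sum_{i \in \H} \mmt{i}{t}$ and each honest worker satisfies $\mmt{i}{t} = \beta \mmt{i}{t-1} + (1-\beta)\gradient{i}{t}$, averaging over $\H$ gives $\AvgMmt{t} = \beta \AvgMmt{t-1} + (1-\beta) \AvgMmt[g]{t}$ where $\AvgMmt[g]{t} \coloneqq \frac{1}{n-f}\sum_{i\in\H}\gradient{i}{t}$ is the average honest stochastic gradient at step $t$. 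Using unbiasedness (Assumption~\ref{asp:bnd_var}), $\condexpect{t}{\AvgMmt[g]{t}} = \nabla Q(\weight{t})$, and the variance of $\AvgMmt[g]{t} - \nabla Q(\weight{t})$ is at most $\sigma^2/(n-f)$ by independence of honest workers' noise.

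Next I would decompose $\dev{t}$ into a sum of three contributions. Writing $\dev{t} = \AvgMmt{t} - \nabla Q(\weight{t}) = \beta\AvgMmt{t-1} + (1-\beta)\AvgMmt[g]{t} - \nabla Q(\weight{t})$, I would insert and subtract $\beta\nabla Q(\weight{t-1})$ and $(1-\beta)\nabla Q(\weight{t})$ to get
\begin{align*}
\dev{t} = \beta\,\dev{t-1} + \beta\left(\nabla Q(\weight{t-1}) - \nabla Q(\weight{t})\right) + (1-\beta)\left(\AvgMmt[g]{t} - \nabla Q(\weight{t})\right).
\end{align*}
The last term is a zero-mean (conditionally on $\P_t$) ``fresh noise'' term, whereas the first two are $\P_t$-measurable. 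I would then take $\condexpect{t}{\norm{\dev{t}}^2}$; the cross terms between the fresh noise and the measurable part vanish in expectation, leaving $\norm{\beta\dev{t-1} + \beta(\nabla Q(\weight{t-1}) - \nabla Q(\weight{t}))}^2 + (1-\beta)^2 \condexpect{t}{\norm{\AvgMmt[g]{t} - \nabla Q(\weight{t})}^2}$, and the second piece is bounded by $(1-\beta)^2 \sigma^2/(n-f)$.

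The remaining work is to bound $\norm{\beta\dev{t-1} + \beta(\nabla Q(\weight{t-1}) - \nabla Q(\weight{t}))}^2$. Using Young's inequality $\norm{a+b}^2 \leq (1+c)\norm{a}^2 + (1+1/c)\norm{b}^2$ with the right choice of $c$ (here $c = \gamma_{t-1}L$ seems to be what yields the stated $\zeta_t = (1+\gamma_t L)(1+4\gamma_t L)$, after a second application), this becomes $\beta^2(1+\gamma_{t-1}L)\norm{\dev{t-1}}^2 + \beta^2(1 + 1/(\gamma_{t-1}L))\norm{\nabla Q(\weight{t-1}) - \nabla Q(\weight{t})}^2$. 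Now I would invoke Lipschitz smoothness (Assumption~\ref{asp:lip}): $\norm{\nabla Q(\weight{t-1}) - \nabla Q(\weight{t})} \leq L\norm{\weight{t} - \weight{t-1}} = L\gamma_{t-1}\norm{R_{t-1}}$, using the update rule $\weight{t} = \weight{t-1} - \gamma_{t-1} R_{t-1}$. Then $\norm{R_{t-1}} = \norm{\AvgMmt{t-1} + \drift{t-1}} = \norm{\nabla Q(\weight{t-1}) + \dev{t-1} + \drift{t-1}}$, and a further application of the $\norm{a+b+c}^2 \leq 3(\norm{a}^2 + \norm{b}^2 + \norm{c}^2)$ (or a weighted Young's inequality tuned to reproduce the factor-$4$ inside $\zeta$) splits this into the gradient-norm term, a term that gets reabsorbed into the $\dev{t-1}$ coefficient (hence the $\zeta_{t-1}$ rather than a bare $(1+\gamma_{t-1}L)$), and the drift term $\expect{\norm{\drift{t-1}}^2}$. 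Finally I would take total expectation and collect the coefficients, verifying they match $\beta^2\zeta_{t-1}$, $4\gamma_{t-1}L(1+\gamma_{t-1}L)\beta^2$, $(1-\beta)^2\sigma^2/(n-f)$, and $2\gamma_{t-1}L(1+\gamma_{t-1}L)\beta^2$ respectively.

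The main obstacle I anticipate is bookkeeping the Young's-inequality constants so that exactly the claimed coefficients emerge — in particular getting the self-referential structure right, where expanding $\norm{R_{t-1}}^2$ produces a $\norm{\dev{t-1}}^2$ contribution that must be folded back into the leading coefficient to yield the compound factor $\zeta_{t-1} = (1+\gamma_{t-1}L)(1+4\gamma_{t-1}L)$ rather than something messier. This requires choosing the splitting parameters in the two nested Young's inequalities consistently (plausibly $(1+\gamma_{t-1}L)$ on the outer split and the factor producing the ``$4\gamma L$'' on the inner split of $\norm{R_{t-1}}^2$). Everything else — the martingale-difference cancellation of the fresh-noise cross term, the variance bound via independence, and the Lipschitz step — is routine.
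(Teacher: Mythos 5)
Your proposal is correct and follows essentially the same route as the paper: the same decomposition $\dev{t} = \beta\dev{t-1} + \beta(\nabla Q(\weight{t-1}) - \nabla Q(\weight{t})) + (1-\beta)(\overline{g}_t - \nabla Q(\weight{t}))$, the martingale cancellation and the $\sigma^2/(n-f)$ variance bound, the Lipschitz step $\norm{\nabla Q(\weight{t-1}) - \nabla Q(\weight{t})} \leq \gamma_{t-1}L\norm{R_{t-1}}$, and your Young splitting with $c = \gamma_{t-1}L$ is algebraically identical to the paper's Cauchy--Schwarz plus $2ab \leq a^2+b^2$, yielding the same intermediate bound $(1+\gamma_{t-1}L)\beta^2\norm{\dev{t-1}}^2 + \gamma_{t-1}L(1+\gamma_{t-1}L)\beta^2\norm{R_{t-1}}^2$. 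The only bookkeeping left is the one you flagged: the paper bounds $\norm{R_{t-1}}^2 \leq 2\norm{\drift{t-1}}^2 + 4\norm{\dev{t-1}}^2 + 4\norm{\nabla Q(\weight{t-1})}^2$ by two nested applications of $\norm{a+b}^2 \leq 2\norm{a}^2 + 2\norm{b}^2$ (not the symmetric factor-$3$ split), which is exactly what produces $\zeta_{t-1} = (1+\gamma_{t-1}L)(1+4\gamma_{t-1}L)$ and the stated coefficients.
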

% }}
% ~\\

\subsection{Growth of Loss Function} 
\label{sec:growth}

Finally, we analyze the third element, i.e., the growth of cost function $Q(\weight{})$ along the trajectory of Algorithm~\ref{algo}. From~\eqref{eqn:SGD} and~\eqref{eqn:R}, we obtain that for each step $t$
\begin{align*}
    \weight{t+1} = \weight{t} - \gamma_t R_t = \weight{t} - \gamma_t   \, \AvgMmt{t} - \gamma_t \left(R_t -   \, \AvgMmt{t} \right).
\end{align*}
Furthermore, by~\eqref{eqn:drift}, $R_t -   \, \AvgMmt{t} = \drift{t}$. Thus, for all $t$,
\begin{align}
    \weight{t+1} = \weight{t} - \gamma_t   \, \AvgMmt{t} - \gamma_t \drift{t}. \label{eqn:sgd_new}
\end{align}
This means that Algorithm~\ref{algo} can actually be treated as distributed SGD with a momentum term that is subject to perturbation proportional to $\drift{t}$ at each step $t$. This perspective leads us to the following result. (Proof of Lemma~\ref{lem:growth_Q} can be found in Appendix~\ref{app:growth_Q}.) 

% \noindent \fcolorbox{black}{white}{
% \parbox{0.97\textwidth}{\centering
\begin{lemma} 
\label{lem:growth_Q}
Suppose that Assumption~\ref{asp:lip} holds true. Consider Algorithm~\ref{algo}. For all $t \in [T]$, we obtain that
\begin{align*}
    \expect{2 Q(\weight{t+1}) - 2 Q(\weight{t})} \leq & - \gamma_t   \left( 1 - 4 \gamma_t L  \right) \expect{\norm{\nabla Q(\weight{t})}^2}  + 2 \gamma_t   \left( 1 + 2 \gamma_t L   \right) \expect{ \norm{\dev{t}}^2} \\
    & + 2 \gamma_t \left(  1 + \gamma_t L \right) \expect{\norm{\drift{t}}^2}.
\end{align*}
\end{lemma}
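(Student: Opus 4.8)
The plan is to establish Lemma~\ref{lem:growth_Q} by combining the standard descent inequality implied by Assumption~\ref{asp:lip} with the reformulation of Algorithm~\ref{algo} given in~\eqref{eqn:sgd_new}, namely $\weight{t+1} = \weight{t} - \gamma_t \AvgMmt{t} - \gamma_t \drift{t}$. First I would apply the $L$-smoothness descent lemma: since $Q$ is $L$-smooth, for any $\weight{}, \weight{}'$ we have $Q(\weight{}') \le Q(\weight{}) + \iprod{\nabla Q(\weight{})}{\weight{}' - \weight{}} + \frac{L}{2}\norm{\weight{}' - \weight{}}^2$. Substituting $\weight{}' = \weight{t+1}$ and $\weight{} = \weight{t}$ and using $\weight{t+1} - \weight{t} = -\gamma_t(\AvgMmt{t} + \drift{t})$ yields
\begin{align*}
    Q(\weight{t+1}) - Q(\weight{t}) \le -\gamma_t \iprod{\nabla Q(\weight{t})}{\AvgMmt{t} + \drift{t}} + \frac{L \gamma_t^2}{2}\norm{\AvgMmt{t} + \drift{t}}^2.
\end{align*}

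The key manipulation is to rewrite the inner product in terms of the deviation. Using $\AvgMmt{t} = \nabla Q(\weight{t}) + \dev{t}$ (which is just~\eqref{eqn:dev}), the linear term becomes $-\gamma_t \norm{\nabla Q(\weight{t})}^2 - \gamma_t \iprod{\nabla Q(\weight{t})}{\dev{t}} - \gamma_t \iprod{\nabla Q(\weight{t})}{\drift{t}}$. The two cross terms are then controlled by Young's inequality ($\iprod{a}{b} \le \frac12\norm{a}^2 + \frac12\norm{b}^2$), which converts $-\gamma_t\iprod{\nabla Q(\weight{t})}{\dev{t}+\drift{t}}$ into something bounded by $\gamma_t\norm{\nabla Q(\weight{t})}^2 + \frac{\gamma_t}{2}\norm{\dev{t}}^2 + \frac{\gamma_t}{2}\norm{\drift{t}}^2$ up to the right constants; I would tune the Young splitting so that the recovered positive multiple of $\norm{\nabla Q(\weight{t})}^2$ is small enough that the net coefficient of $\norm{\nabla Q(\weight{t})}^2$ stays negative once the quadratic term is added. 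For the quadratic term I would expand $\norm{\AvgMmt{t} + \drift{t}}^2 \le$ (via $\norm{a+b}^2 \le 2\norm{a}^2 + 2\norm{b}^2$ applied judiciously, or via $\norm{\nabla Q(\weight{t}) + \dev{t} + \drift{t}}^2$ expanded into three squared terms plus cross terms) a bound of the form $C_1\norm{\nabla Q(\weight{t})}^2 + C_2\norm{\dev{t}}^2 + C_3\norm{\drift{t}}^2$ with small absolute constants. Multiplying through by $2$, collecting the coefficients of each of the three quantities, and taking total expectation then gives the claimed inequality, where the $\gamma_t L$ and $\gamma_t^2 L^2$ contributions are grouped so that the bracket coefficients $(1 - 4\gamma_t L)$, $(1 + 2\gamma_t L)$, and $(1+\gamma_t L)$ appear.

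The main obstacle I anticipate is purely bookkeeping: choosing the Young's-inequality weights and the expansion of the squared norm so that all the constants line up exactly with $1 - 4\gamma_t L$, $2(1+2\gamma_t L)$, and $2(1+\gamma_t L)$ — in particular ensuring that after absorbing the recovered $\norm{\nabla Q(\weight{t})}^2$ mass from the cross terms, the leading coefficient is $-\gamma_t(1 - 4\gamma_t L)$ and not something weaker. A secondary point worth noting is that no conditioning argument or martingale structure is needed here: unlike the deviation and drift lemmas, this is a deterministic per-step inequality in $\weight{t}$, $\AvgMmt{t}$, $\drift{t}$, and taking expectations at the very end is a formality. I would organize the proof as: (i) state the descent lemma; (ii) substitute~\eqref{eqn:sgd_new} and decompose $\AvgMmt{t}$ via~\eqref{eqn:dev}; (iii) apply Young on the two cross terms in the linear part; (iv) bound the squared-norm term by its three-way split; (v) collect coefficients, multiply by $2$, and take expectation.
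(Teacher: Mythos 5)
Your proposal follows essentially the same route as the paper's proof: the $L$-smoothness descent inequality, substitution of $\weight{t+1} = \weight{t} - \gamma_t \AvgMmt{t} - \gamma_t \drift{t}$, the decomposition $\AvgMmt{t} = \nabla Q(\weight{t}) + \dev{t}$, Young's inequality (with weight $c=2$) on the two cross terms, the bound $\norm{\AvgMmt{t}+\drift{t}}^2 \le 4\norm{\dev{t}}^2 + 4\norm{\nabla Q(\weight{t})}^2 + 2\norm{\drift{t}}^2$, and a final collection of coefficients with total expectation taken at the end. The bookkeeping you flag does line up exactly with the stated coefficients, so the plan is correct and matches the paper's argument.
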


\newpage

\section{Proof of Formal Statements}
\label{app:proofs}
We now present technical proof for both the aforementioned Lemmas as well as Theorem~\ref{thm:main_conv} and Corollary~\ref{cor:resilience}.

\subsection{Proof of Lemma~\ref{lem:mmt_drift}}
\label{app:lem_mmt_drift}

\noindent \fcolorbox{black}{gainsboro!40}{
\parbox{0.97\textwidth}{\centering
\begin{lemma*}
Suppose that Assumption~\ref{asp:bnd_var} holds true. Consider Algorithm~\ref{algo}. For each $i \in \H$ and $t \in [T]$, we obtain that
\begin{align*}
    \expect{\norm{\mmt{i}{t} - \AvgMmt{t}}^2} \leq 2 \sigma^2 \, (1 - \beta)^2 \beta^{2(t - 1)}  + 2 \left(\frac{1 - \beta}{1 + \beta}\right)  \left(1 + \frac{1}{n-f} \right) \sigma^2. 
\end{align*}
\end{lemma*}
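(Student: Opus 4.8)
The plan is to expand the momentum recursion \eqref{eqn:mmt_i} into an explicit sum. Since $\mmt{i}{0} = 0$, unrolling gives $\mmt{i}{t} = (1-\beta) \sum_{k=1}^{t} \beta^{t-k} \gradient{i}{k}$. Using \eqref{eqn:grad_i}, write $\gradient{i}{k} = \nabla Q(\weight{k}) + u^{(i)}_k$, so that $\mmt{i}{t} = (1-\beta) \sum_{k=1}^{t} \beta^{t-k} \nabla Q(\weight{k}) + (1-\beta)\sum_{k=1}^{t}\beta^{t-k} u^{(i)}_k$. Crucially, the deterministic-looking part $(1-\beta)\sum_k \beta^{t-k}\nabla Q(\weight{k})$ is \emph{the same for every honest worker} $i$ (it depends only on the shared parameters $\weight{k}$, not on $i$), so it cancels in the difference $\mmt{i}{t} - \AvgMmt{t}$. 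Hence $\mmt{i}{t} - \AvgMmt{t} = (1-\beta)\sum_{k=1}^{t}\beta^{t-k}\left( u^{(i)}_k - \overline{u}_k \right)$ where $\overline{u}_k \coloneqq \frac{1}{n-f}\sum_{j\in\H} u^{(j)}_k$.

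Next I would take the squared norm and expectation. Using $\norm{a-b}^2 \leq 2\norm{a}^2 + 2\norm{b}^2$ after splitting off the $\overline{u}_k$ term, or more carefully expanding the double sum, one gets a bound of the form $\expect{\norm{\mmt{i}{t}-\AvgMmt{t}}^2} \leq 2(1-\beta)^2 \expect{\norm{\sum_{k=1}^t \beta^{t-k} u^{(i)}_k}^2} + 2(1-\beta)^2\expect{\norm{\sum_{k=1}^t \beta^{t-k}\overline{u}_k}^2}$. The key probabilistic input is that, conditioned on the history $\P_k$, the noise $u^{(i)}_k$ is zero-mean (Assumption~\ref{asp:bnd_var}), and noises at distinct steps $k \neq k'$ are uncorrelated when one conditions appropriately; likewise noises of distinct honest workers at the same step are independent. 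This kills all cross terms, leaving $\expect{\norm{\sum_k \beta^{t-k} u^{(i)}_k}^2} = \sum_{k=1}^t \beta^{2(t-k)}\expect{\norm{u^{(i)}_k}^2} \leq \sigma^2 \sum_{k=1}^t \beta^{2(t-k)} \leq \sigma^2 \cdot \frac{1}{1-\beta^2}$, and analogously $\expect{\norm{\sum_k \beta^{t-k}\overline{u}_k}^2} = \sum_k \beta^{2(t-k)} \expect{\norm{\overline{u}_k}^2} \leq \frac{\sigma^2}{n-f}\cdot\frac{1}{1-\beta^2}$ since $\overline{u}_k$ is an average of $n-f$ independent zero-mean vectors each with second moment $\leq \sigma^2$.

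Plugging in $\frac{1}{1-\beta^2} = \frac{1}{(1-\beta)(1+\beta)}$ gives $2(1-\beta)^2 \cdot \frac{\sigma^2}{(1-\beta)(1+\beta)}(1 + \frac{1}{n-f}) = 2\frac{1-\beta}{1+\beta}\left(1 + \frac{1}{n-f}\right)\sigma^2$, which is exactly the second term in the claimed bound. To recover the first term $2\sigma^2(1-\beta)^2\beta^{2(t-1)}$, I need to be slightly more careful: rather than bounding the geometric sum $\sum_{k=1}^t \beta^{2(t-k)}$ by $\frac{1}{1-\beta^2}$ directly, I should note that the $k=1$ term contributes $\beta^{2(t-1)}$ and that $\sum_{k=1}^{t}\beta^{2(t-k)} = \beta^{2(t-1)} + \sum_{k=2}^t \beta^{2(t-k)}$ — actually the cleanest route is to keep the full finite sum and bound $\sum_{k=1}^t \beta^{2(t-k)} \le \beta^{2(t-1)} + \frac{1}{1-\beta^2}$, or, better, to realize the first term arises from separating out the contribution where the recursion has not yet "mixed." I expect this bookkeeping — getting precisely the stated constants $2$ and the $\beta^{2(t-1)}$ transient term rather than a looser constant — to be the only real obstacle; the structural cancellation and the orthogonality of noise terms are routine. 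One clean way: write $\expect{\norm{\mmt{i}{t}-\AvgMmt{t}}^2}$, use the recursion $\mmt{i}{t}-\AvgMmt{t} = \beta(\mmt{i}{t-1}-\AvgMmt{t-1}) + (1-\beta)(u^{(i)}_t - \overline{u}_t)$, apply conditional expectation to exploit independence of $u_t$ from $\P_t$, and solve the resulting scalar recursion $a_t \le \beta^2 a_{t-1} + (1-\beta)^2 c$ with $a_0 = 0$ and $c = \expect{\norm{u^{(i)}_t - \overline{u}_t}^2} \le (1 + \frac{1}{n-f})\sigma^2$ (actually $c \le 2(1+\frac{1}{n-f})\sigma^2$ after the crude split, which produces the factor $2$), giving $a_t \le (1-\beta)^2 c \sum_{s=0}^{t-1}\beta^{2s} \le \frac{(1-\beta)^2 c}{1-\beta^2} = \frac{1-\beta}{1+\beta}c$; the transient term would instead come from not using $a_0=0$ but a cruder initialization, so I would double-check the authors' exact splitting to match their display verbatim.
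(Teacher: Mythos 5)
Your proposal is correct, and your closing ``clean way'' is in substance the paper's own proof: the paper works with the one-step recursion $\mmt{i}{t} - \AvgMmt{t} = \beta\left(\mmt{i}{t-1} - \AvgMmt{t-1}\right) + (1-\beta)\left(\gradient{i}{t} - \overline{g}_t\right)$, takes $\condexpect{t}{\cdot}$ so the cross term dies because $\condexpect{t}{\gradient{i}{t} - \overline{g}_t} = 0$, bounds $\condexpect{t}{\norm{\gradient{i}{t} - \overline{g}_t}^2} \leq 2\left(1 + \frac{1}{n-f}\right)\sigma^2$ via the crude split you describe, and solves the resulting scalar recursion by a geometric series. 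Your primary route (unrolling the recursion, cancelling the common $\nabla Q(\weight{k})$ part across honest workers, and using cross-step orthogonality of the noise via the tower property, since $u^{(i)}_k$ is determined by the history $\P_{k'}$ for $k < k'$) is a mild repackaging of the same probabilistic content and is equally valid. The one thing to set straight is your worry about ``recovering'' the transient term $2\sigma^2(1-\beta)^2\beta^{2(t-1)}$: you do not need it. That term is non-negative, so your bound $2\left(\frac{1-\beta}{1+\beta}\right)\left(1+\frac{1}{n-f}\right)\sigma^2$ (obtained either from the full unrolled sum with $\sum_{k=1}^{t}\beta^{2(t-k)} \leq \frac{1}{1-\beta^2}$, or by telescoping the recursion all the way to the zero initialization $\mmt{i}{0} = \AvgMmt{0} = 0$) is at least as strong as the stated inequality and hence proves the lemma. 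The transient term in the paper's display is merely an artifact of its bookkeeping: the authors telescope only down to step $1$ and bound $\expect{\norm{\mmt{i}{1} - \AvgMmt{1}}^2} \leq 2\sigma^2(1-\beta)^2$ separately, via Jensen's inequality over the pairwise differences $\gradient{i}{1} - \gradient{j}{1}$ and pairwise independence, which yields the slightly smaller first-step constant $2\sigma^2$ in place of $2\left(1+\frac{1}{n-f}\right)\sigma^2$; it is not an essential feature of the argument.
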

}}
\begin{proof}
Recall that $\H \subseteq \{1, \ldots, \, n\}$ is a set of $n-f$ honest workers, i.e., $\mnorm{\H} = n-f$ and $i \in \H$ only if $\worker{}{i}$ is an honest worker. Also, recall from~\eqref{eqn:def_avg_mmt} that 
\begin{align*}
    \AvgMmt{t} \coloneqq \nicefrac{1}{(n-f)} \sum_{i \in \H} \mmt{i}{t}. 
    % \tag{\ref{eqn:def_avg_mmt}}
\end{align*}
We consider an arbitrary $i \in \H$. For simplicity we define $$\diffmmt{i}{t} \coloneqq \mmt{i}{t} - \AvgMmt{t},$$ and
\begin{align}
    \overline{g}_t \coloneqq \nicefrac{1}{(n-f)} \sum_{j \in \H}\gradient{j}{t}. \label{eqn:over_g}
\end{align}
Now, we consider an arbitrary step $t \in [T]$. Substituting from~\eqref{eqn:mmt_i}, i.e., $\mmt{i}{t} = \beta \, \mmt{i}{t-1} + (1 - \beta) \, \gradient{i}{t}$ for all $i \in \H$, in~\eqref{eqn:def_avg_mmt}, i.e., $\AvgMmt{t} = \nicefrac{1}{(n-f)} \sum_{i \in \H} \mmt{i}{t}$, we obtain that
\begin{align*}
    \AvgMmt{t} = \beta \, \AvgMmt{t-1} + (1 - \beta) \, \overline{g}_t
\end{align*}
where $\AvgMmt{0} = 0$, as $\mmt{i}{0} = 0$ for all honest $\worker{}{i}$ by convention. Thus,
\begin{align}
    \diffmmt{i}{t} = \beta \, \diffmmt{i}{t-1} + (1 - \beta) \left( \gradient{i}{t} - \overline{g}_t\right). \label{eqn:tilde_mmt}
\end{align}
Recall that for any vector $v$, $\norm{v}^2 = \iprod{v}{v}$. From above we obtain that
\begin{align*}
    \norm{\diffmmt{i}{t}}^2 = \beta^2 \norm{\diffmmt{i}{t-1}}^2 + (1 - \beta)^2 \norm{\gradient{i}{t} - \overline{g}_t}^2 + 2 \beta (1-\beta)\, \iprod{\diffmmt{i}{t-1}}{\gradient{i}{t} - \overline{g}_t}.
\end{align*}
Upon taking conditional expectation $\condexpect{t}{\cdot}$ on both sides, and using the fact that $\diffmmt{i}{t-1}$ is a deterministic function of the history $\P_t$, we obtain that
\begin{align*}
    \condexpect{t}{\norm{\diffmmt{i}{t}}^2} & = \beta^2 \condexpect{t}{\norm{\diffmmt{i}{t-1}}^2} + (1 - \beta)^2 \condexpect{t}{\norm{\gradient{i}{t} - \overline{g}_t}^2} + 2 \beta (1-\beta)\, \condexpect{t}{\iprod{\diffmmt{i}{t-1}}{\gradient{i}{t} - \overline{g}_t}} \\
    & = \beta^2 \norm{\diffmmt{i}{t-1}}^2 + (1 - \beta)^2 \condexpect{t}{\norm{\gradient{i}{t} - \overline{g}_t}^2} + 2 \beta (1-\beta)\, \iprod{\diffmmt{i}{t-1}}{\condexpect{t}{\gradient{i}{t} - \overline{g}_t}}.
\end{align*}
Due to Assumption~\ref{asp:bnd_var} and the definition of $\overline{g}_t$ in~\eqref{eqn:over_g}, $\condexpect{t}{\gradient{i}{t} - \overline{g}_t} = \condexpect{t}{\gradient{i}{t}} - \condexpect{t}{\overline{g}_t} = \nabla Q(\weight{t}) - \nabla Q(\weight{t}) = 0$. Thus, from above we obtain that
\begin{align*}
    \condexpect{t}{\norm{\diffmmt{i}{t}}^2} = \beta^2 \norm{\diffmmt{i}{t-1}}^2 + (1 - \beta)^2 \condexpect{t}{\norm{\gradient{i}{t} - \overline{g}_t}^2}.
\end{align*}
Assumption~\ref{asp:bnd_var} also implies that $\condexpect{t}{\norm{\gradient{i}{t} - \nabla Q(\weight{t})}^2} \leq \sigma^2$ for all $i \in \H$. As $\gradient{j}{t}$'s for $j \in \H$ are independent of each other, we have $\condexpect{t}{\norm{\overline{g}_t - \nabla Q(\weight{t})}^2} \leq \nicefrac{\sigma^2}{(n-f)}$. Therefore, $\condexpect{t}{\norm{\gradient{i}{t} - \overline{g}_t}^2} \leq 2 \left(1 + \nicefrac{1}{(n-f)} \right) \sigma^2$. Substituting this above we obtain that
\begin{align*}
    \condexpect{t}{\norm{\diffmmt{i}{t}}^2} \leq \beta^2 \norm{\diffmmt{i}{t-1}}^2 + 2 (1 - \beta)^2  \left(1 + \frac{1}{n-f} \right) \sigma^2.
\end{align*}
Taking total expectation on both sides we obtain that
\begin{align*}
    \expect{\norm{\diffmmt{i}{t}}^2} \leq \beta^2 \expect{\norm{\diffmmt{i}{t-1}}^2} + 2 (1 - \beta)^2  \left(1 + \frac{1}{n-f} \right) \sigma^2.
\end{align*}
As the above holds true for an arbitrary $t \in [T]$, 
% and thus, it is true for all $t \in [T]$. Thus, 
by telescopic expansion we obtain for all $t \in [T]$ that
\begin{align*}
    \expect{\norm{\diffmmt{i}{t}}^2} & \leq \beta^{2(t - 1)} \expect{\norm{\diffmmt{i}{1}}^2} + 2 (1 - \beta)^2 \left(1 + \frac{1}{n-f} \right) \sigma^2  \sum_{\tau = 0}^{t-2}\beta^{2\tau} \\
    & = \beta^{2(t - 1)} \expect{\norm{\diffmmt{i}{1}}^2} + 2 (1 - \beta)^2 \left(1 + \frac{1}{n-f} \right) \sigma^2 \left(\frac{1 - \beta^{2(t-1)}}{1 - \beta^2} \right).
\end{align*}
As $0 \leq \beta < 1$, we have $1 - \beta^{2(t-1)} \leq 1$. Thus, from above we obtain for all $t \in [T]$ that
\begin{align}
    \expect{\norm{\diffmmt{i}{t}}^2} \leq \beta^{2(t - 1)} \expect{\norm{\diffmmt{i}{1}}^2} + 2 \left(\frac{1 - \beta}{1 + \beta}\right)  \left(1 + \frac{1}{n-f} \right) \sigma^2.  \label{eqn:tilde_mmt_telescope}
\end{align}
From~\eqref{eqn:tilde_mmt}, for each $i \in \H$ we have (upon recalling that $\mmt{i}{0} = 0$ for all $i \in \H$),
\begin{align*}
    \diffmmt{i}{1} = ( 1 - \beta) \left( \gradient{i}{1} - \overline{g}_1\right).
\end{align*}
By definition of $\overline{g}_t$ in~\eqref{eqn:over_g}, 
\begin{align*}
    \expect{\norm{\diffmmt{i}{1}}^2} = ( 1 - \beta)^2 \expect{ \norm{\gradient{i}{1} - \overline{g}_1}^2} = ( 1 - \beta)^2 \expect{ \norm{\frac{1}{(n-f)} \sum_{j \in \H}\left(\gradient{i}{1} - \gradient{j}{1}\right)}^2}.
\end{align*}
Thus, by applying Jensen's inequality, 
\begin{align*}
    \expect{\norm{\diffmmt{i}{1}}^2} \leq \frac{( 1 - \beta)^2}{(n-f)} \sum_{j \in \H} \expect{\norm{\gradient{i}{1} - \gradient{j}{1}}^2}.
\end{align*}
By Assumption~\ref{asp:bnd_var}, as gradients of honest workers are pair-wise independent, $\expect{\norm{\gradient{i}{1} - \gradient{j}{1}}^2} \leq 2 \sigma^2$. Substituting this above we obtain that for each $i \in \H$,
\begin{align*}
    \expect{\norm{\diffmmt{i}{1}}^2} \leq  2 \sigma^2(1 - \beta)^2.
\end{align*}
Substituting from above in~\eqref{eqn:tilde_mmt_telescope} proves the lemma, i.e., for all $t \in [T]$,
\begin{align*}
    \expect{\norm{\diffmmt{i}{t}}^2} \leq 2 \sigma^2 \, (1 - \beta)^2 \beta^{2(t - 1)}  + 2 \left(\frac{1 - \beta}{1 + \beta}\right)  \left(1 + \frac{1}{n-f} \right) \sigma^2. 
\end{align*}
\end{proof}

\subsection{Proof of Lemma~\ref{lem:drift}}
\label{app:lem_drift}
 
 \noindent \fcolorbox{black}{gainsboro!40}{
\parbox{0.97\textwidth}{\centering
\begin{lemma*}
Suppose that Assumption~\ref{asp:bnd_var} holds true. Consider Algorithm~\ref{algo} when $F$ is $(f, \, \lambda)$-resilient averaging. For each step $t \in [T]$, we obtain that
\begin{align*}
    \expect{\norm{\drift{t}}^2} \leq 8 \sigma^2 \lambda^2 (n-f) (1 - \beta)^2 \beta^{2(t - 1)} +  8 \left(\frac{1 - \beta}{1 + \beta}\right) \, \left(n - f + 1 \right) \lambda^2  \, \sigma^2.
\end{align*}
\end{lemma*}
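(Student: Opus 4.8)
The plan is to bound $\expect{\norm{\drift{t}}^2}$ by exploiting the definition of $(f,\lambda)$-resilient averaging together with the per-worker drift bound already established in Lemma~\ref{lem:mmt_drift}. Recall $\drift{t} = R_t - \AvgMmt{t}$ where $R_t = F(\mmt{1}{t}, \ldots, \mmt{n}{t})$ and $\AvgMmt{t} = \nicefrac{1}{(n-f)}\sum_{i\in\H}\mmt{i}{t}$, and that $\H$ is a set of $n-f$ honest workers. Since $F$ is $(f,\lambda)$-resilient averaging and $\H$ is a valid choice of the set $S$ in Definition~\ref{def:rational} (it has size $n-f$), we immediately get the deterministic bound
\[
\norm{\drift{t}} = \norm{F(\mmt{1}{t},\ldots,\mmt{n}{t}) - \AvgMmt{t}} \leq \lambda \max_{i,j\in\H}\norm{\mmt{i}{t} - \mmt{j}{t}}.
\]

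Next I would square both sides and relate $\max_{i,j\in\H}\norm{\mmt{i}{t}-\mmt{j}{t}}^2$ to the quantities controlled by Lemma~\ref{lem:mmt_drift}, namely $\expect{\norm{\mmt{i}{t}-\AvgMmt{t}}^2}$. The key inequality is that for any $i,j\in\H$, by the triangle inequality and $(a+b)^2 \leq 2a^2 + 2b^2$,
\[
\norm{\mmt{i}{t}-\mmt{j}{t}}^2 \leq 2\norm{\mmt{i}{t}-\AvgMmt{t}}^2 + 2\norm{\mmt{j}{t}-\AvgMmt{t}}^2 \leq 4\max_{k\in\H}\norm{\mmt{k}{t}-\AvgMmt{t}}^2.
\]
Hence $\max_{i,j\in\H}\norm{\mmt{i}{t}-\mmt{j}{t}}^2 \leq 4\max_{k\in\H}\norm{\mmt{k}{t}-\AvgMmt{t}}^2 \leq 4\sum_{k\in\H}\norm{\mmt{k}{t}-\AvgMmt{t}}^2$. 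Taking total expectation and applying Lemma~\ref{lem:mmt_drift} to each of the $n-f$ terms in the sum gives
\[
\expect{\max_{i,j\in\H}\norm{\mmt{i}{t}-\mmt{j}{t}}^2} \leq 4(n-f)\left(2\sigma^2(1-\beta)^2\beta^{2(t-1)} + 2\left(\frac{1-\beta}{1+\beta}\right)\left(1+\frac{1}{n-f}\right)\sigma^2\right).
\]
Combining with $\expect{\norm{\drift{t}}^2} \leq \lambda^2 \expect{\max_{i,j\in\H}\norm{\mmt{i}{t}-\mmt{j}{t}}^2}$, and simplifying $(n-f)\cdot(1+\tfrac{1}{n-f}) = n-f+1$, yields exactly the claimed bound
\[
\expect{\norm{\drift{t}}^2} \leq 8\sigma^2\lambda^2(n-f)(1-\beta)^2\beta^{2(t-1)} + 8\left(\frac{1-\beta}{1+\beta}\right)(n-f+1)\lambda^2\sigma^2.
\]

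I do not anticipate a serious obstacle here: the proof is essentially a two-line reduction to Lemma~\ref{lem:mmt_drift} via the resilient-averaging definition, plus elementary norm inequalities. The only mild subtlety is making sure the diameter $\max_{i,j\in\H}\norm{\mmt{i}{t}-\mmt{j}{t}}$ is correctly bounded in terms of deviations from the mean (the factor of $4$ above, which produces the factor of $8$ in the final bound), and confirming that the same fixed honest set $\H$ used to define $\AvgMmt{t}$ is a legitimate choice for $S$ in Definition~\ref{def:rational} — which it is, by construction, since $\mnorm{\H}=n-f$ and all its members are honest.
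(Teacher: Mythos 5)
Your proposal is correct and follows essentially the same route as the paper's proof: invoke the $(f,\lambda)$-resilient averaging definition with $S=\H$, bound the momentum diameter by $4\max_{i\in\H}\norm{\mmt{i}{t}-\AvgMmt{t}}^2$ via the triangle inequality, bound the max by the sum over $\H$, and apply Lemma~\ref{lem:mmt_drift} to each of the $n-f$ terms, with the simplification $(n-f)\left(1+\frac{1}{n-f}\right)=n-f+1$ giving the stated constants. No gaps.
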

}}

\begin{proof}
Recall from~\eqref{eqn:R} and~\eqref{eqn:drift}, respectively, that
\begin{align*}
    R_t \coloneqq F\left(\mmt{1}{t}, \ldots, \, \mmt{n}{t} \right) ~ \text{ and } ~ \drift{t} \coloneqq R_t - \AvgMmt{t}.
\end{align*}
% and
% \begin{align*}
%     \drift{t} \coloneqq R_t - \AvgMmt{t}.
% \end{align*}
We consider an arbitrary step $t$. 
% Recall that $\AltMmt{i}{t} = \AvgMmt{t}$ for all $i \in \H$ where $\mnorm{\H} = n-f$. Thus, as $F$ is assumed $f$-rational, $\AvgMmt{t} = F\left(\AltMmt{1}{t}, \ldots, \, \AltMmt{n}{t} \right)$. Therefore, 
% \begin{align*}
%     \norm{\drift{t}}^2 = \norm{R_t - \AvgMmt{t}}^2 = \norm{R_t - F\left(\AltMmt{1}{t}, \ldots, \, \AltMmt{n}{t} \right)}^2.
% \end{align*}
As $F$ is assumed $(f,\lambda)$-resilient averaging, by Definition~\ref{def:rational} we obtain that
\begin{align}
    \norm{\drift{t}}^2 = \norm{R_t - \AvgMmt{t}}^2 \leq \lambda^2 \max_{i, \, j \in \H} \norm{\mmt{i}{t} - \mmt{j}{t}}^2.  \label{eqn:lem_drift_t}
\end{align}
Note that for any pair $i, \, j \in \H$, from triangle inequality we have $\norm{\mmt{i}{t} - \mmt{j}{t}} \leq \norm{\mmt{i}{t} - \AvgMmt{t}} + \norm{\mmt{j}{t} - \AvgMmt{t}}$. As $2ab \leq a^2 + b^2$, we also have $\norm{\mmt{i}{t} - \mmt{j}{t}}^2 \leq 2\norm{\mmt{i}{t} - \AvgMmt{t}}^2 + 2\norm{\mmt{j}{t} - \AvgMmt{t}}^2 \leq 4 \max_{i \in \H} \norm{\mmt{i}{t} - \AvgMmt{t}}^2$. Therefore, 
\begin{align*}
    \max_{i, \, j \in \H} \norm{\mmt{i}{t} - \mmt{j}{t}}^2 \leq 4 \max_{i \in \H} \norm{\mmt{i}{t} - \AvgMmt{t}}^2 .
    % \leq 4 \sum_{i \in \H} \norm{\mmt{i}{t} - \AvgMmt{t}}^2.
\end{align*}
As $\max_{i \in \H} \norm{\mmt{i}{t} - \AvgMmt{t}}^2 \leq \sum_{i \in \H} \norm{\mmt{i}{t} - \AvgMmt{t}}^2$, from above we obtain that
\begin{align*}
    \max_{i, \, j \in \H} \norm{\mmt{i}{t} - \mmt{j}{t}}^2 \leq 4 \sum_{i \in \H} \norm{\mmt{i}{t} - \AvgMmt{t}}^2.
\end{align*}
Substituting from above in~\eqref{eqn:lem_drift_t} we obtain that $\norm{\drift{t}}^2 \leq 4 \lambda^2 \sum_{i \in \H} \norm{\mmt{i}{t} - \AvgMmt{t}}^2$. Upon taking total expectations on both sides we obtain that
\begin{align}
    \expect{\norm{\drift{t}}^2} \leq 4 \lambda^2 \sum_{i \in \H} \expect{\norm{\mmt{i}{t} - \AvgMmt{t}}^2}. \label{eqn:before_lemma_drift}
\end{align}
From Lemma~\ref{lem:mmt_drift}, under Assumption~\ref{asp:bnd_var}, we have for all $i \in \H$ that 
\begin{align*}
    \expect{\norm{\mmt{i}{t} - \AvgMmt{t}}^2} \leq 2 \sigma^2 \, (1 - \beta)^2 \beta^{2(t - 1)}  + 2 \left(\frac{1 - \beta}{1 + \beta}\right)  \left(1 + \frac{1}{n-f} \right) \sigma^2. 
\end{align*}
As $\mnorm{\H} = n-f$, Substituting from above in~\eqref{eqn:before_lemma_drift} proves the lemma, i.e., we obtain that
\begin{align*}
     \expect{\norm{\drift{t}}^2} \leq 8 \lambda^2 \sigma^2 (n-f) (1 - \beta)^2 \beta^{2(t - 1)} + 8 \lambda^2 \left(\frac{1 - \beta}{1 + \beta}\right) \left( n- f + 1 \right) \sigma^2.
\end{align*}
% The rest follows immediately by substituting from Lemma~\ref{lem:mmt_drift} (recall that $\mnorm{\H} = n-f$), i.e., for all $i \in \H$ we have
\end{proof}
\subsection{Proof of Lemma~\ref{lem:dev}}
\label{app:lem_dev}

% We re-state the lemma below for convenience. 

\noindent \fcolorbox{black}{gainsboro!40}{
\parbox{0.97\textwidth}{\centering
\begin{lemma*}
Suppose that assumptions~\ref{asp:lip} and~\ref{asp:bnd_var} hold true. Consider Algorithm~\ref{algo} with $T > 1$. For all $t > 1$ we obtain that
\begin{align*}
    \expect{\norm{\dev{t}}^2} \leq & \beta^2 \zeta_{t-1} \expect{\norm{\dev{t-1}}^2} +  4 \gamma_{t-1}L ( 1 + \gamma_{t-1}L) \beta^2  \expect{\norm{\nabla Q(\weight{t-1})}^2} +(1 - \beta)^2 \frac{\sigma^2}{(n-f)} \\
    & + 2 \gamma_{t-1}L ( 1 + \gamma_{t-1}L)\beta^2  \expect{\norm{\drift{t-1}}^2}.
\end{align*}
where $\zeta_{t} \coloneqq (1 + \gamma_{t}L ) \left(1 + 4 \gamma_{t}  L \right)$.
\end{lemma*}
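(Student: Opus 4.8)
The plan is to convert the Polyak recursion for the average honest momentum into a recursion for $\dev{t}$. Averaging~\eqref{eqn:mmt_i} over $\H$ gives $\AvgMmt{t} = \beta \AvgMmt{t-1} + (1-\beta)\overline{g}_t$ (as already observed in the proof of Lemma~\ref{lem:mmt_drift}), with $\overline{g}_t$ as in~\eqref{eqn:over_g}; since $\AvgMmt{t-1} = \dev{t-1} + \nabla Q(\weight{t-1})$ by~\eqref{eqn:dev}, subtracting $\nabla Q(\weight{t})$ from both sides yields
\[
\dev{t} = \beta\left(\dev{t-1} + \nabla Q(\weight{t-1}) - \nabla Q(\weight{t})\right) + (1-\beta)\left(\overline{g}_t - \nabla Q(\weight{t})\right).
\]
Here the first summand is $\P_t$-measurable (it is a function of $\weight{t-1}$, $\weight{t}$, and the momentums $\mmt{i}{t-1}$, $i\in\H$, all recorded in $\P_t$), whereas the second has zero conditional mean since honest gradients are unbiased, i.e.\ $\condexpect{t}{\gradient{i}{t}} = \nabla Q(\weight{t})$ for $i\in\H$. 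Consequently, expanding $\condexpect{t}{\norm{\dev{t}}^2}$ makes the cross term vanish, leaving $\norm{A}^2 + \condexpect{t}{\norm{B}^2}$ where $A$ is $\beta$ times the first summand and $B = (1-\beta)(\overline{g}_t - \nabla Q(\weight{t}))$. Using pairwise independence of $\{\gradient{i}{t}\}_{i\in\H}$ given $\P_t$ together with Assumption~\ref{asp:bnd_var}, one gets $\condexpect{t}{\norm{\overline{g}_t - \nabla Q(\weight{t})}^2} \le \sigma^2/(n-f)$, hence $\condexpect{t}{\norm{B}^2} \le (1-\beta)^2\sigma^2/(n-f)$, which is precisely the additive noise term in the statement.

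It then remains to control $\norm{A}^2 = \beta^2 \norm{\dev{t-1} + (\nabla Q(\weight{t-1}) - \nabla Q(\weight{t}))}^2$. I would apply Young's inequality $\norm{x+y}^2 \le (1+c)\norm{x}^2 + (1+1/c)\norm{y}^2$ with the tailored choice $c = \gamma_{t-1}L$, so that the coefficient of $\norm{y}^2$ becomes $(1+\gamma_{t-1}L)/(\gamma_{t-1}L)$. Next, Assumption~\ref{asp:lip} gives $\norm{\nabla Q(\weight{t-1}) - \nabla Q(\weight{t})}^2 \le L^2\norm{\weight{t-1} - \weight{t}}^2$, and the update in the rewritten form~\eqref{eqn:sgd_new} gives $\weight{t-1} - \weight{t} = \gamma_{t-1}(\dev{t-1} + \nabla Q(\weight{t-1}) + \drift{t-1})$; the factor $L^2\gamma_{t-1}^2$ then cancels against $1/(\gamma_{t-1}L)$, leaving $(1+\gamma_{t-1}L)\gamma_{t-1}L$. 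Finally I split the remaining square asymmetrically, $\norm{\dev{t-1} + \nabla Q(\weight{t-1}) + \drift{t-1}}^2 \le 2\norm{\drift{t-1}}^2 + 2\norm{\dev{t-1} + \nabla Q(\weight{t-1})}^2 \le 2\norm{\drift{t-1}}^2 + 4\norm{\dev{t-1}}^2 + 4\norm{\nabla Q(\weight{t-1})}^2$, which is exactly what lands the coefficients $4$ on $\norm{\nabla Q(\weight{t-1})}^2$ and $2$ on $\norm{\drift{t-1}}^2$. Collecting the $\norm{\dev{t-1}}^2$ contributions gives $\beta^2(1+\gamma_{t-1}L) + 4\beta^2(1+\gamma_{t-1}L)\gamma_{t-1}L = \beta^2(1+\gamma_{t-1}L)(1+4\gamma_{t-1}L) = \beta^2\zeta_{t-1}$, matching the statement.

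Taking total expectation over $\P_t$ (legitimate since $\gamma_{t-1}$ and $\zeta_{t-1}$ are deterministic) upgrades these pointwise bounds to the claimed inequality. I do not expect a genuine obstacle here: the content is a one-step recursion estimate, and the only non-routine choices are the Young parameter $c = \gamma_{t-1}L$ and the asymmetric $2$--$4$--$4$ split, both reverse-engineered from the target constants; the hypothesis $t>1$ is used only so that step $t-1$ exists and~\eqref{eqn:sgd_new} may be invoked there. The mild subtlety to be careful about is the measurability argument (that the ``deterministic'' summand is indeed a function of $\P_t$) which is what makes the cross term vanish.
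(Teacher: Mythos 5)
Your proposal is correct and follows essentially the same route as the paper's proof: the same recursion for $\dev{t}$, the same vanishing cross term via conditional unbiasedness, the same $\sigma^2/(n-f)$ noise bound, and the same $2$--$4$--$4$ split of $\norm{R_{t-1}}^2 = \norm{\dev{t-1}+\nabla Q(\weight{t-1})+\drift{t-1}}^2$. The only cosmetic difference is that you apply Young's inequality with $c=\gamma_{t-1}L$ before invoking Lipschitz smoothness, whereas the paper applies Lipschitz first and then Cauchy--Schwarz with $2ab\le a^2+b^2$ on the cross term; both yield identical coefficients.
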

}}

\begin{proof}
Recall from Definition~\eqref{eqn:dev} that 
\begin{align*}
    \dev{t} \coloneqq \AvgMmt{t} - \nabla Q\left( \weight{t} \right).
\end{align*}
Consider an arbitrary step $t > 1$. By Definitions~\eqref{eqn:mmt_i} and~\eqref{eqn:def_avg_mmt}, we obtain that
\begin{align*}
    \dev{t} = \beta \, \AvgMmt{t-1} + (1 - \beta) \, \overline{g}_{t} - \nabla Q \left( \weight{t} \right).
\end{align*}
Upon adding and subtracting $\beta \nabla Q(\weight{t-1})$ and $\beta \nabla Q(\weight{t})$ on the R.H.S.~above we obtain that
\begin{align*}
    \dev{t} & = \beta \, \AvgMmt{t-1} - \beta \nabla Q(\weight{t-1}) + (1 - \beta) \, \overline{g}_{t} - \nabla Q \left( \weight{t} \right) + \beta \nabla Q(\weight{t}) + \beta \nabla Q(\weight{t-1}) - \beta \nabla Q(\weight{t}) \\
    & = \beta \left( \AvgMmt{t-1} - \nabla Q(\weight{t-1}) \right) + (1 - \beta) \, \overline{g}_{t} - (1 - \beta) \nabla Q \left( \weight{t} \right) + \beta \left( \nabla Q(\weight{t-1}) - \nabla Q(\weight{t})  \right).
    % & = \beta \dev{t-1} + (1 - \beta) \, \left( \overline{g}_{t} - \nabla Q \left( \weight{t} \right) \right) + \beta \left( \nabla Q(\weight{t-1}) - \nabla Q(\weight{t}) \right).
\end{align*}
As $\AvgMmt{t-1} - \nabla Q(\weight{t-1}) = \dev{t-1}$ (by Definition~\eqref{eqn:dev}), from above we obtain that
\begin{align*}
    \dev{t} = \beta \dev{t-1} + (1 - \beta) \, \left( \overline{g}_{t} - \nabla Q \left( \weight{t} \right) \right) + \beta \left( \nabla Q(\weight{t-1}) - \nabla Q(\weight{t}) \right).
\end{align*}
Therefore, 
\begin{align*}
    \norm{\dev{t}}^2 = & \beta^2 \norm{\dev{t-1}}^2 + (1 - \beta)^2 \norm{ \overline{g}_{t} - \nabla Q \left( \weight{t} \right)}^2 + \beta^2 \norm{\nabla Q(\weight{t-1}) - \nabla Q(\weight{t}) }^2 + 2 \beta (1 - \beta) \iprod{\dev{t-1}}{\overline{g}_{t} - \nabla Q \left( \weight{t} \right)} \\
    & + 2 \beta^2 \iprod{\dev{t-1}}{\nabla Q(\weight{t-1}) - \nabla Q(\weight{t})} + 2 \beta ( 1- \beta) \iprod{\overline{g}_{t} - \nabla Q \left( \weight{t} \right)}{\nabla Q(\weight{t-1}) - \nabla Q(\weight{t})}.
\end{align*}
By taking conditional expectation $\condexpect{t}{\cdot}$ on both sides, and recalling that $\dev{t-1}$, $\weight{t}$ and $\weight{t-1}$ are deterministic values when the history $\P_t$ is given, we obtain that 
\begin{align*}
    \condexpect{t}{\norm{\dev{t}}^2} = & \beta^2 \norm{\dev{t-1}}^2 + (1 - \beta)^2 \condexpect{t}{\norm{ \overline{g}_{t} - \nabla Q \left( \weight{t} \right)}^2} + \beta^2 \norm{\nabla Q(\weight{t-1}) - \nabla Q(\weight{t}) }^2 + 2 \beta (1 - \beta) \iprod{\dev{t-1}}{\condexpect{t}{\overline{g}_{t}} - \nabla Q \left( \weight{t} \right)} \\
    & + 2 \beta^2 \iprod{\dev{t-1}}{\nabla Q(\weight{t-1}) - \nabla Q(\weight{t})} + 2 \beta ( 1- \beta) \iprod{\condexpect{t}{\overline{g}_{t}} - \nabla Q \left( \weight{t} \right)}{\nabla Q(\weight{t-1}) - \nabla Q(\weight{t})}.
\end{align*}
Recall that $\overline{g}_t \coloneqq \nicefrac{1}{(n-f)} \sum_{j \in \H}\gradient{j}{t}$. Thus, owing to Assumption~\ref{asp:bnd_var}, $\condexpect{t}{\overline{g}_{t}} = \nabla Q(\weight{t})$. Using this above we obtain that
\begin{align*}
    \condexpect{t}{\norm{\dev{t}}^2} = & \beta^2 \norm{\dev{t-1}}^2 + (1 - \beta)^2 \condexpect{t}{\norm{ \overline{g}_{t} - \nabla Q \left( \weight{t} \right)}^2} + \beta^2 \norm{\nabla Q(\weight{t-1}) - \nabla Q(\weight{t}) }^2 \\
    & + 2 \beta^2 \iprod{\dev{t-1}}{\nabla Q(\weight{t-1}) - \nabla Q(\weight{t})}.
\end{align*}
Also, by Assumption~\ref{asp:bnd_var} and the fact that $\gradient{j}{t}$'s for $j \in \H$ are independent of each other, we have $\condexpect{t}{\norm{ \overline{g}_{t} - \nabla Q \left( \weight{t} \right)}^2} \leq \frac{\sigma^2}{(n-f)}$. Thus,
\begin{align*}
    \condexpect{t}{\norm{\dev{t}}^2} \leq \beta^2 \norm{\dev{t-1}}^2 + (1 - \beta)^2 \frac{\sigma^2}{(n-f)} + \beta^2 \norm{\nabla Q(\weight{t-1}) - \nabla Q(\weight{t}) }^2 + 2 \beta^2 \iprod{\dev{t-1}}{\nabla Q(\weight{t-1}) - \nabla Q(\weight{t})}.
\end{align*}
By Cauchy-Schwartz inequality, $\iprod{\dev{t-1}}{\nabla Q(\weight{t-1}) - \nabla Q(\weight{t})} \leq \norm{\dev{t-1}} \norm{\nabla Q(\weight{t-1}) - \nabla Q(\weight{t})}$. By Assumption~\ref{asp:lip}, $\norm{ \nabla Q(\weight{t-1}) - \nabla Q(\weight{t})} \leq L \norm{\weight{t} - \weight{t-1}}$. Recall from~\eqref{eqn:SGD} that $\weight{t} = \weight{t-1} - \gamma_{t-1} R_{t-1}$. Thus,\\ $\norm{\nabla Q(\weight{t-1}) - \nabla Q(\weight{t})} \leq \gamma_{t-1} L \norm{R_{t-1}}$. Using this above we obtain that
\begin{align*}
    \condexpect{t}{\norm{\dev{t}}^2} \leq \beta^2 \norm{\dev{t-1}}^2 + (1 - \beta)^2 \frac{\sigma^2}{(n-f)} + \gamma^2_{t-1} \beta^2 L^2 \norm{R_{t-1}}^2 + 2 \gamma_{t-1} \beta^2 L \norm{\dev{t-1}} \norm{R_{t-1}}.
\end{align*}
As $2 ab \leq a^2 + b^2$, from above we obtain that
\begin{align}
    \condexpect{t}{\norm{\dev{t}}^2} & \leq \beta^2 \norm{\dev{t-1}}^2 + (1 - \beta)^2 \frac{\sigma^2}{(n-f)} + \gamma^2_{t-1} \beta^2 L^2 \norm{R_{t-1}}^2 + \gamma_{t-1} L \beta^2 \left( \norm{\dev{t-1}}^2 +  \norm{R_{t-1}}^2\right) \nonumber \\
    & = (1 + \gamma_{t-1}L ) \beta^2 \norm{\dev{t-1}}^2 + (1 - \beta)^2 \frac{\sigma^2}{(n-f)} + \gamma_{t-1}L (1 + \gamma_{t-1}L) \beta^2  \norm{R_{t-1}}^2. \label{eqn:dev_before_ab}
\end{align}
By definition of $R_t$ in~\eqref{eqn:drift}, $R_{t-1} = \drift{t-1} +  \AvgMmt{t-1}$. Thus, owing to the triangle inequality and the fact that $2 ab \leq a^2 + b^2$, we have $\norm{R_{t-1}}^2 \leq 2 \norm{\drift{t-1}}^2 + 2  \norm{\AvgMmt{t-1}}^2$. Similarly, by definition of $\dev{t}$ in~\eqref{eqn:dev}, we have $\norm{\AvgMmt{t-1}}^2 \leq 2 \norm{\dev{t-1}}^2 + 2 \norm{\nabla Q(\weight{t-1})}^2$. Thus, $\norm{R_{t-1}}^2 \leq 2 \norm{\drift{t-1}}^2 + 4  \norm{\dev{t-1}}^2 + 4  \norm{\nabla Q(\weight{t-1})}^2$. 
Using this in~\eqref{eqn:dev_before_ab} we obtain that
\begin{align*}
    \condexpect{t}{\norm{\dev{t}}^2} \leq & (1 + \gamma_{t-1}L ) \beta^2 \norm{\dev{t-1}}^2 + (1 - \beta)^2 \frac{\sigma^2}{(n-f)} \\
    & + 2 \gamma_{t-1}L( 1 + \gamma_{t-1}L)\beta^2  \left( \norm{\drift{t-1}}^2 + 2  \norm{\dev{t-1}}^2 + 2  \norm{\nabla Q(\weight{t-1})}^2 \right).
\end{align*}
By re-arranging the terms on the R.H.S.~we get
\begin{align*}
    \condexpect{t}{\norm{\dev{t}}^2} \leq & \beta^2 (1 + \gamma_{t-1}L ) \left(1 + 4 \gamma_{t-1}  L \right) \norm{\dev{t-1}}^2 +  4 \gamma_{t-1} L( 1 + \gamma_{t-1}L) \beta^2   \norm{\nabla Q(\weight{t-1})}^2 +(1 - \beta)^2 \frac{\sigma^2}{(n-f)} \\
    & + 2 \gamma_{t-1} L( 1 + \gamma_{t-1}L)\beta^2 \norm{\drift{t-1}}^2.
\end{align*}
Substituting $\zeta_{t-1} = (1 + \gamma_{t-1} L) \left(1 + 4 \gamma_{t-1}  L \right)$ above we obtain that
\begin{align*}
    \condexpect{t}{\norm{\dev{t}}^2} \leq & \beta^2 \zeta_{t-1} \norm{\dev{t-1}}^2 +  4 \gamma_{t-1}L ( 1 + \gamma_{t-1}L) \beta^2 \norm{\nabla Q(\weight{t-1})}^2 +(1 - \beta)^2 \frac{\sigma^2}{(n-f)} + 2 \gamma_{t-1}L ( 1 + \gamma_{t-1}L)\beta^2 \norm{\drift{t-1}}^2.
\end{align*}
Recall that $t$ in the above is an arbitrary value in $[T]$ greater than $1$. Hence, upon taking total expectation on both sides above proves the lemma.

\end{proof}

\subsection{Proof of Lemma~\ref{lem:growth_Q}}
\label{app:growth_Q}

% We re-state the lemma below for convenience. 

\noindent \fcolorbox{black}{gainsboro!40}{
\parbox{0.97\textwidth}{\centering
\begin{lemma*} 
Suppose that Assumption~\ref{asp:lip} holds true. Consider Algorithm~\ref{algo}. For all $t \in [T]$, we obtain that
\begin{align*}
    \expect{2 Q(\weight{t+1}) - 2 Q(\weight{t})} \leq & - \gamma_t   \left( 1 - 4 \gamma_t L  \right) \expect{\norm{\nabla Q(\weight{t})}^2}  + 2 \gamma_t   \left( 1 + 2 \gamma_t L   \right) \expect{ \norm{\dev{t}}^2} \\
    & + 2 \gamma_t \left(  1 + \gamma_t L \right) \expect{\norm{\drift{t}}^2}.
\end{align*}
\end{lemma*}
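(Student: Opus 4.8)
The plan is to start from the standard descent lemma for $L$-smooth functions applied along the trajectory of Algorithm~\ref{algo}. Recall from~\eqref{eqn:sgd_new} that $\weight{t+1} = \weight{t} - \gamma_t \AvgMmt{t} - \gamma_t \drift{t} = \weight{t} - \gamma_t R_t$. Under Assumption~\ref{asp:lip}, the descent lemma gives
\[
    Q(\weight{t+1}) \leq Q(\weight{t}) + \iprod{\nabla Q(\weight{t})}{\weight{t+1} - \weight{t}} + \frac{L}{2} \norm{\weight{t+1} - \weight{t}}^2 = Q(\weight{t}) - \gamma_t \iprod{\nabla Q(\weight{t})}{R_t} + \frac{\gamma_t^2 L}{2} \norm{R_t}^2.
\]
So the proof reduces to bounding the linear term $-\gamma_t \iprod{\nabla Q(\weight{t})}{R_t}$ from above and the quadratic term $\norm{R_t}^2$ from above, then combining.

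For the linear term I would write $R_t = \nabla Q(\weight{t}) + \dev{t} + \drift{t}$ (which follows from $R_t = \AvgMmt{t} + \drift{t}$ and $\AvgMmt{t} = \nabla Q(\weight{t}) + \dev{t}$), so that $\iprod{\nabla Q(\weight{t})}{R_t} = \norm{\nabla Q(\weight{t})}^2 + \iprod{\nabla Q(\weight{t})}{\dev{t}} + \iprod{\nabla Q(\weight{t})}{\drift{t}}$. Then I apply the elementary inequality $\iprod{a}{b} \geq -\tfrac12\norm{a}^2 - \tfrac12\norm{b}^2$ (equivalently, $-2\iprod{a}{b} \le \norm{a}^2+\norm{b}^2$) to each of the two cross terms, which yields $-\iprod{\nabla Q(\weight{t})}{R_t} \leq -\norm{\nabla Q(\weight{t})}^2 + \tfrac12\norm{\nabla Q(\weight{t})}^2 + \tfrac12\norm{\dev{t}}^2 + \tfrac12\norm{\nabla Q(\weight{t})}^2 + \tfrac12\norm{\drift{t}}^2 = \tfrac12\norm{\dev{t}}^2 + \tfrac12\norm{\drift{t}}^2$. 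Hmm, that wipes out the negative $\norm{\nabla Q}^2$ term entirely, so I should instead split more carefully, e.g. use Young's inequality with a parameter, $\iprod{a}{b} \ge -\tfrac{\rho}{2}\norm a^2 - \tfrac{1}{2\rho}\norm b^2$, applied with a suitably small $\rho$ on each cross term so that only a fraction of $\norm{\nabla Q(\weight{t})}^2$ is consumed — comparing with the target inequality's coefficient structure ($1 - 4\gamma_t L$ versus $2(1+2\gamma_t L)$ and $2(1+\gamma_t L)$) suggests the right accounting will leave most of $-\norm{\nabla Q(\weight{t})}^2$ intact while the $\gamma_t L$ corrections get absorbed into the quadratic term. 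Actually, re-examining the target: the coefficient of $\expect{\norm{\nabla Q(\weight{t})}^2}$ is $-\gamma_t(1-4\gamma_t L)$, which is close to $-\gamma_t$ for small $\gamma_t$; the discrepancy of $4\gamma_t^2 L$ must come from bounding $\tfrac{\gamma_t^2 L}{2}\norm{R_t}^2$. So the cleanest route is: keep $-\iprod{\nabla Q(\weight{t})}{R_t}$ exactly (as $-\norm{\nabla Q}^2 - \iprod{\nabla Q}{\dev{t}} - \iprod{\nabla Q}{\drift{t}}$), use $-2\iprod{a}{b}\le\norm a^2+\norm b^2$ on the cross terms only half-strength or with the standard form, but crucially route the $\gamma_t L \norm{R_t}^2$ term through $\norm{R_t}^2 \le 2\norm{\AvgMmt{t}}^2 + 2\norm{\drift{t}}^2$ and $\norm{\AvgMmt{t}}^2 \le 2\norm{\nabla Q(\weight{t})}^2 + 2\norm{\dev{t}}^2$, i.e. $\norm{R_t}^2 \le 4\norm{\nabla Q(\weight{t})}^2 + 4\norm{\dev{t}}^2 + 2\norm{\drift{t}}^2$ — exactly the same decomposition used in the proof of Lemma~\ref{lem:dev}.

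Concretely, the assembly I anticipate is: (i) $- \gamma_t \iprod{\nabla Q(\weight{t})}{R_t} \le -\gamma_t\norm{\nabla Q(\weight{t})}^2 + \gamma_t\left(\tfrac12\norm{\nabla Q(\weight{t})}^2 + \tfrac12\norm{\dev{t}}^2\right) + \gamma_t\left(\tfrac12\norm{\nabla Q(\weight{t})}^2 + \tfrac12\norm{\drift{t}}^2\right)$ — no wait, this again cancels the whole $\norm{\nabla Q}^2$. Let me reconsider: the target keeps a full $-\gamma_t\norm{\nabla Q}^2$ (modulo the $4\gamma_t L$ correction), so the cross terms must NOT be bounded symmetrically against $\norm{\nabla Q}^2$. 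The resolution is that $\dev{t}$ and $\drift{t}$ are themselves controlled (in Lemmas~\ref{lem:dev} and~\ref{lem:drift}) to be small — here in Lemma~\ref{lem:growth_Q} we are allowed to just carry $\expect{\norm{\dev{t}}^2}$ and $\expect{\norm{\drift{t}}^2}$ with positive coefficients on the right. So for the cross terms I use $-\iprod{\nabla Q}{\dev{t}} \le \norm{\nabla Q}\norm{\dev{t}}$ and then — no, that still has $\norm{\nabla Q}\norm{\dev{t}}$ which I'd Young-ize. The trick must be $-2\iprod{a}{b} \le \norm a^2 + \norm b^2$ but applied to produce $\iprod{\nabla Q}{\dev{t}} \ge -\tfrac12 \norm{\dev{t}}^2 - \tfrac12\norm{\nabla Q}^2$...

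OK here is the actual clean accounting I would commit to. Start from $-\gamma_t\iprod{\nabla Q(\weight{t})}{R_t} + \tfrac{\gamma_t^2 L}{2}\norm{R_t}^2$. Write $\iprod{\nabla Q(\weight{t})}{R_t} = \tfrac12\norm{\nabla Q(\weight{t})}^2 + \tfrac12\norm{R_t}^2 - \tfrac12\norm{\nabla Q(\weight{t}) - R_t}^2$, so $-\gamma_t\iprod{\nabla Q(\weight{t})}{R_t} = -\tfrac{\gamma_t}{2}\norm{\nabla Q(\weight{t})}^2 - \tfrac{\gamma_t}{2}\norm{R_t}^2 + \tfrac{\gamma_t}{2}\norm{\nabla Q(\weight{t}) - R_t}^2$. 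Now $\nabla Q(\weight{t}) - R_t = -\dev{t} - \drift{t}$, so $\norm{\nabla Q(\weight{t}) - R_t}^2 \le 2\norm{\dev{t}}^2 + 2\norm{\drift{t}}^2$. Thus
\[
    -\gamma_t\iprod{\nabla Q(\weight{t})}{R_t} + \tfrac{\gamma_t^2 L}{2}\norm{R_t}^2 \le -\tfrac{\gamma_t}{2}\norm{\nabla Q(\weight{t})}^2 + \gamma_t\norm{\dev{t}}^2 + \gamma_t\norm{\drift{t}}^2 + \left(\tfrac{\gamma_t^2 L}{2} - \tfrac{\gamma_t}{2}\right)\norm{R_t}^2.
\]
Since $\tfrac{\gamma_t^2 L}{2} - \tfrac{\gamma_t}{2} \le 0$ for $\gamma_t L \le 1$, one could drop that term and get a $-\tfrac{\gamma_t}{2}$ coefficient — but the target has $-\gamma_t(1-4\gamma_t L)$, which is larger in magnitude than $-\tfrac{\gamma_t}{2}$ for small $\gamma_t$. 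So the decomposition must instead \emph{keep} a positive multiple of $\norm{R_t}^2$ and expand it via $\norm{R_t}^2 \le 4\norm{\nabla Q(\weight{t})}^2 + 4\norm{\dev{t}}^2 + 2\norm{\drift{t}}^2$ to feed a negative amount back. I will therefore not use the $-\tfrac12\norm{R_t}^2$ splitting but rather: bound $-\gamma_t\iprod{\nabla Q}{R_t} \le -\gamma_t\norm{\nabla Q}^2 + \gamma_t|\iprod{\nabla Q}{\dev{t}+\drift{t}}|$ and Young each piece at rate that costs only $2\gamma_t L \cdot \norm{\nabla Q}^2$-worth...

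Given the uncertainty above, the honest description of my plan is: I would (1) invoke the descent lemma to get $2Q(\weight{t+1}) - 2Q(\weight{t}) \le -2\gamma_t\iprod{\nabla Q(\weight{t})}{R_t} + \gamma_t^2 L\norm{R_t}^2$; (2) substitute $R_t = \nabla Q(\weight{t}) + \dev{t} + \drift{t}$ into both the inner product and, after using $\norm{R_t}^2 \le 4\norm{\nabla Q(\weight{t})}^2 + 4\norm{\dev{t}}^2 + 2\norm{\drift{t}}^2$ (same bound as in Lemma~\ref{lem:dev}'s proof), into the quadratic term; (3) handle the two cross terms $-2\gamma_t\iprod{\nabla Q(\weight{t})}{\dev{t}}$ and $-2\gamma_t\iprod{\nabla Q(\weight{t})}{\drift{t}}$ with the weighted Young inequality $\pm 2\iprod{a}{b} \le \rho\norm a^2 + \rho^{-1}\norm b^2$, choosing $\rho = 1$ is too lossy so I would choose $\rho$ proportional to $\gamma_t L$ — actually choose $\rho=1$ on one decomposition but compensate; (4) collect coefficients of $\norm{\nabla Q(\weight{t})}^2$, $\norm{\dev{t}}^2$, $\norm{\drift{t}}^2$ and check they match (or are dominated by) $-\gamma_t(1-4\gamma_t L)$, $2\gamma_t(1+2\gamma_t L)$, $2\gamma_t(1+\gamma_t L)$ respectively; (5) take total expectation. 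The main obstacle is step (3)–(4): getting the bookkeeping of the Young-inequality parameters exactly right so that the negative coefficient on $\expect{\norm{\nabla Q(\weight{t})}^2}$ comes out as $-\gamma_t(1-4\gamma_t L)$ and not something weaker, while simultaneously keeping the $\norm{\dev{t}}^2$ and $\norm{\drift{t}}^2$ coefficients no larger than stated — this is pure constant-chasing but the specific form of the target suggests a particular (non-symmetric) choice of splitting that I would have to reverse-engineer from the claimed coefficients.
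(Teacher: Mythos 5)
Your committed plan (descent lemma, the doubling by $2$, the decomposition $R_t=\nabla Q(\weight{t})+\dev{t}+\drift{t}$, the bound $\norm{R_t}^2\leq 4\norm{\nabla Q(\weight{t})}^2+4\norm{\dev{t}}^2+2\norm{\drift{t}}^2$, then collect terms and take expectation) is exactly the route the paper takes. But the one step you explicitly leave unresolved --- how to split the two cross terms so that the gradient coefficient survives as $-\gamma_t(1-4\gamma_t L)$ --- is the only substantive content of the proof, and several of the candidate splittings you float along the way would in fact fail: the symmetric Young bound consumes the entire $-2\gamma_t\norm{\nabla Q(\weight{t})}^2$; the polarization identity only yields $-\tfrac{\gamma_t}{2}\norm{\nabla Q(\weight{t})}^2$; and taking the Young parameter proportional to $\gamma_t L$ would make the $\norm{\dev{t}}^2$ and $\norm{\drift{t}}^2$ coefficients of order $1/L$ rather than the stated $2\gamma_t(1+2\gamma_t L)$ and $2\gamma_t(1+\gamma_t L)$.

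The resolution is a fixed \emph{asymmetric} weight, not a $\gamma_t$-dependent one: apply $2ab\leq \tfrac{1}{c}a^2+c\,b^2$ with $c=2$, i.e.
\begin{align*}
2\mnorm{\iprod{\dev{t}}{\nabla Q(\weight{t})}} \leq \tfrac{1}{2}\norm{\nabla Q(\weight{t})}^2 + 2\norm{\dev{t}}^2,
\qquad
2\mnorm{\iprod{\drift{t}}{\nabla Q(\weight{t})}} \leq \tfrac{1}{2}\norm{\nabla Q(\weight{t})}^2 + 2\norm{\drift{t}}^2,
\end{align*}
so that the two cross terms together cost only $\gamma_t\norm{\nabla Q(\weight{t})}^2$ out of the $-2\gamma_t\norm{\nabla Q(\weight{t})}^2$ produced by the doubled inner product, leaving $-\gamma_t\norm{\nabla Q(\weight{t})}^2$ plus $2\gamma_t\norm{\dev{t}}^2+2\gamma_t\norm{\drift{t}}^2$. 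The entire $+4\gamma_t^2 L$ correction to the gradient coefficient then comes, as you correctly guessed, from $\gamma_t^2 L\norm{\AvgMmt{t}+\drift{t}}^2 \leq \gamma_t^2 L\left(4\norm{\nabla Q(\weight{t})}^2+4\norm{\dev{t}}^2+2\norm{\drift{t}}^2\right)$; adding up gives precisely $-\gamma_t(1-4\gamma_t L)$, $2\gamma_t(1+2\gamma_t L)$, and $2\gamma_t(1+\gamma_t L)$, and total expectation finishes the proof. This is exactly the paper's argument, so the gap is a small but real piece of bookkeeping you had not pinned down rather than a wrong approach.
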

}}

\begin{proof}
% Recall from~\eqref{eqn:sgd_new} that for all $t \in [T]$,
% \begin{align*}
%     \weight{t+1} = \weight{t} - \gamma_t   \, \AvgMmt{t} - \gamma_t \drift{t}, 
% \end{align*}
Consider an arbitrary step $t$. Due to Assumption~\ref{asp:lip} (i.e., Lipschitz continuity of $\nabla Q(\weight{})$), we have (see~\cite{bottou2018optimization})
\begin{align*}
    Q(\weight{t+1}) - Q(\weight{t}) \leq \iprod{\weight{t+1} - \weight{t}}{\nabla Q(\weight{t})} + \frac{L}{2} \norm{\weight{t+1} - \weight{t}}^2.
\end{align*}
Substituting from~\eqref{eqn:sgd_new}, i.e., $\weight{t+1} = \weight{t} - \gamma_t   \, \AvgMmt{t} - \gamma_t \drift{t}$, we obtain that
\begin{align*}
    Q(\weight{t+1}) - Q(\weight{t}) &\leq - \gamma_t  \iprod{\AvgMmt{t}}{\nabla Q(\weight{t})} - \gamma_t \iprod{\drift{t}}{\nabla Q(\weight{t})} + \gamma^2_t \frac{L}{2} \norm{ \, \AvgMmt{t} + \drift{t}}^2. \\
    & = - \gamma_t  \iprod{\AvgMmt{t} - \nabla Q(\weight{t}) + \nabla Q(\weight{t})}{\nabla Q(\weight{t})} - \gamma_t \iprod{\drift{t}}{\nabla Q(\weight{t})} + \gamma^2_t \frac{L}{2} \norm{ \, \AvgMmt{t} + \drift{t}}^2.
\end{align*}
By Definition~\eqref{eqn:dev}, $\AvgMmt{t} - \nabla Q(\weight{t}) = \dev{t}$. Thus, from above we obtain that (scaling by factor of $2$)
\begin{align}
    2 Q(\weight{t+1}) - 2 Q(\weight{t}) \leq - 2 \gamma_t  \norm{\nabla Q(\weight{t})}^2 - 2 \gamma_t  \iprod{\dev{t}}{\nabla Q(\weight{t})} - 2 \gamma_t \iprod{\drift{t}}{\nabla Q(\weight{t})} + \gamma^2_t L \norm{ \, \AvgMmt{t} + \drift{t}}^2. \label{eqn:norm_1}
\end{align}
Now, we consider the last three terms on the R.H.S.~separately. Using Cauchy-Schwartz inequality, and the fact that $2 ab \leq \frac{1}{c} a^2 + c b^2$ for any $c > 0$, we obtain that (by substituting $c = 2$)
\begin{align}
    2 \mnorm{\iprod{\dev{t}}{\nabla Q(\weight{t})}} \leq 2 \norm{\dev{t}} \norm{\nabla Q(\weight{t})} \leq \frac{2}{1} \norm{\dev{t}}^2 + \frac{1}{2} \norm{\nabla Q(\weight{t})}^2 . \label{eqn:rho_1}
\end{align}
Similarly, 
\begin{align}
    2 \mnorm{\iprod{\drift{t}}{\nabla Q(\weight{t})}} \leq 2 \norm{\drift{t}} \norm{\nabla Q(\weight{t})} \leq \frac{ 2}{ 1} \norm{\drift{t}}^2 +  \frac{1}{2} \norm{\nabla Q(\weight{t})}^2. \label{eqn:rho_2}
\end{align}
Finally, using triangle inequality and the fact that $2ab \leq a^2 + b^2$ we have
\begin{align}
    \norm{ \, \AvgMmt{t} + \drift{t}}^2 & \leq 2  \, \norm{\AvgMmt{t}}^2 + 2 \norm{\drift{t}}^2 = 2  \, \norm{\AvgMmt{t} - \nabla Q(\weight{t}) + \nabla Q(\weight{t})}^2 + 2 \norm{\drift{t}}^2 \nonumber \\
    & \leq 4  \, \norm{\dev{t}}^2 + 4  \, \norm{\nabla Q(\weight{t})}^2 + 2 \norm{\drift{t}}^2. \quad \quad [since ~ ~ \AvgMmt{t} - \nabla Q(\weight{t}) = \dev{t}] \label{eqn:last_term}
\end{align}
Substituting from~\eqref{eqn:rho_1},~\eqref{eqn:rho_2} and~\eqref{eqn:last_term} in~\eqref{eqn:norm_1} we obtain that
\begin{align*}
    2 Q(\weight{t+1}) - 2 Q(\weight{t}) \leq & - 2 \gamma_t  \norm{\nabla Q(\weight{t})}^2 + \gamma_t  \left( 2 \norm{\dev{t}}^2 + \frac{1}{2}\norm{\nabla Q(\weight{t})}^2 \right) + \gamma_t \left( 2 \norm{\drift{t}}^2 + \frac{1}{2} \norm{\nabla Q(\weight{t})}^2 \right) \nonumber \\
    & + \gamma^2_t L \left( 4  \, \norm{\dev{t}}^2 + 4  \, \norm{\nabla Q(\weight{t})}^2 + 2 \norm{\drift{t}}^2 \right).
\end{align*}
Upon re-arranging the terms in the R.H.S.~we obtain that
\begin{align*}
    2 Q(\weight{t+1}) - 2 Q(\weight{t}) \leq - \gamma_t  \left( 1 - 4 \gamma_t L \right) \norm{\nabla Q(\weight{t})}^2  + 2 \gamma_t  \left( 1 + 2 \gamma_t L  \right) \norm{\dev{t}}^2 + 2 \gamma_t \left(  1 + \gamma_t L \right) \norm{\drift{t}}^2.
\end{align*}
As $t$ is arbitrarily chosen from $[T]$, taking expectation on both sides above proves the lemma.
% As $t$ is arbitrarily chosen, the above proves the lemma.
\end{proof}
\subsection{Proof of Theorem~\ref{thm:main_conv}}
\label{app:main_conv}

We recall the theorem statement below for convenience.
% we denote $Q^* = \min_{\weight{} \in \R^d} Q(\weight{})$. 

\noindent \fcolorbox{black}{gainsboro!40}{
\parbox{0.98\textwidth}{\centering
\begin{theorem*}
Suppose that assumptions~\ref{asp:lip} and~\ref{asp:bnd_var} hold true. Let us denote
\begin{align}
    Q^* = \min_{\weight{} \in \R^d} Q(\weight{}), ~ a_o = 4 \left(2 \left(Q(\weight{1}) - Q^* \right) + \frac{1}{8L} \norm{\nabla Q \left( \weight{1} \right)}^2 \right), ~ a_1 = 6912L, ~ \text{ and } 
    a_2 =288L. \label{eqn:a1a2}
\end{align}
Consider Algorithm~\ref{algo} with an $(f,\lambda)$-resilient averaging rule and a constant learning rate of $\gamma$. Specifically, for all $t$, $\gamma_t = \gamma$ where
\begin{align}
    \gamma =  \left( \sqrt{\frac{a_o (n-f)}{a_1 \lambda^2 (n-f)^2 + a_2} } \right) \frac{1}{\sigma \sqrt{T}}.  \label{eqn:step-size}
\end{align}
If $T \geq \frac{a_o L }{12 \sigma^2 \lambda^2 (n - f)}$ and $\beta = \sqrt{1 - 24 \gamma L} $, then
% for all $\vartheta \in (0, \, 1)$ and $\varrho \in (1, \, 2)$ the following holds true:
% \footnotesize
\begin{align*}
   \expect{\norm{\nabla Q\left(\widehat{\weight{}} \right)}^2} \leq 2 \sqrt{ \left( a_1 \lambda^2 (n - f) + \frac{ a_2}{n-f}\right)  \frac{a_o \sigma^2}{T} } + \left( \frac{a_2 \sigma}{n-f}\right) \left( \sqrt{\frac{a_o (n-f)}{a_1 \lambda^2 (n-f) + a_2} } \right) \frac{1}{T^{\nicefrac{3}{2}}}.
\end{align*}
% such that $1 - \beta^2 = \varrho \gamma^{1-\vartheta}$ where $\vartheta \in (0, \, 1)$. If 
\end{theorem*}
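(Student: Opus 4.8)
The plan is to combine the three auxiliary lemmas (Lemma~\ref{lem:drift}, Lemma~\ref{lem:dev}, Lemma~\ref{lem:growth_Q}) through the Lyapunov function
\begin{align*}
    V_t \coloneqq \expect{2 Q(\weight{t}) + \tfrac{1}{8L^2}\norm{\AvgMmt{t} - \nabla Q(\weight{t})}^2}
\end{align*}
announced in the idea of proof, now specialized to the constant step size $\gamma_t = \gamma$. First I would write, in expectation, $V_{t+1} - V_t$ as the sum of $\expect{2Q(\weight{t+1}) - 2Q(\weight{t})}$ — bounded by Lemma~\ref{lem:growth_Q} — and $\tfrac{1}{8L^2}\bigl(\expect{\norm{\dev{t+1}}^2} - \expect{\norm{\dev{t}}^2}\bigr)$ — bounded via the one-step recursion of Lemma~\ref{lem:dev} applied at index $t+1$ (noting that with constant $\gamma$ the quantity $\zeta_t$ becomes a constant $\zeta = (1+\gamma L)(1+4\gamma L)$). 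Using $\norm{v+w}^2 \le 2\norm v^2 + 2\norm w^2$ where needed, this yields a one-step inequality of the schematic form
\begin{align*}
    V_{t+1} - V_t \le -c_1\gamma\expect{\norm{\nabla Q(\weight{t})}^2} + c_2\expect{\norm{\dev{t}}^2} + c_3\gamma\expect{\norm{\drift{t}}^2} + \frac{(1-\beta)^2\sigma^2}{8L^2(n-f)},
\end{align*}
whose coefficients depend only on $\gamma L$ and $\beta$. The role of the momentum coefficient is precisely to neutralize the deviation term: I would check that $\beta = \sqrt{1-24\gamma L}$ (well defined in $[0,1)$ exactly when $24\gamma L \le 1$, which holds under the hypothesis $T \ge \tfrac{a_o L}{12\sigma^2\lambda^2(n-f)}$ once $\gamma$ is plugged in, as already observed in the idea of proof) forces $c_2 \le 0$. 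Equivalently, one solves the geometric recursion of Lemma~\ref{lem:dev} for $\sum_{t=1}^T\expect{\norm{\dev{t}}^2}$ — its contraction factor $\beta^2\zeta = 1 - 19\gamma L - \mathcal{O}(\gamma^2L^2) < 1$ — and substitutes the result into the telescoped Lemma~\ref{lem:growth_Q}, the key point being that the feedback factor it contributes back to the $\expect{\norm{\nabla Q}^2}$ budget stays strictly below $\tfrac12$, so that a positive net multiple of $\gamma\sum_t\expect{\norm{\nabla Q(\weight{t})}^2}$ survives, uniformly in $L$.

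Next I would telescope over $t = 1,\dots,T$. On the left, $V_{T+1} \ge 2Q^*$ since $Q \ge Q^*$ and $\norm{\dev{T+1}}^2 \ge 0$. On the right, $V_1$ is bounded using $\AvgMmt{1} = (1-\beta)\overline{g}_1$ (because $\AvgMmt{0}=0$), so $\dev{1} = (1-\beta)\overline{g}_1 - \nabla Q(\weight{1})$ and, taking expectations and using $\expect{\overline{g}_1} = \nabla Q(\weight{1})$ together with Assumption~\ref{asp:bnd_var}, $\expect{\norm{\dev{1}}^2} = (1-\beta)^2\expect{\norm{\overline{g}_1 - \nabla Q(\weight{1})}^2} + \beta^2\norm{\nabla Q(\weight{1})}^2 \le (1-\beta)^2\tfrac{\sigma^2}{n-f} + \norm{\nabla Q(\weight{1})}^2$; this is where the $Q(\weight{1}) - Q^*$ and $\norm{\nabla Q(\weight{1})}^2$ contributions to $a_o$ come from. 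For the accumulated drift I would invoke Lemma~\ref{lem:drift}: summing its geometric term gives at most $8\sigma^2\lambda^2(n-f)(1-\beta)^2\sum_{t\ge 0}\beta^{2t} = 8\sigma^2\lambda^2(n-f)\tfrac{1-\beta}{1+\beta}$, and its constant term contributes $8T\tfrac{1-\beta}{1+\beta}(n-f+1)\lambda^2\sigma^2$. Then I would use $\tfrac{1-\beta}{1+\beta} = \tfrac{1-\beta^2}{(1+\beta)^2}$ with $1-\beta^2 = 24\gamma L$ and $\beta \ge 0$ to replace every $1-\beta$, $(1-\beta)^2$, $\tfrac{1-\beta}{1+\beta}$ by $24\gamma L$, $(24\gamma L)^2$, $24\gamma L$ respectively, and the crude bounds $\gamma L \le \tfrac1{24}$, $\beta^2 \le 1$, $n-f+1 \le 2(n-f)$ to tidy up the rest. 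Since $\widehat{\weight{}}$ is drawn uniformly from $\{\weight{1},\dots,\weight{T}\}$, $\expect{\norm{\nabla Q(\widehat{\weight{}})}^2} = \tfrac1T\sum_{t=1}^T\expect{\norm{\nabla Q(\weight{t})}^2}$, so after dividing through by $c_1\gamma T$ I would land on a bound of the shape
\begin{align*}
    \expect{\norm{\nabla Q(\widehat{\weight{}})}^2} \le \frac{A}{\gamma T} + B\gamma ,
\end{align*}
with $A$ a $\gamma$-free quantity of order $a_o$ and $B$ of order $\sigma^2\bigl(L\lambda^2(n-f) + \tfrac{L}{n-f}\bigr)$, collecting the drift and the $(1-\beta)^2\sigma^2/(n-f)$ noise terms. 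Carrying the numerical constants faithfully through this step is what produces $a_1 = 6912L$ and $a_2 = 288L$.

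Finally I would substitute the prescribed $\gamma = \bigl(\sqrt{\tfrac{a_o(n-f)}{a_1\lambda^2(n-f)^2 + a_2}}\bigr)\tfrac{1}{\sigma\sqrt T}$, which up to the constant is the minimizer of $\tfrac{A}{\gamma T} + B\gamma$: it equalizes $\tfrac{A}{\gamma T}$ and $B\gamma$ at the common order $\sqrt{\bigl(a_1\lambda^2(n-f) + \tfrac{a_2}{n-f}\bigr)\tfrac{a_o\sigma^2}{T}}$, the leading term of the claimed inequality. The residual $\gamma$-dependent remainder in $A$ (e.g. the $(1-\beta)^2\sigma^2$ piece of $V_1$, itself $\mathcal{O}(\gamma^2)$, and the analogous piece of the drift sum) then contributes the lower-order $T^{-3/2}$ correction $\bigl(\tfrac{a_2\sigma}{n-f}\bigr)\sqrt{\tfrac{a_o(n-f)}{a_1\lambda^2(n-f)+a_2}}$. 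The hypothesis $T \ge \tfrac{a_o L}{12\sigma^2\lambda^2(n-f)}$ serves two purposes: to make $\beta = \sqrt{1-24\gamma L}$ well defined, and to ensure $\gamma L$ is small enough that all the $(1+\gamma L)$ and $\beta^2 \le 1$ slack factors collapse into the clean constants above.

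\textbf{Main obstacle.} The delicate part is not any single inequality but the coupled bookkeeping around the momentum coefficient: one must pick $\beta$ so that the deviation recursion of Lemma~\ref{lem:dev} is at once contractive enough to be summable and weak enough that its feedback into the descent budget of Lemma~\ref{lem:growth_Q} leaves a strictly positive net coefficient on $\sum_t\expect{\norm{\nabla Q(\weight{t})}^2}$ independently of $L$ — the value $\beta = \sqrt{1-24\gamma L}$ being exactly the threshold at which the tension between controlling the momentum drift (which wants $\beta$ large) and controlling the momentum deviation (which wants $\beta$ small) is resolved — and then track every constant so the resulting expression collapses precisely to the stated bound with $a_1 = 6912L$ and $a_2 = 288L$.
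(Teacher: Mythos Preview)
Your proposal is correct and follows essentially the same route as the paper's proof: combine Lemma~\ref{lem:growth_Q} with the one-step recursion of Lemma~\ref{lem:dev} through the Lyapunov function, choose $\beta = \sqrt{1-24\gamma L}$ so that the coefficient on $\expect{\norm{\dev{t}}^2}$ becomes nonpositive (the paper calls this coefficient $B$ and shows $B \le 0$), telescope, control $\sum_t \expect{\norm{\drift{t}}^2}$ via Lemma~\ref{lem:drift} together with $\tfrac{1-\beta}{1+\beta} \le 1-\beta^2 = 24\gamma L$, bound $V_1 - V_{T+1}$ exactly as you describe, and finally substitute the prescribed $\gamma$. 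One small correction: the Lyapunov weight in the actual proof is $z = \tfrac{1}{8L}$, not $\tfrac{1}{8L^2}$ (the latter is a typo in the paper's idea-of-proof paragraph); with $\tfrac{1}{8L^2}$ the computation of your $c_2$ would not close at $\beta^2 = 1 - 24\gamma L$ and the constants would not reduce to $a_1 = 6912L$, $a_2 = 288L$.
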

}}
\begin{proof}
Define 
\begin{equation}
    \gamma_o := \frac{1}{18L}.
\end{equation}
Note that as specified in the theorem statement,
\begin{align*}
    T \geq \frac{a_o L }{12 \sigma^2 \lambda^2 (n - f)} \geq \frac{576 a_o L^2}{ 12 \times 576  \sigma^2 \lambda^2 (n - f) L} > \frac{576 a_o L^2 (n - f)}{6912  \sigma^2 \lambda^2 (n - f)^2 L + 288L \sigma^2} = \frac{576L^2 a_o (n-f)}{ \left( a_1 \lambda^2 (n-f)^2 + a_2 \right) \sigma^2}
\end{align*}
% \begin{align*}
%     T > \frac{a_o \max\left\{ \left( 20L^2 + 1 \right)^2, \, (1/\gamma_o)^2  \right\} }{128 \sigma^2 \lambda^2} > \frac{(1/\gamma_o)^2 a_o}{512 \sigma^2 \lambda^2} > \frac{(1/\gamma_o)^2 a_o (n-f)}{ \left( 512 \lambda^2 (n-f) + 64 \right) \sigma^2} \geq \frac{(1/\gamma_o)^2 a_o (n-f)}{ \left( a_1 \lambda^2 (n-f) + a_2 \right) \sigma^2}.
% \end{align*}
This implies that for the learning rate $\gamma$ defined in~\eqref{eqn:step-size},
\begin{align}
     \gamma = \left( \sqrt{\frac{a_o(n-f)}{a_1 \lambda^2 (n-f)^2 + a_2} } \right) \frac{1}{\sigma \sqrt{T}} < \frac{1}{24L} < \frac{1}{18L} = \gamma_o.
     \label{eqn:gamma_gamma_o}
\end{align}
% Thus,
% \begin{align}
%     \gamma = \min \left\{ \left( \sqrt{\frac{a_o (n-f)}{a_1 \lambda^2 (n-f) + a_2} } \right) \frac{1}{\sigma \sqrt{T}}, ~ \gamma_o \right\} = \left( \sqrt{\frac{a_o (n-f)}{a_1 \lambda^2 (n-f) + a_2} } \right) \frac{1}{\sigma \sqrt{T}} \leq \gamma_o. \label{eqn:gamma_gamma_o}
% \end{align}
% Similarly, we also obtain that
% \begin{align*}
%     T > \frac{a_o \max\left\{ \left( 20L^2 + 1 \right)^2, \, (1/\gamma_o)^2  \right\} }{128 \sigma^2 \lambda^2} > \frac{ 4 a_o \left( 20L^2 + 1 \right)^2}{512 \sigma^2 \lambda^2 } > \frac{4 a_o \left( 20L^2 + 1 \right)^2}{\left( 512 \lambda^2 (n-f) + 64 \right) \sigma^2 } >  \frac{4 a_o \left( 20L^2 + 1 \right)^2 (n-f)}{ \left( a_1 \lambda^2 (n-f) + a_2 \right) \sigma^2}.
% \end{align*}
This also implies 
\begin{align*}
    24 \gamma L  < 1
\end{align*}
Therefore $\beta = \sqrt{1 - 24  \gamma L}$ (as defined in the theorem) is a well-defined real value in $(0, 1)$.\\

To obtain the convergence result we 
% define $z = \frac{1}{8L^2}$, and 
define the Lyapunov function to be
\begin{align}
    V_t \coloneqq \expect{2 Q(\weight{t}) + z \norm{\dev{t}}^2} ~ \text{ and } z = \frac{1}{8L}. \label{eqn:lyap_func}
\end{align}
We consider an arbitrary $t \in [T]$. \\

{\bf Invoking Lemma~\ref{lem:dev}.} Upon substituting $\gamma_t = \gamma$ in Lemma~\ref{lem:dev}, we obtain that 
\begin{align}
    \expect{ z \norm{\dev{t+1}}^2 - z \norm{\dev{t}}^2} \leq & z \beta^2 \zeta \expect{\norm{\dev{t}}^2} +  4 z \gamma L ( 1 + \gamma L) \beta^2   \expect{\norm{\nabla Q(\weight{t})}^2} + z (1 - \beta)^2 \frac{\sigma^2}{n-f} \nonumber \\
    & + 2 z \gamma L ( 1 + \gamma L)\beta^2 \expect{\norm{\drift{t}}^2} - z \expect{\norm{\dev{t}}^2}. \label{eqn:dev_gamma}
\end{align}
Recall that 
\begin{align}
    \zeta = (1 + \gamma L) \left(1 + 4 \gamma  L \right) = 1 + 5 \gamma L + 4 \gamma^2  L^2. \label{eqn:zeta_expand}
\end{align}

{\bf Invoking Lemma~\ref{lem:growth_Q}.} By the same substitution in Lemma~\ref{lem:growth_Q} we obtain that
\begin{align}
    \expect{2 Q(\weight{t+1}) - 2 Q(\weight{t})} \leq & - \gamma  \left( 1 - 4 \gamma L \right) \expect{\norm{\nabla Q(\weight{t})}^2}  + 2 \gamma  \left( 1 + 2 \gamma L  \right) \expect{ \norm{\dev{t}}^2} \nonumber \\
    & + 2 \gamma \left(  1 + \gamma L \right) \expect{\norm{\drift{t}}^2} \label{eqn:growth_gamma}
\end{align}
Substituting from~\eqref{eqn:dev_gamma} and~\eqref{eqn:growth_gamma} in~\eqref{eqn:lyap_func} we obtain that
% Under Assumptions~\ref{asp:bnd_var},~\ref{asp:lip},~\ref{asp:zero_F} and~\ref{asp:lip_F}, from Lemmas~\ref{lem:dev} and~\ref{lem:growth_Q} we obtain that
\begin{align}
    V_{t+1} - V_t = & \expect{2 Q(\weight{t+1}) - 2 Q(\weight{t})} + \expect{ z \norm{\dev{t+1}}^2 - z \norm{\dev{t}}^2} \nonumber \\
    \leq & - \gamma  \left( 1 - 4 \gamma L \right) \expect{\norm{\nabla Q(\weight{t})}^2}  + 2 \gamma  \left( 1 + 2 \gamma L  \right) \expect{ \norm{\dev{t}}^2} + 2 \gamma \left(  1 + \gamma L \right) \expect{\norm{\drift{t}}^2} \nonumber \\
    & +  z \beta^2 \zeta \expect{\norm{\dev{t}}^2} +  4 z \gamma L ( 1 + \gamma L) \beta^2   \expect{\norm{\nabla Q(\weight{t})}^2} + z (1 - \beta)^2 \frac{\sigma^2}{n-f} \nonumber \\
    & + 2 z \gamma L ( 1 + \gamma L)\beta^2 \expect{\norm{\drift{t}}^2} - z \expect{\norm{\dev{t}}^2}. \label{eqn:Vt-t}
\end{align}
% \textcolor{red}{Try $\beta \propto \frac{1}{1 + \sqrt{\rho_o}}$.\\}
% \textcolor{red}{We can take $1 - \beta^2 = c_1  \sqrt{\gamma_t}$ and $z_t = \sqrt{\gamma_t}  \left( \frac{1}{2 \rho_1} + 2 \gamma_t L  \right)$.\\}
Upon re-arranging the R.H.S.~in~\eqref{eqn:Vt-t} we obtain that
\begin{align*}
    V_{t+1} - V_t \leq & - \gamma \left(  \left( 1 - 4 \gamma L \right) - 4 z L( 1 + \gamma L) \beta^2   \right) \expect{\norm{\nabla Q(\weight{t})}^2} +  z (1 - \beta)^2 \frac{\sigma^2}{n-f} \nonumber \\
    & + \left( 2 \gamma  \left( 1 + 2 \gamma L  \right) +  z \beta^2 \zeta - z \right)  \expect{\norm{\dev{t}}^2}  + 2 \gamma \left( 1 + \gamma L + zL (1 + \gamma L) \beta^2 \right) \expect{\norm{\drift{t}}^2}. 
\end{align*}
For simplicity, we define
\begin{align}
    A \coloneqq \left( 1 - 4 \gamma L \right) - 4 z L( 1 + \gamma L) \beta^2 , \label{eqn:def_At}
\end{align}
\begin{align}
    B \coloneqq 2 \gamma  \left( 1 + 2 \gamma L  \right) +  z \beta^2 \zeta - z, \label{eqn:def_Bt}
\end{align}
and
\begin{align}
    C \coloneqq 2 \gamma \left( 1 + \gamma L + zL (1 + \gamma L) \beta^2 \right). \label{eqn:def_Ct}
\end{align}
Thus,
\begin{align}
    V_{t+1} - V_t \leq - A \gamma \expect{\norm{\nabla Q(\weight{t})}^2} + B  \expect{\norm{\dev{t}}^2}  + C  \expect{\norm{\drift{t}}^2} +  z (1 - \beta)^2 \frac{\sigma^2}{n-f}. \label{eqn:after_At}
\end{align}
We now analyse below the terms $A$, $B$ and $C$.\\

{\bf Term $A$.} Recall from~\eqref{eqn:gamma_gamma_o} that $ \gamma \leq \gamma_{o} = \frac{1}{18L} $. Upon using this in~\eqref{eqn:def_At}, and the facts that $z = \frac{1}{8L}$ and $\beta^2 < 1$,  we obtain that
\begin{align}
    A \geq 1 - 4 \gamma_o L  - \frac{4L}{8L} ( 1 + \gamma_o L ) \geq \frac{1}{2} - \frac{9\gamma_oL}{2}\geq \frac{1}{4}. \label{eqn:At_3}
\end{align}
% Recall that $\gamma_o \leq \frac{1}{2(8L + 1)}$. Thus, from above we obtain that
% \begin{align}
%     A \geq 
% \end{align}
% Note that, as $\gamma_o > 0$, 
% % Owing to~\eqref{eqn:gamma_o} we have
% \[4 L  \gamma^{\vartheta}_o \left( \gamma^{1 - \vartheta}_o + 2 L  ( 1 + \gamma_o)\right) \leq 2 \gamma^{\vartheta}_o \left(1 + 2 L  \left( \gamma^{1 - \vartheta}_o + 2 L  \left( 1 + \gamma_o \right) \right) \right).\]
% Thus, substituting from~\eqref{eqn:gamma_o} above we have $4 L  \gamma^{\vartheta}_o \left( \gamma^{1 - \vartheta}_o + 2 L  ( 1 + \gamma_o)\right) \leq \varrho - 1$. Using this in~\eqref{eqn:At_2} we obtain that
% \begin{align}
%     A \geq \gamma  ( 2 - \varrho). \label{eqn:At_3}
% \end{align}

{\bf Term $B$.} 
% Similarly, substituting $z = 2  \gamma^{\vartheta}$ in~\eqref{eqn:def_Bt} we obtain that
% \begin{align*}
%     B = 2 \gamma  \left( 1 + 2 \gamma L  \right) +  2  \gamma^{\vartheta} \beta^2 \zeta - 2  \gamma^{\vartheta}.
%     % \leq 2 \gamma  \left( 1 + 2 \gamma L  \right) +  2  \gamma^{\vartheta} \beta^2 \zeta - 2  \gamma^{\vartheta}.
% \end{align*}
Substituting $\zeta$ from~\eqref{eqn:zeta_expand} in~\eqref{eqn:def_Bt} we obtain that
\begin{align*}
    B &= 2\gamma \left( 1 + 2 \gamma L \right) + z \beta^2 \left( 1 + 5 \gamma L + 4 \gamma^2  L^2 \right) - z \\
    & =  - \left(1 - \beta^2 \right) z +\gamma \left( 2 + 4\gamma L +5 z \beta^2 L + 4 z \beta^2 L \gamma L \right).
\end{align*}
Using the facts that $\beta^2 \leq 1$ and $\gamma \leq \gamma_o \leq \frac{1}{18L}$, and then substituting  $z = \frac{1}{8L}$ we obtain that
\begin{align}\nonumber
    B &\leq \frac{-(1-\beta^2)}{8L} + \gamma \left( 2 + \frac{4}{18} + \frac{5}{8} + \frac{4}{18 \times 8} \right)  \leq \frac{-(1-\beta^2)}{8L} + 3 \gamma\\
    &\leq  \frac{-(1-\beta^2) + 24 \gamma L}{8L} = 0, \label{eqn:Bt_2} 
\end{align}
where in the last equality we used the fact that $1 - \beta^2 = 24 \gamma L$.
% Recall that $\gamma \leq \gamma_o = \frac{1}{2(8L + 1)}$. Thus, $\gamma \leq \frac{\left( \frac{5}{2} + \frac{1}{4L^2} \right)}{2(8L + 1)}  = \frac{\left(20L^2 + 1 \right)}{8 L^2 \left(8L + 1 \right)}$. Using this above we obtain that
% \begin{align}
%     B \leq - \frac{\left(20 L^2 + 1\right) \gamma }{16L^2} \leq 0. \label{eqn:Bt_2}
% \end{align}
% As $\beta = \sqrt{ 1 - \varrho \gamma^{1 - \vartheta}}$, we have $ 1 - \beta^2 = \varrho \gamma^{1 - \vartheta}$. Substituting this above we obtain that
% \begin{align*}
%     B & \leq - 2 (\varrho - 1)  \gamma  + 4 \gamma^2 L  + 2  \gamma^{1 + \vartheta} +  8  L^2 \gamma^{1 + \vartheta} (1 + \gamma) = - 2  \gamma \left( \varrho - 1 - 2 L  \gamma - \gamma^{\vartheta} - 4 L^2  \gamma^{\vartheta} ( 1 + \gamma)\right) \\
%     & = - 2  \gamma \left( \varrho - 1 - \gamma^{\vartheta} \left(1 + 2 L  \left( \gamma^{1 - \vartheta} + 2 L  \left( 1 + \gamma \right) \right) \right)\right)
% \end{align*}
% Using the fact that $0 < \gamma \leq \gamma_o$ and the bound in~\eqref{eqn:gamma_o} we obtain that
% \begin{align}
%     B \leq - 2  \gamma \left( \varrho - 1 - \gamma_o^{\vartheta} \left(1 + 2 L  \left( \gamma^{1 - \vartheta}_o + 2 L  \left( 1 + \gamma_o \right) \right) \right)\right) \leq - 2  \gamma \left(\varrho - 1 - \nicefrac{(\varrho - 1)}{2}\right) \leq -  \gamma (\varrho - 1). \label{eqn:Bt_2}
% \end{align}

{\bf Term $C$.} Substituting $z = \frac{1}{8L}$ in~\eqref{eqn:def_Ct}, and then using the fact that $\beta^2 < 1$, we obtain that
\begin{align*}
    C = 2 \gamma \left( 1 + \gamma L + \frac{1}{8} (1 + \gamma L) \right) \leq \frac{9 \gamma}{4} \left( 1 +\gamma L \right).
\end{align*}
As $\gamma \leq \gamma_o \leq \frac{1}{18 L}$, from above we obtain that
% \begin{align*}
%     C \leq 2 \gamma \left(  1 + \gamma_o L + 2  \gamma^{\vartheta}_o ( 1 + \gamma_o) L^2 \right) = 2 \gamma \left( 1 + L  \gamma^{\vartheta}_o \left( \gamma^{1- \vartheta}_o + 2 L  \left(1 + \gamma_o\right)\right) \right).
% \end{align*}
% As $\gamma_o > 0$, from condition~\eqref{eqn:gamma_o} we have $L  \gamma^{\vartheta}_o \left( \gamma^{1- \vartheta}_o + 2 L  \left(1 + \gamma_o\right)\right) \leq \frac{\varrho - 1}{4}$. Thus, 
\begin{align}
    C \leq \frac{9 \gamma}{4} \left( 1 +\frac{1}{18} \right) \leq 3 \gamma. \label{eqn:Ct_2}
\end{align}

{\bf Combining terms $A$, $B$ and $C$.} Finally, substituting from~\eqref{eqn:At_3},~\eqref{eqn:Bt_2} and~\eqref{eqn:Ct_2} in~\eqref{eqn:after_At} (and recalling that $z = \frac{1}{8L}$) we obtain that
\begin{align*}
    V_{t+1} - V_t \leq - \frac{ \gamma}{4} \expect{\norm{\nabla Q(\weight{t})}^2} + 3\gamma \expect{\norm{\drift{t}}^2} +   (1 - \beta)^2 \frac{\sigma^2}{8L(n-f)}.
\end{align*}
% As $\varrho > 1$, in the second term on the R.H.S.~above $ \gamma (\varrho - 1) > 0$. Thus,
% \begin{align}
%     V_{t+1} - V_t \leq - \gamma  ( 2 - \varrho) \expect{\norm{\nabla Q(\weight{t})}^2}  + \frac{(\varrho + 3)}{2 } \gamma \expect{\norm{\drift{t}}^2} +  z (1 - \beta)^2 \frac{\sigma^2}{(n-f)}.
% \end{align}
% {\bf Summation over $t$.} 
As the above is true for an arbitrary $t \in [T]$, by taking summation on both sides from $t = 1$ to $t = T$ we obtain that
\begin{align*}
    V_{T+1} - V_1 \leq - \frac{\gamma}{4} \sum_{t = 1}^T \expect{\norm{\nabla Q(\weight{t})}^2}  + 3 \gamma \sum_{t = 1}^T \expect{\norm{\drift{t}}^2} +   (1 - \beta)^2 \frac{\sigma^2}{8L(n-f)} T. 
\end{align*}
Thus,
\begin{align} 
    \frac{\gamma}{4} \sum_{t = 1}^T \expect{\norm{\nabla Q(\weight{t})}^2} \leq V_1 - V_{T+1} + 3 \gamma  \sum_{t = 1}^T \expect{\norm{\drift{t}}^2} +   (1 - \beta)^2 \frac{\sigma^2}{8L(n-f)} T. \label{eqn:lyap_before_rational} 
\end{align}
Note that, as $\beta > 0$, and $1 - \beta^2 = 24 \gamma L$, we have
\begin{align*}
    (1 - \beta)^2 = \frac{\left(1 - \beta^2\right)^2}{\left( 1 + \beta \right)^2} \leq \left( 1 - \beta^2 \right)^2 = 576 \gamma^2 L^2.
\end{align*}
% Thus, as $z = 2  \gamma^{\vartheta}$, 
% \begin{align*}
%     z (1 - \beta)^2 \leq 2 \varrho^2  \gamma^{2 - \vartheta}.
% \end{align*}
Substituting from above in~\eqref{eqn:lyap_before_rational} we obtain that
\begin{align*}
    \frac{\gamma}{4} \sum_{t = 1}^T \expect{\norm{\nabla Q(\weight{t})}^2} \leq V_1 - V_{T+1} + 3 \gamma  \sum_{t = 1}^T \expect{\norm{\drift{t}}^2} +  \frac{ 576 \gamma^2 L^2  \sigma^2}{8L(n-f)} \, T.
\end{align*}
Multiplying both sides by $4/\gamma$ we obtain that
\begin{align}
    \sum_{t = 1}^T \expect{\norm{\nabla Q(\weight{t})}^2} &\leq \frac{4 \left(V_1 - V_{T+1} \right)}{\gamma} + 12  \sum_{t = 1}^T \expect{\norm{\drift{t}}^2} +  \frac{288  \gamma L \sigma^2}{(n-f)} \, T. \label{eqn:lyap_before_rational_2} 
\end{align}
Next, we use Lemma~\ref{lem:drift} to derive an upper bound on $\sum_{t = 1}^T \expect{\norm{\drift{t}}^2} $.\\

{\bf Invoking Lemma~\ref{lem:drift}.} Recall from Lemma~\ref{lem:drift} that as $F$ is assumed $f$-resilient averaging we have for all $t \in [T]$,
\begin{align*}
    \expect{\norm{\drift{t}}^2} \leq 8 \sigma^2 \lambda^2 (n-f) (1 - \beta)^2 \beta^{2(t - 1)} +  8 \left(\frac{1 - \beta}{1 + \beta}\right) (n - f +1) \lambda^2 \sigma^2.
\end{align*}
By taking summation over $t$ from $1$ to $T$, we obtain that
\begin{align*}
    \sum_{t = 1}^T \expect{\norm{\drift{t}}^2} & \leq 8 \sigma^2 \lambda^2 (n-f) (1 - \beta)^2 \sum_{t = 1}^T \beta^{2(t - 1)} +  8 \left(\frac{1 - \beta}{1 + \beta}\right) (n - f +1) \lambda^2  \, \sigma^2 T \\
    & = 8 \sigma^2 \lambda^2 (n-f) (1 - \beta)^2 \left(\frac{ 1 - \beta^{2T}}{1 - \beta^2} \right) +  8 \left(\frac{1 - \beta}{1 + \beta}\right) (n - f +1) \lambda^2  \, \sigma^2 T \\
    & = 8 \sigma^2 \lambda^2 (n-f) \left(\frac{1 - \beta}{1 + \beta}\right) \left(1 - \beta^{2T} \right) +  8 \left(\frac{1 - \beta}{1 + \beta}\right) (n - f +1) \lambda^2  \, \sigma^2 T 
\end{align*}
As $0 < \beta < 1$, we have $\left(1 - \beta^{2T} \right) \leq 1$. Thus, as $1 \leq n-f \leq (n-f) T$, from above we obtain that
\begin{align}
    \sum_{t = 1}^T \expect{\norm{\drift{t}}^2} \leq 8  \sigma^2 \lambda^2 (n-f) \left(\frac{1 - \beta}{1 + \beta}\right)  +  16 \left(\frac{1 - \beta}{1 + \beta}\right) (n-f) \lambda^2  \sigma^2 T =  24 \sigma^2 \lambda^2 (n - f) T \left(\frac{1 - \beta}{1 + \beta}\right). \label{eqn:sum_drift_t}
\end{align}
As $\beta > 0$, and the fact that $ 1 - \beta^2 = 24 \gamma L$, we have
\begin{align*}
    \frac{1 - \beta}{1 + \beta} = \frac{1 - \beta^2}{( 1 + \beta)^2}  \leq 1 - \beta^2 = 24 \gamma L.
\end{align*}
Substituting the above in~\eqref{eqn:sum_drift_t}, 
% and noting that $1 \leq (n-f) T$, 
we obtain that
\begin{align*}
    \sum_{t = 1}^T \expect{\norm{\drift{t}}^2} \leq (24 \times 24) \sigma^2 \lambda^2 \gamma L  (n - f) T = 576 \sigma^2 \lambda^2 \gamma L  (n - f) T.
\end{align*}
Substituting from above in~\eqref{eqn:lyap_before_rational_2} we obtain that
\begin{align*}
    \sum_{t = 1}^T \expect{\norm{\nabla Q(\weight{t})}^2} \leq & \frac{4 \left(V_1 - V_{T+1} \right)}{\gamma} + (12 \times 576) \sigma^2 \lambda^2 \gamma L (n - f) T  + \frac{288  \gamma L \sigma^2}{(n-f)} \, T 
\end{align*}
Recall that 
\begin{align*}
    a_1 = (12 \times 576) L = 6912 L, ~ \text{ and } a_2 = 288L .
\end{align*}
Thus, from above we obtain that
\begin{align*}
    \sum_{t = 1}^T \expect{\norm{\nabla Q(\weight{t})}^2} \leq \frac{4 \left(V_1 - V_{T+1} \right)}{\gamma} + a_1 \lambda^2 (n - f) \sigma^2 \gamma T +  \frac{ a_2 \sigma^2 }{(n-f)} \, \gamma T. 
\end{align*}
Diving both sides by $T$ we obtain that
\begin{align}
    \frac{1}{T}\sum_{t = 1}^T \expect{\norm{\nabla Q(\weight{t})}^2} \leq \frac{4 \left(V_1 - V_{T+1} \right)}{\gamma T} + a_1 \lambda^2 (n - f) \sigma^2 \, \gamma +  \frac{a_2 \sigma^2}{(n-f)} \, \gamma . \label{eqn:before_Qstar}
\end{align}

% Upon re-arranging the terms on the R.H.S.~we obtain that
% \begin{align}
%     \gamma  ( 2 - \varrho) \sum_{t = 1}^T \expect{\norm{\nabla Q(\weight{t})}^2} \leq & V_1 - V_{T+1}  + 4 (\varrho + 3)\varrho \sigma^2 \lambda^2 \, \gamma^{2- \vartheta} + \left( 2 ((n-f) + 1) (\varrho + 3) \varrho \lambda^2 \sigma^2 + \frac{2 \varrho^2 \sigma^2 }{(n-f)}  \right) \, T \gamma^{ 2 - \vartheta}. \label{eqn:before_Qstar}
% \end{align}

{\bf Analysing $V_t$.} Recall that $Q^* = \min_{\weight{} \in \R^d} Q(\weight{})$. Note that for an arbitrary $t$, by definition of $V_t$ in~\eqref{eqn:lyap_func}, 
\[V_t - 2 Q^* = 2 \expect{Q(\weight{t}) - Q^*} + z \expect{\norm{\dev{t}}^2} \geq 0 + z \expect{\norm{\dev{t}}^2} \geq 0.\]
Thus, 
\begin{align}
    V_1 - V_{T+1} = V_1 - 2 Q^* - \left(V_{T+1} -  2 Q^* \right) \leq V_1 - 2 Q^*. \label{eqn:v1-vt}
\end{align}
Moreover,
\begin{align}
    V_1 = 2 Q(\weight{1}) + z \expect{\norm{\dev{1}}^2}. \label{eqn:bnd_v2}
\end{align}
By definition of $\dev{t}$ in~\eqref{eqn:dev}, and the definition of $\AvgMmt{t}$ in~\eqref{eqn:def_avg_mmt},we obtain that
\begin{align*}
    \expect{\norm{\dev{1}}^2} = \expect{\norm{\AvgMmt{1} - \nabla Q(\weight{1})}^2} = \expect{\norm{(1 - \beta) \overline{g}_1 - \nabla Q(\weight{1})}^2} 
\end{align*}
where $\overline{g}_t$, defined in~\eqref{eqn:over_g}, is the average of $n-f$ honest workers' stochastic gradients in step $1$. Expanding the R.H.S.~above we obtain that
\begin{align*}
    \expect{\norm{\dev{1}}^2} = (1 - \beta)^2 \expect{ \norm{\overline{g}_1 - \nabla Q(\weight{1})}^2 } + \beta^2 \norm{\nabla Q(\weight{1})}^2 - 2 \beta (1 - \beta) \iprod{\expect{\overline{g}_1} - \nabla Q(\weight{1})}{ \nabla Q(\weight{1})}.
\end{align*}
Under Assumption~\ref{asp:bnd_var}, we have $\expect{\overline{g}_1} = \nabla Q(\weight{1})$ and $\expect{ \norm{\overline{g}_1 - \nabla Q(\weight{1})}^2} \leq \sigma^2/ (n-f)$. Therefore, 
\begin{align*}
    \expect{\norm{\dev{1}}^2} \leq \frac{(1 - \beta)^2 \sigma^2}{(n-f)} + \beta^2 \norm{\nabla Q(\weight{1})}^2.
\end{align*}
Substituting the above in~\eqref{eqn:bnd_v2} we obtain that
\begin{align*}
    V_1 \leq 2 Q(\weight{1}) + z \left( \frac{(1 - \beta)^2 \sigma^2}{(n-f)} + \beta^2 \norm{\nabla Q(\weight{1})}^2 \right).
\end{align*}
Recall that $(1 - \beta)^2 \leq \left(1 - \beta^2\right)^2 = 576 \gamma^2 L^2$. Using this, and the facts that $\beta^2 < 1$ and $z = \frac{1}{8L}$, we obtain that
\begin{align*}
    V_1 &\leq 2 Q(\weight{1}) +  \frac{1}{8L}\norm{\nabla Q \left( \weight{1} \right)}^2 + \frac{576 \gamma^2 L^2   \sigma^2}{8L(n-f)}\\
    &= 2 Q(\weight{1}) +  \frac{1}{8L}\norm{\nabla Q \left( \weight{1} \right)}^2 + \frac{72 \gamma^2 L   \sigma^2}{(n-f)}.
\end{align*}
% Recall that $\gamma \leq \frac{1}{18L}$. Therefore, 
% \begin{align*}
%     V_1 \leq 2 Q(\weight{1}) +  \frac{1}{8L}\norm{\nabla Q \left( \weight{1} \right)}^2 + \frac{4\gamma\sigma^2}{(n-f)}.
% \end{align*}
Recall that $a_2 = 288L $. Therefore, 
\begin{align*}
    V_1 \leq 2 Q(\weight{1}) + \frac{1}{8L} \norm{\nabla Q \left( \weight{1} \right)}^2 + \frac{a_2 \sigma^2}{4 (n-f)} \, \gamma^2.
\end{align*}
Substituting the above in~\eqref{eqn:v1-vt} we obtain that
\begin{align*}
    V_1 - V_{T+1} \leq 2 Q(\weight{1}) - 2 Q^* +  \frac{1}{8L}\norm{\nabla Q \left( \weight{1} \right)}^2 + \frac{a_2 \sigma^2}{4 (n-f)} \, \gamma^2. 
\end{align*}
Substituting from above in~\eqref{eqn:before_Qstar} we obtain that
\begin{align*}
    \frac{1}{T}\sum_{t = 1}^T \expect{\norm{\nabla Q(\weight{t})}^2} \leq & \frac{4 \left(2 \left(Q(\weight{1}) - Q^* \right) + \frac {\norm{\nabla Q \left( \weight{1} \right)}^2}{8L} \right)}{\gamma T} + \left(\frac{a_2 \sigma^2}{n-f}\right) \frac{\gamma}{T} + a_1 \lambda^2 (n - f) \sigma^2 \, \gamma +  \frac{a_2 \sigma^2}{(n-f)} \, \gamma . 
\end{align*}
Upon re-arranging the terms on R.H.S.~above we obtain that
\begin{align*}
    \frac{1}{T}\sum_{t = 1}^T \expect{\norm{\nabla Q(\weight{t})}^2} \leq & \frac{4 \left(2 \left(Q(\weight{1}) - Q^* \right) + \frac{ \norm{\nabla Q \left( \weight{1} \right)}^2}{8L} \right)}{\gamma T} + \left( a_1 \lambda^2 (n-f)  + \frac{a_2}{n-f} \right) \sigma^2 \gamma + \left( \frac{a_2 \sigma^2}{n-f}\right) \frac{\gamma}{T}. 
\end{align*}
Recall that $a_o = 4 \left(2 \left(Q(\weight{1}) - Q^* \right) + \frac{ \norm{\nabla Q \left( \weight{1} \right)}^2}{8L} \right)$, we obtain that 
\begin{align}
    \frac{1}{T}\sum_{t = 1}^T \expect{\norm{\nabla Q(\weight{t})}^2} \leq & \frac{a_o}{\gamma T} + \left( \frac{a_1 \lambda^2 (n-f)^2 + a_2}{n-f} \right) \sigma^2 \gamma + \left( \frac{a_2 \sigma^2}{n-f}\right) \frac{\gamma}{T}. \label{eqn:after_Qstar}
\end{align}

{\bf Final step.} Recall that
% that 
% \begin{align*}
%     \gamma = \min \left\{ \left( \sqrt{\frac{n-f}{a_1 \lambda^2 (n-f)^2 + a_2} } \right) \frac{1}{\sigma \sqrt{T}}, ~ \gamma_o \right\}.
% \end{align*}
% \textcolor{red}{As we have assumed that}
% \begin{align*}
%     T \geq \frac{(n-f)}{\sigma^2 \left( a_1 \lambda^2 (n-f)^2 + a_2 \right)} \, \gamma_o,
% \end{align*}
% \textcolor{blue}{check if $a_1 \geq 1$ and $a_2 \geq 1$, this simplifies T\\}
\begin{align*}
    \gamma = \left( \sqrt{\frac{a_o (n-f)}{a_1 \lambda^2 (n-f)^2 + a_2} } \right) \frac{1}{\sigma \sqrt{T}}.
\end{align*}
Substituting this value of $\gamma$ in~\eqref{eqn:after_Qstar} we obtain that
\begin{align*}
    \frac{1}{T}\sum_{t = 1}^T \expect{\norm{\nabla Q(\weight{t})}^2} \leq 2 \sqrt{ \left( a_1 \lambda^2 (n - f) + \frac{ a_2}{n-f}\right)  \frac{a_o \sigma^2}{T} } + \left( \frac{a_2 \sigma}{n-f}\right) \left( \sqrt{\frac{a_o (n-f)}{a_1 \lambda^2 (n-f)^2 + a_2} } \right) \frac{1}{T^{\nicefrac{3}{2}}}.
\end{align*}
Finally, recall from Algorithm~\ref{algo} that $\widehat{\weight{}}$ is chosen randomly from the set of computed parameter vectors $\left\{\weight{1}, \ldots, \, \weight{T} \right\}$. Thus, $\expect{\norm{\nabla Q\left(\widehat{\weight{}} \right)}^2} = \frac{1}{T}\sum_{t = 1}^T \expect{\norm{\nabla Q(\weight{t})}^2}$. Substituting this above proves the theorem.

\end{proof}

%%%%%%%%%%%%%%%%%%%%%%%%% COROLLARY %%%%%%%%%%%%%%%%%%%%%%%%%%%%%%

\subsection{Proof of Corollary~\ref{cor:resilience}}
\label{app:corollary}

%We recall the corollary below.\\

\noindent \fcolorbox{black}{gainsboro!40}{
\parbox{0.97\textwidth}{\centering
\begin{corollary*}
%\label{cor:resilience}
Suppose that assumptions~\ref{asp:lip} and~\ref{asp:bnd_var} hold true. Then, Algorithm~\ref{algo} with an $(f,\lambda)$-resilient averaging rule, and parameters $\gamma_t$, $T$ and $\beta$ as defined in Theorem~\ref{thm:main_conv}, is
% and learning rate of $\gamma$ as defined in Theorem~\ref{thm:main_conv}. If $T > \frac{a_o \max\left\{ \left( 20L^2 + 1 \right)^2, \, (1/\gamma_o)^2  \right\} }{128 \sigma^2 \lambda^2}$ and $\beta = \sqrt{1 - 2 \left(20L^2 + 1\right) \gamma} $, 
% Then, Algorithm~\ref{algo} 
is $(f,\epsilon)$-resilient with
\[  \epsilon \in  \mathcal{O}\left( \sqrt{ \frac{\sigma^2}{T} \left( \frac{1}{n-f} + \lambda^2 (n-f) \right)} \right).\]
\end{corollary*}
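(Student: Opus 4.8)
The corollary is an immediate asymptotic consequence of Theorem~\ref{thm:main_conv}, so the plan is essentially careful bookkeeping on the bound already established there. First I would invoke Theorem~\ref{thm:main_conv}: since assumptions~\ref{asp:lip} and~\ref{asp:bnd_var} hold and the parameters $\gamma_t = \gamma$, $T$, and $\beta = \sqrt{1 - 24\gamma L}$ are exactly those prescribed in the theorem, we have
\begin{align*}
    \expect{\norm{\nabla Q(\widehat{\weight{}})}^2} \leq \underbrace{2 \sqrt{\left(a_1 \lambda^2 (n-f) + \frac{a_2}{n-f}\right) \frac{a_o \sigma^2}{T}}}_{(\mathrm{I})} + \underbrace{\left(\frac{a_2 \sigma}{n-f}\right)\sqrt{\frac{a_o(n-f)}{a_1\lambda^2(n-f)+a_2}}\,\frac{1}{T^{\nicefrac{3}{2}}}}_{(\mathrm{II})}.
\end{align*}
The remaining work is to show that this right-hand side lies in $\mathcal{O}\!\left(\sqrt{\frac{\sigma^2}{T}\left(\frac{1}{n-f} + \lambda^2(n-f)\right)}\right)$, treating $L$ together with the initial-gap quantities entering $a_o$, $a_1 = 6912L$, $a_2 = 288L$ as fixed problem constants, and $T$, $n-f$, $\lambda$, $\sigma$ as the tracked parameters.

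Second, term $(\mathrm{I})$ is handled directly: since $a_o$, $a_1$, $a_2$ do not depend on $T$, $n-f$, $\lambda$, or $\sigma$, factoring them out of the square root gives $(\mathrm{I}) \in \mathcal{O}\!\left(\sqrt{\frac{\sigma^2}{T}\left(\lambda^2(n-f) + \frac{1}{n-f}\right)}\right)$, which is exactly the claimed rate. Third, I would argue that term $(\mathrm{II})$ is dominated by $(\mathrm{I})$. Using $a_1\lambda^2(n-f) + a_2 \geq a_2$, one bounds
\[
    \frac{1}{n-f}\sqrt{\frac{n-f}{a_1\lambda^2(n-f)+a_2}} = \sqrt{\frac{1}{(n-f)\left(a_1\lambda^2(n-f)+a_2\right)}} \leq \sqrt{\frac{1}{a_2\,(n-f)}},
\]
so $(\mathrm{II}) \in \mathcal{O}\!\left(\frac{\sigma}{\sqrt{n-f}\,T^{\nicefrac{3}{2}}}\right)$; meanwhile, retaining only the $\frac{a_2}{n-f}$ contribution inside $(\mathrm{I})$ already gives $(\mathrm{I})$ of order at least $\frac{\sigma}{\sqrt{(n-f)T}}$, so the ratio $(\mathrm{II})/(\mathrm{I})$ is $\mathcal{O}(1/T)$, hence $\mathcal{O}(1)$ for $T \geq 1$. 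Therefore $(\mathrm{I}) + (\mathrm{II}) \in \mathcal{O}\!\left(\sqrt{\frac{\sigma^2}{T}\left(\frac{1}{n-f} + \lambda^2(n-f)\right)}\right)$.

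Finally I would conclude via Definition~\ref{def:resilience}: choosing $\epsilon$ equal to the displayed $\mathcal{O}(\cdot)$ expression makes $\expect{\norm{\nabla Q(\widehat{\weight{}})}^2} \leq \epsilon$ hold, so Algorithm~\ref{algo} equipped with an $(f,\lambda)$-resilient averaging rule is $(f,\epsilon)$-resilient. I do not expect a genuine obstacle in this proof; the only point demanding a little care is making explicit which symbols are absorbed into $\mathcal{O}(\cdot)$ as constants versus which are tracked, and verifying — as the displayed bound does — that the lower-order term $(\mathrm{II})$ cannot secretly dominate $(\mathrm{I})$ in any regime of $\lambda$ or $n-f$.
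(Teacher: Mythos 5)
Your proposal is correct and follows essentially the same route as the paper's proof: invoke Theorem~\ref{thm:main_conv} with the prescribed parameters, absorb $a_o$, $a_1$, $a_2$ as constants, and drop the $T^{-\nicefrac{3}{2}}$ term to read off the rate, then conclude via Definition~\ref{def:resilience}. Your explicit verification that the second term is dominated by the first (ratio $\mathcal{O}(1/T)$) is a slightly more careful justification of the step the paper simply states as ``ignoring the higher-order term,'' but it is the same argument.
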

}}

\begin{proof}
Owing to Theorem~\ref{thm:main_conv}, we have 
\begin{align*}
    \expect{\norm{\nabla Q\left(\widehat{\weight{}}\right)}^2} \leq 2 \sqrt{ \left( a_1 \lambda^2 (n - f) + \frac{ a_2}{n-f}\right)  \frac{a_o \sigma^2}{T} } + \left( \frac{a_2 \sigma}{n-f}\right) \left( \sqrt{\frac{a_o (n-f)}{a_1 \lambda^2 (n-f)^2 + a_2} } \right) \frac{1}{T^{\nicefrac{3}{2}}},
\end{align*} 
where
\begin{align*}
    a_o & = 4 \left(2 \left(Q(\weight{1}) - Q^* \right) + \frac{1}{8L} \norm{\nabla Q \left( \weight{1} \right)}^2 \right), ~ a_1 = 6912L, ~ \text{ and } 
    a_2 =288L.
\end{align*}
Thus, by Definition~\ref{def:resilience}, Algorithm~\ref{algo} is $(f, \, \epsilon)$-resilient where 
\begin{align*}
    \epsilon = \expect{\norm{\nabla Q\left(\widehat{\weight{}}\right)}^2} \leq 2 \sqrt{ \left( a_1 \lambda^2 (n - f) + \frac{ a_2}{n-f}\right)  \frac{a_o \sigma^2}{T} } + \left( \frac{a_2 \sigma}{n-f}\right) \left( \sqrt{\frac{a_o (n-f)}{a_1 \lambda^2 (n-f)^2 + a_2} } \right) \frac{1}{T^{\nicefrac{3}{2}}}.
\end{align*}
Upon ignoring constants, including $a_o$, $a_1$ and $a_2$, and the higher-order term of $T^{\nicefrac{3}{2}}$, we obtain that
\begin{align*}
    \epsilon \in \mathcal{O} \left(\sqrt{ \left( \lambda^2 (n-f) + \frac{1}{n-f}\right)  \frac{\sigma^2}{T} } \right) .
\end{align*} 
Hence, the proof.
\end{proof}

\newpage
\section{Resilience coefficient $\lambda$ for several aggregation rules (Proof of Proposition~\ref{prop:resilience_coeffs})}\label{app:resilience_coefficient}

In this section, we first present a lower bound in Section~\ref{sec:lowebound} on the resilience coefficient for any deterministic $(f, \lambda)$-resilient averaging rule. Then, we present the aggregation rules listed in Table~\ref{tab:resilience_coefficient} and derive their resilience coefficients. More precisely, we compute the resilience coefficients of the following rules. 
\begin{itemize}[leftmargin=*]
    \item \emph{Minimum diameter averaging} (MDA) in Section~\ref{sec:MDA}
    \item \emph{Coordinate-wise trimmed mean} (CWTM) in Section~\ref{sec:CWTM}
    \item \emph{Mean-around-median} (MeaMed) in Section~\ref{sec:Meamed}
    \item \emph{(Multi-)Krum${^*}$} in Section~\ref{sec:Krum}
    \item \emph{Geometric median} (GM) in Section~\ref{sec:GM}
    \item \emph{Coordinate-wise median} (CWMed) in Section~\ref{sec:CWMed}.
\end{itemize}

As an immediate corollary of the result we get for theses aggregation rules, we obtain Proposition~\ref{prop:resilience_coeffs}, that we recall below.
%We recall below the definition of $(f, \lambda)$-resilient averaging in Definition~\ref{def:rational}.\vspace{0.2cm}

\noindent \fcolorbox{black}{gainsboro!40}{
\parbox{0.97\textwidth}{\centering
\begin{proposition}
Consider an aggregation rule $F \in \{\textnormal{MDA}, \textnormal{CWTM}, \textnormal{MeaMed}, \textnormal{Krum$^*$}, \textnormal{GM}, \textnormal{CWMed}\}$. For any $f < n/2$, there exists a resilience coefficient $\lambda$ for which $F$ is $(f, \, \lambda)$-resilient averaging.
\end{proposition}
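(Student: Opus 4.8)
The plan is to verify Definition~\ref{def:rational} rule by rule. Throughout, fix an arbitrary collection $x_1, \ldots, x_n \in \R^d$ and an arbitrary set $S \subseteq \{1, \ldots, n\}$ with $\mnorm{S} = n - f$, set $D \coloneqq \max_{i,j \in S} \norm{x_i - x_j}$, and note that $\norm{x_i - \overline{x}_S} \leq \max_{j \in S}\norm{x_i - x_j} \leq D$ for every $i \in S$ since $\overline{x}_S$ is a convex combination of the $x_i$, $i\in S$. The goal for each rule $F$ is to produce a constant $\lambda$, depending only on $n$, $f$ (and $d$), with $\norm{F(x_1,\ldots,x_n) - \overline{x}_S} \leq \lambda D$. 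For the three coordinate-wise rules the analysis decouples across coordinates, so I would first establish a generic coordinate-aggregation estimate; MDA admits a short direct argument; Krum$^*$ and GM each need a dedicated treatment.

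\emph{Coordinate aggregation.} For $\ell \in \{1,\ldots,d\}$ put $m_\ell \coloneqq \min_{i \in S} x_i[\ell]$, $M_\ell \coloneqq \max_{i \in S} x_i[\ell]$, $r_\ell \coloneqq M_\ell - m_\ell$. Since $0 \leq M_\ell - \overline{x}_S[\ell] \leq \max_{i\in S}(x_i[\ell]-\overline{x}_S[\ell])$ and likewise for $m_\ell$, squaring, summing over $\ell$, and using $\norm{x_i-\overline{x}_S}\leq D$ gives $\sum_\ell r_\ell^2 \leq 4(n-f)D^2$; trivially also $\sum_\ell r_\ell^2 \leq d\,D^2$, hence $\sum_\ell r_\ell^2 \leq \Delta^2 D^2$ with $\Delta = \min\{2\sqrt{n-f},\sqrt d\}$. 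Thus if I can prove, for a coordinate-wise rule, a per-coordinate bound $\absv{F(x_1,\ldots,x_n)[\ell]-\overline{x}_S[\ell]} \leq c_F\, r_\ell$, I immediately get $\lambda = c_F\,\Delta$. For MDA I would instead argue directly: if $S^\star$ is the selected size-$(n{-}f)$ subset, then $S$ itself witnesses $\mathrm{diam}(S^\star)\leq D$, and $\mnorm{S^\star\cap S}\geq n-2f\geq 1$; picking $k\in S^\star\cap S$ and pairing the (equally many, at most $f$) indices of $S^\star\setminus S$ with those of $S\setminus S^\star$ in the identity $\overline{x}_{S^\star}-\overline{x}_S = \tfrac{1}{n-f}\bigl(\sum_{i\in S^\star\setminus S}x_i - \sum_{i\in S\setminus S^\star}x_i\bigr)$, each pairwise difference has norm $\leq \norm{x_i-x_k}+\norm{x_k-x_j}\leq 2D$, giving $\lambda = \tfrac{2f}{n-f}$.

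\emph{CWTM, CWMed, MeaMed.} The common ingredient is a range-containment fact in each coordinate $\ell$: every retained value lies in $[m_\ell,M_\ell]$. For CWTM this is because a value strictly above $M_\ell$ exceeds all $n-f\geq f+1$ honest values and so is among the $f$ discarded largest (symmetrically below); for CWMed and MeaMed it follows because the sample median itself lies in $[m_\ell,M_\ell]$ when $f<n/2$ (at most $f$ values can sit on either far side of $[m_\ell,M_\ell]$), and MeaMed keeps the values closest to it. Given this, I would express the rule's output as $\overline{x}_S[\ell]$ plus a correction that involves only the $\leq f$ honest indices trimmed out and the $\leq f$ Byzantine indices retained, each contributing $O(r_\ell)$; a short exchange/majorization argument then pins the CWTM output between the averages of the $n-2f$ smallest and the $n-2f$ largest honest values in coordinate $\ell$, and a direct computation shows each of those averages is within $\tfrac{f}{n-f} r_\ell$ of $\overline{x}_S[\ell]$. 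Careful bookkeeping of these corrections yields $c_{\textnormal{CWTM}}=\tfrac{f}{n-f}$, $c_{\textnormal{MeaMed}}=\tfrac{2f}{n-f}$, $c_{\textnormal{CWMed}}=\tfrac{n}{2(n-f)}$, which together with the aggregation estimate reproduce the corresponding entries of Table~\ref{tab:resilience_coefficient}.

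\emph{Krum$^*$ and GM.} For Krum$^*$ I would compare the score of the selected vector with that of an honest vector: an honest vector's $n-f-2$ nearest neighbours can be taken honest, so its score is $\leq(n-f-2)D^2$, hence so is the selected vector's; at least $n-2f-2$ of the selected vector's scoring neighbours are honest, so the minimum squared distance from it to those honest neighbours is $\leq\tfrac{n-f-2}{n-2f-2}D^2$, and the triangle inequality through such a neighbour gives $\lambda = 1+\sqrt{\tfrac{n-f}{n-2f}}$. For GM I would invoke the standard (second-moment) robustness bound for the geometric median applied with the point $\overline{x}_S$ and the $n-f>n/2$ honest points lying within distance $D$ of it, tracking the constant to obtain $\lambda = 1+\tfrac{n-f}{\sqrt{(n-2f)n}}$. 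The step I expect to be the main obstacle is squeezing the per-rule constants down to the precise Table~\ref{tab:resilience_coefficient} values — in particular the exchange argument for CWTM and MeaMed and the exact constants in the Krum$^*$ and GM robustness bounds; the weaker claim that \emph{some} finite $\lambda$ exists for each rule whenever $f<n/2$ already follows from the coarser estimates above, which is all Proposition~\ref{prop:resilience_coeffs} formally asserts.
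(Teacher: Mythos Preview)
Your approach is essentially the paper's: a common coordinate-aggregation estimate (their Lemma~\ref{lemma:diameter}) feeding per-coordinate constants for CWTM/CWMed/MeaMed, the same set-difference argument for MDA, a score-comparison for Krum$^*$, and an appeal to known geometric-median robustness bounds for GM. Two small slips are worth correcting. First, for MeaMed the claim that every retained value lies in $[m_\ell,M_\ell]$ is false: a Byzantine coordinate just outside the honest range but near the median can be among the $n-f$ closest (e.g.\ honest values $0,0,0,10$ and Byzantine $-1$ with $f=1$). The paper's fix is that, since the median sits in $[m_\ell,M_\ell]$, all $n-f$ honest coordinates are within $r_\ell$ of it, hence so are all retained coordinates; pairing then gives $|[x_l]_\ell-[x_m]_\ell|\leq 2r_\ell$ for $l\in S$, $m\in C_\ell$, and your $c_{\textnormal{MeaMed}}=2f/(n-f)$ follows. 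Second, Krum$^*$ in this paper uses the $n-f-1$ (not $n-f-2$) nearest neighbours; combined with the observation that in the case $k\notin S$ both $C_k$ and $S$ lie in $[n]\setminus\{k\}$, one gets $|C_k\cap S|\geq n-2f$, which is what makes the argument go through for every $f<n/2$ and yields the $\sqrt{(n-f)/(n-2f)}$ you quote---your intermediate counts $n-f-2$ and $n-2f-2$ correspond to original Krum and would only cover $f\leq(n-3)/2$.
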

}}

Besides computing the aforementioned resilience coefficients, we also discuss the case of \emph{centred clipping} (CC) and \emph{comparative gradient elimination} (CGE) in Section~\ref{sec:CC} and Section~\ref{sec:CGE} respectively.

\subsection{Lower Bound}
\label{sec:lowebound}

\noindent \fcolorbox{black}{gainsboro!40}{
\parbox{0.97\textwidth}{\centering
\begin{proposition}
For $0 \leq f < n$, there cannot exist an $(f, \lambda)$-resilient averaging rule for $\lambda < \frac{f}{n-f}$.
\end{proposition}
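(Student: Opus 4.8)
The plan is to argue by contradiction: suppose $F\colon \R^{d\times n}\to\R^d$ is $(f,\lambda)$-resilient averaging in the sense of Definition~\ref{def:rational} and exhibit one adversarial input on which the defining inequality forces $\lambda\geq \tfrac{f}{n-f}$. First I would dispose of the degenerate case $f=0$, where $\tfrac{f}{n-f}=0$ and the statement is vacuous since $\lambda\geq 0$ by definition. So assume $1\leq f<n$, fix any nonzero $v\in\R^d$, and consider the input $x_1=\dots=x_{n-f}=0$ and $x_{n-f+1}=\dots=x_n=v$, i.e.\ $n-f$ copies of the origin and $f$ copies of $v$. The key first observation is that $S_1\coloneqq\{1,\dots,n-f\}$ is an admissible set of size $n-f$ consisting entirely of zeros, so $\overline{x}_{S_1}=0$ and $\max_{i,j\in S_1}\norm{x_i-x_j}=0$; plugging $S_1$ into Definition~\ref{def:rational} yields $\norm{F(x_1,\dots,x_n)}\leq\lambda\cdot 0=0$, hence $F(x_1,\dots,x_n)=0$ (this is exactly the sanity check noted after the definition).

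The next step is to choose a second admissible set $S_2$ of size $n-f$ whose average is bounded away from $0$. When $f<n/2$, take $S_2$ to contain all $f$ indices carrying $v$ together with $n-2f\geq 1$ of the zero indices, e.g.\ $S_2=\{f+1,\dots,n\}$; then $\overline{x}_{S_2}=\tfrac{f}{n-f}\,v$ and $\max_{i,j\in S_2}\norm{x_i-x_j}=\norm{v}$ (the diameter is exactly $\norm{v}$ precisely because $S_2$ still contains at least one zero index). Applying Definition~\ref{def:rational} to $S_2$ gives $\norm{F(x_1,\dots,x_n)-\tfrac{f}{n-f}v}\leq\lambda\norm{v}$, and combining with $F(x_1,\dots,x_n)=0$ we get $\tfrac{f}{n-f}\norm{v}\leq\lambda\norm{v}$, i.e.\ $\lambda\geq\tfrac{f}{n-f}$, contradicting $\lambda<\tfrac{f}{n-f}$.

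The remaining regime $f\geq n/2$ is if anything easier: since $\card{\{n-f+1,\dots,n\}}=f\geq n-f$, I can take $S_2$ to be any size-$(n-f)$ subset of the indices carrying $v$, so that $\overline{x}_{S_2}=v$ and the diameter of $S_2$ is $0$; Definition~\ref{def:rational} then forces $F(x_1,\dots,x_n)=v\neq 0$, contradicting $F(x_1,\dots,x_n)=0$ outright — so in fact no $(f,\lambda)$-resilient averaging rule exists for any $\lambda$, a fortiori none with $\lambda<\tfrac{f}{n-f}$. I do not expect a genuine obstacle in this proof; the only thing to be careful about is bookkeeping the index sets so that $S_1$ and $S_2$ are both valid size-$(n-f)$ subsets and so that $S_2$ retains a zero index in the first case (which is what creates the $f<n/2$ versus $f\geq n/2$ split), after which the two inequalities from Definition~\ref{def:rational} combine immediately.
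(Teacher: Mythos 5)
Your proof is correct and follows essentially the same route as the paper's: the same two-point instance ($n-f$ copies of $0$ and $f$ copies of a nonzero vector), using the all-zero admissible set to force $F=0$ and the mixed set $\{f+1,\dots,n\}$ to force $\lambda\geq \tfrac{f}{n-f}$. The only difference is that you explicitly treat the regimes $f=0$ and $f\geq n/2$ (where the paper's claim $\overline{x}_{S_1}=\tfrac{f}{n-f}$ silently presumes $f<n/2$), which is a small but welcome extra bit of care.
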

}}

\begin{proof}
  Consider an arbitrary value of $f \in \{0, \ldots, \, n - 1\}$. Let $F$ be an $(f, \lambda)$-resilient averaging aggregation rule. Consider a set of $n$ one dimensional vectors $x_1, \ldots, \, x_n$ such that $x_1 = \ldots = x_{n-f} = 0$, and $x_{n-f+1} = \ldots = x_{n} = 1$. Let us first consider a set $S_0 = \left\{1,\ldots, n-f \right\}$. Since $\card{S_0} = n-f$, by Definition \ref{def:rational}, we have
  \begin{equation*}
    \norm{F(x_1, \ldots, \, x_n) - \overline{x}_{S_0}} \leq \lambda \max_{i, j \in {S_0}} \norm{x_i - x_j} = 0.
  \end{equation*}
  Thus, $F(x_1, \ldots, \, x_n) = \overline{x}_{S_0} = 0$. Now, consider another set $S_{1} = \left\{f+1,\ldots, n \right\}$. Note that $\overline{x}_{S_1}=\frac{f}{n-f}$. Thus, 
  \begin{align}
      \norm{F(x_1, \ldots, \, x_n) - \overline{x}_{S_1}} = \frac{f}{n-f}. \label{eqn:low_1}
  \end{align}
  As $F$ is assumed to be an $(f, \lambda)$-resilient averaging rule, by Definition \ref{def:rational} we have 
  \begin{equation*}
      \norm{F(x_1, \ldots, \, x_n) - \overline{x}_{S_1}} \leq \lambda \max_{i, j \in {S_1}} \norm{x_i - x_j} = \lambda.
  \end{equation*}
  If $\lambda < \frac{f}{n-f}$ then the above contradicts~\eqref{eqn:low_1}. This concludes the proof.
\end{proof}

%%%%%%%%%%%%%%%%%%%%%%%%%%%%%%%%%%%%%%% MDA %%%%%%%%%%%%%%%%%%%%%%%%%%%%%%%%%%%%%%%

\subsection{Minimum Diameter Averaging (MDA)}
\label{sec:MDA}
Given a set of $n$ vectors $x_1, \ldots, \, x_n$, the MDA algorithm, originally proposed in~\cite{rousseeuw1985multivariate} and reused in~\cite{brute_bulyan}, first chooses a set $S^{*}$ of cardinality $n-f$ with the smallest {\em diameter}, i.e., 
\begin{equation}
    S^{*} \in \argmin_{\underset{\card{S} = n-f}{S \subset \{ 1, \ldots, \, n} \} } \left\{\max_{i,j\in S} \norm{x_i - x_j} \right\}. \label{eqn:def_S*}
\end{equation}
Then the algorithm outputs, the average of the inputs in set $S^{*}$. Specifically it outputs  
\begin{equation}
    \text{MDA}(x_1, \ldots, \, x_n) \coloneqq \frac{1}{n-f} \sum_{i \in S^{*}} x_i.
\end{equation}

\noindent \fcolorbox{black}{gainsboro!40}{
\parbox{0.97\textwidth}{\centering
\begin{proposition}
   If $f < n/2$ then \textnormal{MDA} is an $(f, \, \lambda)$-resilient averaging rule for $\lambda = \frac{2f}{n-f}$.
\end{proposition}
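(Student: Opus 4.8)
The plan is to exploit two structural facts about the set $S^{*}$ selected by MDA in~\eqref{eqn:def_S*}, relative to the given honest set $S$ of size $n-f$. Write $D_S \coloneqq \max_{i,j\in S}\norm{x_i-x_j}$ for the diameter of $S$, and similarly $D_{S^{*}}$ for that of $S^{*}$. First, since $S$ itself is a feasible candidate in the minimization~\eqref{eqn:def_S*} (it has cardinality $n-f$), minimality of $S^{*}$ gives $D_{S^{*}}\le D_S$. Second, because $\card{S^{*}}=\card{S}=n-f$ and both are subsets of $\{1,\dots,n\}$, inclusion–exclusion yields $\card{S^{*}\cap S}\ge 2(n-f)-n = n-2f$, which is at least $1$ precisely because $f<n/2$. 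Fix one index $k\in S^{*}\cap S$; this pivot is the crux of the argument.

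Next I would rewrite the quantity to be bounded as an average of pairwise differences. Since $\overline{x}_{S^{*}}$ and $\overline{x}_S$ share the common terms indexed by $S^{*}\cap S$, we have
\[
\mathrm{MDA}(x_1,\dots,x_n)-\overline{x}_S \;=\; \frac{1}{n-f}\Bigl(\sum_{i\in S^{*}\setminus S}x_i-\sum_{j\in S\setminus S^{*}}x_j\Bigr).
\]
The two index sets $S^{*}\setminus S$ and $S\setminus S^{*}$ have the same cardinality, call it $m\le f$; choosing an arbitrary bijection between them lets us express the right-hand side as $\frac{1}{n-f}\sum_{\ell=1}^{m}(x_{i_\ell}-x_{j_\ell})$ with $i_\ell\in S^{*}$ and $j_\ell\in S$. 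Then for each $\ell$, the triangle inequality through the pivot $k$ gives $\norm{x_{i_\ell}-x_{j_\ell}}\le\norm{x_{i_\ell}-x_k}+\norm{x_k-x_{j_\ell}}\le D_{S^{*}}+D_S\le 2D_S$, where the first bound uses $i_\ell,k\in S^{*}$, the second uses $j_\ell,k\in S$, and the last step uses $D_{S^{*}}\le D_S$.

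Combining, the triangle inequality for norms yields
\[
\norm{\mathrm{MDA}(x_1,\dots,x_n)-\overline{x}_S}\le\frac{1}{n-f}\sum_{\ell=1}^{m}\norm{x_{i_\ell}-x_{j_\ell}}\le\frac{2m}{n-f}D_S\le\frac{2f}{n-f}\max_{i,j\in S}\norm{x_i-x_j},
\]
which is exactly Definition~\ref{def:rational} with $\lambda=\frac{2f}{n-f}$. There is no real computational obstacle here; the only point requiring care is the nonemptiness of $S^{*}\cap S$, which is where the hypothesis $f<n/2$ is used and without which the through-the-pivot estimate breaks down. I would state that step explicitly and note that it is the sole place the assumption enters.
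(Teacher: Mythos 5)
Your proof is correct and follows essentially the same route as the paper's: decompose $\mathrm{MDA}(x_1,\dots,x_n)-\overline{x}_S$ over the symmetric difference $S^{*}\triangle S$, bound each cross term $\norm{x_i-x_j}$ with $i\in S^{*}$, $j\in S$ by $2\max_{i,j\in S}\norm{x_i-x_j}$ using a pivot in $S^{*}\cap S$ (nonempty since $f<n/2$) together with the minimality $D_{S^{*}}\le D_S$. Your explicit bijection between $S^{*}\setminus S$ and $S\setminus S^{*}$ is only a cosmetic variant of the paper's cardinality bound $\max(\card{S^{*}\setminus S},\card{S\setminus S^{*}})\le f$.
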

}}

\begin{proof}
    Let $S$ be an arbitrary subset of $\{1, \ldots, \, n\}$ such that $\card{S} = n - f$. To prove the proposition we first show that
    \begin{align*}
        \norm{\text{MDA}(x_1, \ldots, \, x_n) - \overline{x}_{S}} \leq \frac{f}{n-f} \max_{i \in S, j \in S^{*}} \norm{x_i-x_j}
    \end{align*}
    where $\overline{x}_S \coloneqq \frac{1}{\mnorm{S}} \sum_{i \in S} x_i$.\\
    
    \noindent In doing so , we note that $\card{S^{*} \setminus S} = \card{S^{*}\cup S} - \card{S} \leq n - (n-f) = f$. The same observation holds for $\card{S \setminus S^*}$. Hence we obtain that
    \begin{align}
      \norm{\text{MDA}(x_1, \ldots, \, x_n) - \overline{x}_{S}} &= \norm{\frac{1}{n-f} \sum_{i \in S^{*}} x_i - \frac{1}{n-f} \sum_{i \in S} x_i} = \frac{1}{n-f} \norm{\sum_{i \in S^{*} \setminus S} x_i-\sum_{i \in S \setminus S^{*}} x_i} \nonumber \\
      & \leq \frac{\max(\card{S^{*} \setminus S};\card{S \setminus S^{*}})}{n-f} \max_{i \in S^*, j \in S} \norm{x_i-x_j } \leq \frac{f}{n-f} \max_{i \in S, j \in S^{*}} \norm{x_i-x_j}. \label{eq:MDA1}
    \end{align}
    % where we use the fact that 
    % \begin{equation}
    %     \card{S^{*} - S} = \card{S - S^{*}} = \card{S \cup S^{*}} - \card{S^{*}} \leq n - (n -f )= f.
    % \end{equation}
    % But now note that by the definition of $S^{*}$, we must have 
    % \begin{equation}
    %   \max_{i,j\in S^{*}} \norm{x_i - x_j} \leq \max_{i,j\in S} \norm{x_i - x_j}. 
    % \end{equation}
    As we assume that $f < n/2$, we also have
    \begin{equation*}
    \card{S^{*} \cap S} = \card{S} + \card{S^{*}}- \card{S^{*} \cup S} \geq (n-f) + (n-f) - n \geq n - 2f > 0.
    \end{equation*}
    Therefore, $S^{*} \cap S \neq \emptyset$. Let $i^*$ be an arbitrary index that belongs to both $S$ and $S^{*}$. From triangle inequality, we obtain that, for any $i^{\dag} \in S^{*}$ and $j^{\dag} \in S$, 
    \begin{align*}
        \norm{x_{i^{\dag}}- x_{j^{\dag}}} \leq \norm{x_{i^{\dag}} - x_{i^*}} + \norm{x_{i^*} - x_{j^{\dag}}} \leq \max_{i,j\in S^{*}} \norm{x_i - x_j} + \max_{i,j\in S} \norm{x_i - x_j}.
    \end{align*}
    By definition of $S^{*}$ in~\eqref{eqn:def_S*}, $\max_{i,j\in S^{*}} \norm{x_i - x_j} \leq \max_{i,j\in S} \norm{x_i - x_j}$. Thus, from above we obtain that
    \begin{align}
        \norm{x_{i^{\dag}}- x_{j^{\dag}}} \leq 2\max_{i,j\in S_{}} \norm{x_i - x_j}. \label{eq:MDA2}
    \end{align}
    As $i^{\dag}$ and $j^{\dag}$ above are arbitrary elements in $S^{*}$ and $S$, respectively, from above we obtain that
    \begin{equation*}
        \max_{i \in S, j \in S^{*}} \norm{x_i-x_j} \leq 2\max_{i,j\in S_{}} \norm{x_i - x_j}.
    \end{equation*}
    Combining the above with (\ref{eq:MDA2}) we obtain that 
    \begin{equation*}
       \norm{\text{MDA}(x_1, \ldots, \, x_n) - \overline{x}_{S}} \leq \frac{2 f}{n-f} \max_{i,j\in S_{}} \norm{x_i - x_j}.
    \end{equation*}
    As $S$ is an arbitrary subset of $[n]$ of size $n - f$, the above proves the proposition.
\end{proof}
% As an immediate corollary, we observe that MDA is order optimum in terms of the resilience coefficient.

%%%%%%%%%%%%%%%%%%%%%%%%%%%%%%%%%%%%% CWTM %%%%%%%%%%%%%%%%%%%%%%%%%%

\subsection{Coordinate-Wise Trimmed Mean (CWTM)}
\label{sec:CWTM}
Let $x \in \mathbb{R}^d$, we denote by $[x]_k$, the $k$-th coordinate of $x$. Given the input vectors $x_1, \ldots, \, x_n$ (in $\R^d$), we let $\tau_k$ denote a permutation on $[n]$ that sorts the $k$-coordinate of the input vectors in non-decreasing order, i.e., $[x_{\tau_k(1)}]_k\leq [x_{\tau_k(2)}]_k \leq\ldots \leq [x_{\tau_k(n)}]_k$. Then, the CWTM of $x_1, \ldots, \, x_n$, denoted by $\text{CWTM}(x_1, \ldots, x_n)$, is a vector in $\R^d$ whose $k$-th coordinate is defined as follows,
\begin{equation*}
    \left[\text{CWTM}(x_1, \ldots, x_n)\right]_k \coloneqq \frac{1}{n-2f} \sum_{j \in [f+1,n-f]} [x_{\tau_k(j)}]_k. \label{eqn:def_cwtm}
\end{equation*}

To obtain the resilience coefficient of CWTM, we recall show in Lemma~\ref{lemma:diameter} below how the {\em diameter} of a set of vectors is related to their {\em coordinate-wise diameter}. This lemma also proves useful to other coordinate-wise aggregation rules, e.g., CWMed.

\begin{lemma}
\label{lemma:diameter}
   For a non-empty set of $d$-dimensional vectors $S$, we have
   \begin{equation*}
        \sqrt{ \sum_{k = 1}^{d} \left(\max_{i, j \in S} \card{[x_i]_k - [x_j]_k} \right)^2} \leq \min \left\{2\sqrt{\card{S}}, \sqrt{d}\right\}\max_{i, j \in S}  \norm{x_i - x_j}.
   \end{equation*}
\end{lemma}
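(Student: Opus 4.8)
The plan is to prove the two inequalities on the right-hand side separately, then take their minimum. Fix a non-empty finite set $S$ of vectors in $\R^d$. For each coordinate $k \in \{1, \ldots, d\}$, let $i_k, j_k \in S$ achieve the coordinate-wise diameter, i.e., $\card{[x_{i_k}]_k - [x_{j_k}]_k} = \max_{i,j\in S} \card{[x_i]_k - [x_j]_k}$; denote this quantity by $D_k$. The goal is to bound $\sqrt{\sum_k D_k^2}$ by each of $2\sqrt{\card{S}}\, \rho$ and $\sqrt{d}\,\rho$, where $\rho \coloneqq \max_{i,j\in S}\norm{x_i - x_j}$.

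For the easy bound $\sqrt{d}\,\rho$: for every single $k$ we have $D_k = \card{[x_{i_k}]_k - [x_{j_k}]_k} \leq \norm{x_{i_k} - x_{j_k}} \leq \rho$, since one coordinate of a vector is dominated by its Euclidean norm. Summing squares gives $\sum_k D_k^2 \leq d\rho^2$, hence $\sqrt{\sum_k D_k^2} \leq \sqrt{d}\,\rho$. For the bound $2\sqrt{\card{S}}\,\rho$: the natural route is to pick a fixed "center" element $x_0 \in S$ (any element works; or one could use the average $\overline{x}_S$, but a genuine element of $S$ is cleaner because pairwise distances are controlled by $\rho$). Then $D_k \leq \card{[x_{i_k}]_k - [x_0]_k} + \card{[x_0]_k - [x_{j_k}]_k}$, so $D_k^2 \leq 2\card{[x_{i_k}]_k - [x_0]_k}^2 + 2\card{[x_0]_k - [x_j{}_k]_k}^2$. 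Summing over $k$ and bounding each squared coordinate difference crudely by $\sum_{i\in S}\card{[x_i]_k - [x_0]_k}^2$ gives
\[
\sum_{k=1}^d D_k^2 \leq 4 \sum_{k=1}^d \sum_{i \in S} \card{[x_i]_k - [x_0]_k}^2 = 4 \sum_{i \in S} \norm{x_i - x_0}^2 \leq 4 \card{S}\, \rho^2,
\]
where the last step uses $\norm{x_i - x_0} \leq \rho$ for all $i \in S$. Taking square roots yields $\sqrt{\sum_k D_k^2} \leq 2\sqrt{\card{S}}\,\rho$. Combining the two bounds by taking the minimum finishes the proof.

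The one subtle point — and the step I would be most careful about — is the passage $D_k^2 \le 2\card{[x_{i_k}]_k - [x_0]_k}^2 + 2\card{[x_0]_k - [x_{j_k}]_k}^2 \le 2\sum_{i\in S}\card{[x_i]_k - [x_0]_k}^2$: this needs $\card{S}\ge 2$, or more precisely that $i_k$ and $j_k$ contribute two distinct terms to the sum. If $\card{S}=1$ then all diameters are $0$ and both sides of the lemma vanish, so that edge case is trivially fine; and if $i_k = j_k$ (possible when $\card{S}\ge 2$ but the coordinate is constant) then $D_k=0$ and the bound is again immediate. So the factor-of-two-from-Cauchy–Schwarz combined with summing over $S$ (rather than just the two indices $i_k,j_k$) safely absorbs all such degeneracies, giving the clean constant $4$ inside the square root and hence $2\sqrt{\card{S}}$ outside. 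I expect no real obstacle here; the argument is a routine combination of the triangle inequality, the inequality $(a+b)^2\le 2a^2+2b^2$, and the coordinate-norm comparison $\card{[v]_k}\le\norm{v}$.
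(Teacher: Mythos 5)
Your proof is correct, but it takes a genuinely different route from the paper: the paper gives no self-contained argument at all, it simply notes that the statement is the special case $r=2$ of Lemma 18 in~\cite{collaborativeElMhamdi21}. Your argument is elementary and verifiable on the spot: the $\sqrt{d}$ branch follows from $\card{[v]_k}\le\norm{v}$ applied coordinate-by-coordinate, and the $2\sqrt{\card{S}}$ branch follows by anchoring at a fixed $x_0\in S$, splitting each coordinate diameter $D_k$ via the triangle inequality and $(a+b)^2\le 2a^2+2b^2$, and then bounding each of the two resulting squares by the full sum over $S$, so that $\sum_k D_k^2\le 4\sum_{i\in S}\norm{x_i-x_0}^2\le 4\card{S}\,\rho^2$ with $\rho=\max_{i,j\in S}\norm{x_i-x_j}$. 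Note that this chain is unconditional --- the constant $4$ requires no distinctness of the maximizing indices --- so the degeneracy discussion at the end is not actually needed for your main display; it is only relevant to the sharper factor-$2$ variant you mention there, which (as you correctly observe) also survives the edge cases and would even yield the slightly better constant $\sqrt{2\card{S}}$. What the paper's citation buys is brevity and access to the general $\ell_r$ version; what your proof buys is that the lemma is established in a few lines without consulting the external reference.
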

\begin{proof}
   Special case of Lemma 18 in~\cite{collaborativeElMhamdi21} for $r = 2$.
\end{proof}

We can now formally state the proposition proving that CWTM is an $(f,\lambda)$-resilient averaging rule.

\noindent \fcolorbox{black}{gainsboro!40}{
\parbox{0.97\textwidth}{\centering
\begin{proposition}
   If $f < n/2$ then \textnormal{CWTM} is an $(f, \, \lambda)$-resilient averaging rule for $\lambda =\frac{f}{n-f} \min \left\{2\sqrt{n-f}, \sqrt{d}\right\} $.
\end{proposition}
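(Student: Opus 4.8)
The plan is to reduce the claim to a coordinatewise estimate and then recombine the coordinates via Lemma~\ref{lemma:diameter}. Fix an arbitrary set $S \subseteq \{1, \ldots, n\}$ with $\card{S} = n-f$; I must bound $\norm{\text{CWTM}(x_1, \ldots, x_n) - \overline{x}_S} = \sqrt{\sum_{k=1}^d \left([\text{CWTM}(x_1,\ldots,x_n)]_k - [\overline{x}_S]_k\right)^2}$ by $\lambda \max_{i,j\in S}\norm{x_i - x_j}$. So it suffices to prove, for each coordinate $k$, the bound $\card{[\text{CWTM}(x_1,\ldots,x_n)]_k - [\overline{x}_S]_k} \le \frac{f}{n-f}\, d_k$ where $d_k \coloneqq \max_{i,j\in S}\card{[x_i]_k - [x_j]_k}$; indeed then $\sqrt{\sum_k \card{[\text{CWTM}]_k - [\overline{x}_S]_k}^2} \le \frac{f}{n-f}\sqrt{\sum_k d_k^2} \le \frac{f}{n-f}\min\{2\sqrt{n-f}, \sqrt{d}\}\max_{i,j\in S}\norm{x_i - x_j}$ by Lemma~\ref{lemma:diameter} applied to $S$ (which has cardinality $n-f$).

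For the coordinatewise estimate, fix $k$, write $a_\ell \coloneqq [x_{\tau_k(\ell)}]_k$ so that $a_1 \le \cdots \le a_n$, and let $b_1 \le \cdots \le b_{n-f}$ be the sorted values $\{[x_i]_k : i \in S\}$; set $m_k \coloneqq b_1$ and $M_k \coloneqq b_{n-f}$, so $d_k = M_k - m_k$ and $m_k \le b_\ell \le M_k$ for every $\ell$. Since only $f$ indices lie outside $S$, a rank-counting argument gives the sandwiching $b_{\ell-f} \le a_\ell \le b_\ell$ for all $\ell \in \{f+1, \ldots, n-f\}$: among the $n$ values at least $\ell$ are $\le b_\ell$ (namely $b_1,\ldots,b_\ell$) and at least $n-f-(\ell-f)+1$ are $\ge b_{\ell-f}$, so $b_{\ell-f}$ sits at overall rank at most $\ell$ and $b_\ell$ at overall rank at least $\ell$. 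Summing over the untrimmed indices yields $\sum_{\ell=1}^{n-2f} b_\ell \le \sum_{\ell=f+1}^{n-f} a_\ell = (n-2f)\,[\text{CWTM}(x_1,\ldots,x_n)]_k \le \sum_{\ell=f+1}^{n-f} b_\ell$.

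It remains to compare this with $(n-f)\,[\overline{x}_S]_k = \sum_{\ell=1}^{n-f} b_\ell$. For the upper direction, $[\text{CWTM}]_k - [\overline{x}_S]_k \le \frac{1}{n-2f}\sum_{\ell=f+1}^{n-f}b_\ell - \frac{1}{n-f}\sum_{\ell=1}^{n-f}b_\ell$; splitting the second sum into its bottom $f$ terms plus its middle $n-2f$ terms and using $\frac{1}{n-2f} - \frac{1}{n-f} = \frac{f}{(n-2f)(n-f)}$, the middle part contributes at most $\frac{f}{(n-2f)(n-f)}\cdot(n-2f)M_k$ while the bottom-$f$ part subtracts at least $\frac{f}{n-f}m_k$, leaving $\le \frac{f}{n-f}(M_k - m_k) = \frac{f}{n-f}d_k$. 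The lower direction is symmetric, using $[\text{CWTM}]_k \ge \frac{1}{n-2f}\sum_{\ell=1}^{n-2f} b_\ell$ and splitting $\sum_{\ell=1}^{n-f}b_\ell$ into its bottom $n-2f$ terms (bounded below by $(n-2f)m_k$) plus its top $f$ terms (bounded above by $f M_k$). Both directions give $\card{[\text{CWTM}]_k - [\overline{x}_S]_k} \le \frac{f}{n-f}d_k$, which finishes the proof. The only delicate point is the rank count establishing $b_{\ell-f} \le a_\ell \le b_\ell$, and then organizing the telescoping algebra so that the trivial factor $1$ (which already follows from $[\text{CWTM}]_k, [\overline{x}_S]_k \in [m_k, M_k]$) is sharpened to $\frac{f}{n-f}$; everything else is bookkeeping.
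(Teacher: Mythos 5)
Your proof is correct and follows essentially the same route as the paper's: reduce to the coordinatewise bound $\card{[\text{CWTM}]_k - [\overline{x}_S]_k} \leq \frac{f}{n-f}\max_{i,j\in S}\card{[x_i]_k-[x_j]_k}$ via a rank/order-statistics sandwich between the trimmed values and the sorted entries of $S$ (your summed sandwich is exactly the paper's inequalities~\eqref{eqn:S_c-k} and~\eqref{eqn:S_c-k_geq}), and then recombine coordinates with Lemma~\ref{lemma:diameter}. The only difference is bookkeeping: the paper compares individual entries of $S$ to $[c]_k$ before averaging, while you compare the partial averages directly to the coordinate extremes $m_k, M_k$, yielding the same constant $\frac{f}{n-f}$.
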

}}

\begin{proof}
   The idea of the proof is similar to that of Theorem 5 in~\cite{collaborativeElMhamdi21}. Consider an arbitrary set $S \subseteq [n]$ such that $\card{S} = n - f > f$. For each coordinate $k \in [d]$, let $\pi^S_k$ denote a permutation on $S$ such that $[x_{\pi^S_k(1)}]_k \leq [x_{\pi^S_k(2)}]_k\leq \ldots \leq [x_{\pi^S_k(\card{S})}]_k$.  Let $c =  \text{CWTM}(x_1, \ldots, x_n)$. Then, by Definition of the permuations $\tau_k$, for each $k$ we have that
%   Since there are only $f$ vectors absent from $S$, and we ignore the $f$ smallest values in computing $c$, we have
   \begin{equation}
       \frac{1}{n-2f}  \sum_{i = 1}^{n-2f} [x_{\pi^S_k(i)}]_k  \leq \frac{1}{n-2f} \sum_{j = f+1}^{n-f} [x_{\tau_k(j)}]_k = [c]_k. \label{eqn:S_c-k}
   \end{equation}
  Note that for all $j \in S$ and $k \in [d]$, we have
  \begin{align*}
      [x_j]_k &= [x_j]_k + \frac{1}{n-2f}  \sum_{i = 1}^{n-2f} [x_{\pi^S_k(i)}]_k - \frac{1}{n-2f}  \sum_{i = 1}^{n-2f} [x_{\pi^S_k(i)}]_k \\
      & = \frac{1}{n-2f}  \sum_{i = 1}^{n-2f} [x_{\pi^S_k(i)}]_k + \frac{1}{n-2f} \sum_{i = 1}^{n-2f} \left([x_j]_k - [x_{\pi^S_k(i)}]_k \right).
  \end{align*}
  Substituting from~\eqref{eqn:S_c-k} above we obtain for all $j \in S$ and $k \in [d]$ that
   \begin{equation}
       [x_j]_k \leq [c]_k + \frac{1}{n-2f} \sum_{i = 1}^{n-2f} \left([x_j]_k - [x_{\pi^S_k(i)}]_k \right) \leq [c]_k + \max_{l, m \in S} \card{[x_l]_k - [x_m]_k}. \label{eqn:S_c-k_2}     
   \end{equation}
%   for any $l \in [\card{S}]$.
   Recall that $\overline{x}_{S} := \nicefrac{1}{\mnorm{S}} \sum_{i \in S} x_i$. From~\eqref{eqn:S_c-k} and~\eqref{eqn:S_c-k_2} we obtain that
   \begin{align}
       [\overline{x}_{S}]_k &= \frac{1}{n-f} \sum_{i \in [n-f]} [x_{\pi^S_k(i)}]_k = \frac{1}{n-f}  \sum_{i = 1}^{n-2f} [x_{\pi^S_k(i)}]_k + \frac{1}{n-f}  \sum_{i = n - 2f + 1}^{n-f} [x_{\pi^S_k(i)}]_k \nonumber \\
       &\leq  \frac{n-2f}{n-f}[c]_k + \frac{f}{n-f} ([c]_k + \max_{i, j \in S} \card{[x_i]_k - [x_j]_k}) = [c]_k + \frac{f}{n-f} \max_{i, j \in S} \card{[x_i]_k - [x_j]_k}. \label{eqn:cwtm_overS_1}
   \end{align}
   ~
   
   Now, similar to~\eqref{eqn:S_c-k}, we obtain for all $k \in [d]$ that
   \begin{align}
       \frac{1}{n-2f}  \sum_{i = f+1}^{n-f} [x_{\pi^S_k(i)}]_k  \geq \frac{1}{n-2f} \sum_{j = f+1}^{n-f} [x_{\tau_k(j)}]_k = [c]_k. \label{eqn:S_c-k_geq}
   \end{align}
   In a similar manner to~\eqref{eqn:S_c-k_2}, we obtain for all $j \in S$ and $k \in [d]$ that
   \begin{align}
       [x_j]_k \geq [c]_k - \max_{l, m \in S} \card{[x_l]_k - [x_m]_k}. \label{eqn:S_c-k_geq_2}
   \end{align}
   From~\eqref{eqn:S_c-k_geq} and~\eqref{eqn:S_c-k_geq_2}, in a similar manner to~\eqref{eqn:cwtm_overS_1}, we obtain that 
   \begin{align}
       [\overline{x}_{S}]_k \geq [c]_k - \frac{f}{n-f} \max_{i, j \in S} \card{[x_i]_k - [x_j]_k}. \label{eqn:cwtm_overS_2}
   \end{align}
%   In the same manner, we have 
%   \begin{equation}
%         [\overline{x}_{S}]_k \geq [c]_k - \frac{f}{n-f} \max_{i, j \in S} \card{[x_i]_k - [x_j]_k},
%   \end{equation}
%   which yields 
~

Owing to~\eqref{eqn:cwtm_overS_1} and~\eqref{eqn:cwtm_overS_2} we obtain that, for all $k \in [d]$,
  \begin{equation*}
     \mnorm{[\overline{x}_{S}]_k - [c]_k} \leq \frac{f}{n-f} \max_{i, j \in S} \mnorm{[x_i]_k - [x_j]_k}.
  \end{equation*}
Thus, 
\begin{align*}
    \norm{\overline{x}_{S} - c} = \sqrt{ \sum_{k\in [d]} \mnorm{[\overline{x}_{S}]_k - [c]_k}^2} \leq \frac{f}{n-f} \sqrt{ \sum_{k\in [d]} \left(\max_{i, j \in S} \mnorm{[x_i]_k - [x_j]_k} \right)^2}.
\end{align*}
Recall that $c = \text{CWTM}(x_1, \ldots, x_n)$. Thus, using Lemma~\ref{lemma:diameter} we obtain that
  \begin{equation*}
      \norm{\overline{x}_{S} - \text{CWTM}(x_1, \ldots, x_n)} \leq \frac{f}{n-f} \min \left\{2\sqrt{n-f}, \sqrt{d}\right\}  \max_{i, j \in S}  \norm{x_i - x_j}.
  \end{equation*}
  As $S$ is an arbitrary subset of $[n]$ of size $n - f$, concludes the proof.
%   This concludes the proof.
\end{proof}

%%%%%%%%%%%%%%%%%%%%%%%%%%%%%%%%%%%%% MeaMed %%%%%%%%%%%%%%%%%%%%%%%%%%%%%%%%%%%%%%%%%

\subsection{Mean around Median (MeaMed)}
\label{sec:Meamed}
Let $x \in \mathbb{R}^d$, we denote by $[x]_k$, the $k$-th coordinate of $x$. Given the input vectors $x_1, \ldots, \, x_n$ (in $\R^d$), MeaMed computes the average of  the $n-f$ closest elements to the median in each dimension. Specifically, for each $k \in [d]$, $m \in [n]$, let $i_{m;k}$ be the index of the input vector with $k$-th coordinate that is $m$-th closest to $\textnormal{Median}([x_1]_k,\dots,[x_n]_k)$. Let $C_k$ be the set of $n-f$ indices defined as
\[C_k = \{i_{1;k}, \ldots, \, i_{n-f;k}\}.\] Then we have
\begin{equation*}
    [\text{MeaMed}(x_1, \ldots, x_n)]_k = \frac{1}{n-f} \sum_{i \in C_{k}} [x_i]_k,
\end{equation*}
where $\text{MeaMed}(x_1, \ldots, x_n)$ denotes the output of the aggregation rule.

\noindent \fcolorbox{black}{gainsboro!40}{
\parbox{0.97\textwidth}{\centering
\begin{proposition}
    If $f < n/2$, then \textnormal{MeaMed} is an $(f, \, \lambda)$-resilient averaging for $\lambda =\frac{2f}{n-f} \min \left\{2\sqrt{n-f}, \sqrt{d}\right\} $.
\end{proposition}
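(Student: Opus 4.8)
Proof plan. The argument will run parallel to the one just given for CWTM, with one extra factor of $2$ coming from the detour through the coordinate-wise median. Fix an arbitrary set $S \subseteq [n]$ with $\card{S} = n-f$, write $c \coloneqq \text{MeaMed}(x_1, \ldots, x_n)$, and for each coordinate $k \in [d]$ set $\Delta_k \coloneqq \max_{i,j \in S} \card{[x_i]_k - [x_j]_k}$ and $M_k \coloneqq \textnormal{Median}([x_1]_k, \ldots, [x_n]_k)$. The goal will be to establish the coordinate-wise bound $\card{[c]_k - [\overline{x}_S]_k} \leq \tfrac{2f}{n-f}\Delta_k$ for every $k$, and then to finish exactly as in the CWTM proof by squaring, summing over $k$, and invoking Lemma~\ref{lemma:diameter}.

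The first step is to locate the median. Since the $n-f$ entries $\{[x_i]_k : i \in S\}$ all lie in the interval $[\min_{i \in S}[x_i]_k, \ \max_{i \in S}[x_i]_k]$ and $n-f > n/2$ (because $f < n/2$), the median $M_k$ must lie in that interval as well; in particular $\card{[x_i]_k - M_k} \leq \Delta_k$ for every $i \in S$. This exhibits $n-f$ indices whose $k$-th coordinate is within $\Delta_k$ of $M_k$, and since $C_k$ is by definition the set of the $n-f$ indices closest to $M_k$ in coordinate $k$, it follows that $\card{[x_i]_k - M_k} \leq \Delta_k$ for every $i \in C_k$ too. Combining the two facts with $M_k \in [\min_{i\in S}[x_i]_k, \max_{i\in S}[x_i]_k]$ yields $\card{[x_i]_k - [x_j]_k} \leq 2\Delta_k$ for all $i \in C_k$ and $j \in S$.

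The second step compares the two averages. Because $\card{C_k} = \card{S} = n-f$, the symmetric differences have equal size $\card{C_k \setminus S} = \card{S \setminus C_k} \leq f$, and
\begin{align*}
    [c]_k - [\overline{x}_S]_k = \frac{1}{n-f}\left( \sum_{i \in C_k \setminus S} [x_i]_k - \sum_{j \in S \setminus C_k} [x_j]_k \right).
\end{align*}
Pairing the (equally numerous) terms of the two sums and bounding each difference by $2\Delta_k$ gives $\card{[c]_k - [\overline{x}_S]_k} \leq \tfrac{2f}{n-f}\Delta_k$. Summing the squares over $k$, taking square roots, and applying Lemma~\ref{lemma:diameter} then yields
\begin{align*}
    \norm{\text{MeaMed}(x_1, \ldots, x_n) - \overline{x}_S} \leq \frac{2f}{n-f} \sqrt{\sum_{k=1}^d \Delta_k^2} \leq \frac{2f}{n-f} \min\{2\sqrt{n-f}, \sqrt{d}\} \max_{i,j \in S} \norm{x_i - x_j},
\end{align*}
and since $S$ was an arbitrary subset of $[n]$ of size $n-f$, this is the claimed statement.

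The only delicate point is the median-location step: one has to argue carefully — being explicit about the convention used to define the median for even $n$, and about how ties in the "closeness to $M_k$" ordering that defines $C_k$ are handled — that all $n-f$ indices in $C_k$ genuinely lie within $\Delta_k$ of $M_k$. Once that is pinned down, the remainder is the same bookkeeping (symmetric-difference sizes, pairing, Lemma~\ref{lemma:diameter}) as in the CWTM argument, so I expect no further obstacles.
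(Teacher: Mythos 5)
Your proposal is correct and follows essentially the same route as the paper's proof: locate the median within the range of the coordinates indexed by $S$ (possible since $\card{S}=n-f>n/2$), deduce that every index in $C_k$ lies within the coordinate-wise diameter of $S$ from the median and hence within twice that diameter of any $j\in S$, bound $\card{[c]_k-[\overline{x}_S]_k}$ through the symmetric difference of $C_k$ and $S$ (of size at most $f$), and conclude with Lemma~\ref{lemma:diameter}. The ``delicate point'' you flag is harmless and is handled the same way implicitly in the paper: since at least $n-f$ indices lie within $\Delta_k$ of $M_k$, the $n-f$ closest ones do as well under any tie-breaking, and the median of $n$ reals lies in the range of any subset of more than $n/2$ of them under any standard convention.
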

}}

\begin{proof}
   Consider an arbitrary set $S$ such that $\card{S} = n -f$. Since $\card{S} >n/2$, by the definition of the median, for each $k \in [d]$, we have
   \begin{equation*}
       \min_{i \in S} [x_i]_k \leq \text{Median}([x_1]_k, \ldots, [x_n]_k) \leq \max_{i \in S} [x_i]_k.
   \end{equation*}
   Accordingly, for any $j \in S$ and $k \in [d]$, we have
   \begin{equation}
   \label{eq:meamed1}
       \absv{\text{Median}([x_1]_k, \ldots, [x_n]_k) - [x_j]_k} \leq  \max_{i \in S} [x_i]_k -  \min_{i \in S} [x_i]_k.
   \end{equation}
   In particular, this means that there exist at least $n - f$ vectors within $[x_1]_k, \ldots, [x_n]_k$, whose absolute deviation from $\text{Median}([x_1]_k, \ldots, [x_n]_k)$ is upper-bound by $\max_{i \in S} [x_i]_k - \min_{i \in S} [x_i]_k$. Therefore, by the definition of $C_k$, for any $j \in C_k$, we have
   \begin{equation}
   \label{eq:meamed2}
       \absv{Median([x_1]_k, \ldots, [x_n]_k) - [x_j]_k} \leq  \max_{i \in S} [x_i]_k -  \min_{i \in S} [x_i]_k.
   \end{equation}
   Combining (\ref{eq:meamed1}) and (\ref{eq:meamed2}), then implies that for any $l \in S$ and $m \in C_k$, we have
   \begin{equation*}
       \card{[x_l]_k - [x_m]_k} \leq 2 \max_{i,j \in S} ([x_i]_k- [x_j]_k).
   \end{equation*}
   Also note that $\card{S \setminus C_k} = \card{C_k \setminus S} = \card{C_k \cup S} - \card{S} \leq n - (n -f) = f$. Hence we get 
   \begin{align*}
       \card{[\text{MeaMed}(x_1, \ldots, x_n)]_k - [\overline{x}_{S}]_k} &= \frac{1}{n-f} \card{\sum_{i \in C_k} [x_i]_k - \sum_{i \in S} [x_i]_k} \\
       &= \frac{1}{n-f} \card{\sum_{i \in C_k \setminus S} [x_i]_k - \sum_{i \in S \setminus C_k} [x_i]_k}\\
       &\leq \frac{2f}{n-f} \max_{i,j \in S} ([x_i]_k- [x_j]_k).
   \end{align*}
   Finally, by using Lemma \ref{lemma:diameter}, we get
   \begin{align}
    \norm{\overline{x}_S - \text{MeaMed} (x_1, \ldots, x_n)} 
    &\leq \frac{2f}{(n-f)} \sqrt{ \sum_{k\in [d]} \left(\max_{i, j \in S} \card{[x_i]_k - [x_j]_k} \right)^2}\\ &\leq \frac{2f}{(n-f)} \min \{2\sqrt{n-f}, \sqrt{d}\}\max_{i, j \in S}  \norm{x_i - x_j}.
\end{align}
The above concludes the proof.
\end{proof}

%%%%%%%%%%%%%%%%%%%%%%%%%%%%%%%%%%%%% Multi-Krum %%%%%%%%%%%%%%%%%%%%%%%%%%%%%%%%%%%%%

\newcommand{\q}{q}
\newcommand{\MkrumSet}{M}
\subsection{(Multi-)Krum$^*$}
\label{sec:Krum}
In this section, we study a slight adaptation of the Multi-Krum algorithm first introduced in~\cite{krum}. This adaptation, called Multi-Krum$^*$, is mainly changing one step of the procedure to enhance the tolerance of the method from $f < (n - 2)/2 $ (needed for the original method) to $f < n/2$ (i.e., the optimal tolerance threshold).

Essentially, given the input vectors $x_1, \ldots, \, x_n$, Multi-Krum$^*$ outputs an average of the vectors that are the closest to their neighbors upon discarding $f$\footnote{As opposed to $f+1$ in the original version.} farthest vectors. Specifically, for each $i \in [n]$ and $k \in [n-1]$, let $i_k \in [n] \setminus \{i\}$ be the index of the $k$-th closest input vector from $x_i$, i.e., we have $\norm{x_i - x_{i_1}} \leq \ldots \leq \norm{x_i - x_{i_{n-1}}}$ with ties broken arbitrarily. Let $C_i$ be the set of $n-f-1$ closest vectors to $x_i$, i.e., 
\[C_i = \{i_{1}, \ldots, \, i_{n-f-1}\}.\] Then, for each $i \in [n]$, we define $score(i) \coloneqq \sum_{j\in C_i} \norm{x_i - x_j}^2 $. 
Finally, Multi-Krum$^*_\q$ outputs the average of $\q$ input vectors with the smallest scores, i.e., 
\begin{align*}
    \text{Multi-Krum}^{*}_\q \left(x_1, \ldots, \, x_n \right) = \frac{1}{\q} \sum_{i \in \MkrumSet(\q)} x_i,
    \label{eqn:def_krum}
\end{align*}
where $ \MkrumSet(\q)$ is the set of $\q$ vectors with the smallest scores. \underline{We call by Krum$^*$} the special case of Multi-Krum$^*_\q$ for $\q = 1$.
% We have the following result. 

Before analyzing Multi-Krum$^*_\q$, we prove the following lemma.
% \begin{lemma}
%   Given a set of vectors $x_1, \ldots, x_n$, for any vector $y$ that satisfies $\norm{y-x_i}\leq L$ for all $i \in [n]$ and for $L>0$, we have 
%      \begin{equation}
%      \label{lemma:average_dist}
%          \norm{y - \frac{1}{n} \sum_{i \in [n]} x_i} \leq L.
%      \end{equation}
% \end{lemma}
% \begin{proof}
%   Immediate: A simple application of the triangle inequality. 
% \end{proof}

\begin{lemma}
\label{lemma:mkrum}
  Consider a set $S \subset [n]$ such that $\card{S}= n-f$. Suppose $\q \leq n -f$. For any $k \in \MkrumSet(\q)$ and $l \in S$, we have
  \begin{equation*}
      \norm{x_k-{x}_{l}} \leq \left(1+\sqrt{\frac{n-f}{n-2f}}\right) \max_{i, j \in S} \norm{x_i - x_j}.
      \label{eqn:krum_proof}
  \end{equation*}
  \begin{proof}
   To demonstrate this result, we study two cases separately; {\bf case i)} $k \in S$, and {\bf case ii)} $k \not \in S$.
  
  \noindent {\bf Case i)} Let $l \in S$, if $k \in S$, by definition we have
  \begin{align}
    \norm{x_{k} - {x}_l}  \leq  \max_{i, j \in S} \norm{x_i - x_j}. \label{eqn:case_1}
    % \leq \left(1+\sqrt{\frac{n-f}{n-2f}}\right) \max_{i, j \in S} \norm{x_i - x_j}.
  \end{align}
  Thus,~\eqref{eqn:krum_proof} trivially holds in case i).
  
  \noindent {\bf Case ii)} Let us now consider that $k \not \in S$. Since $\card{\MkrumSet(\q)} = \q \leq n-f$, there exists at least an index $m \in S$ such that $m \notin \MkrumSet(\q)$. Then by the definitions of the score function $score(\cdot)$ and of the set $C_m$, we get that
  \begin{equation}
      score(m) = \sum_{j \in C_{m}} \norm{x_m - x_j}^2 \leq \sum_{j\in S} \norm{x_{m} - x_j}^2 \leq  (n-f) \max_{i, j \in S} \norm{x_i - x_j}^2.  \label{eqn:score_k}
  \end{equation}
  Since $m \notin \MkrumSet(\q)$, we have $score(k) \leq score(m)$. Accordingly, we have that
  \begin{align}
    score(k) = \sum_{j\in C_{m}} \norm{x_{k} - x_j}^2 \leq score(m). \label{eqn:score_istar}
  \end{align}
  Note that $\card{C_{k} \cap S} = \card{C_{k}} + \card{S} -  \card{C_{k} \cup S} \geq (n-f) + (n-f) - n = n - 2f$. As $f < n/2$, we get $C_{k} \cap S \neq \emptyset$. Now, as $C_{k} \cap S \subseteq C_k$, we have $\sum_{j\in C_{k} \cap S} \norm{x_{k} - x_j}^2 \leq \sum_{j\in C_{k}} \norm{x_{k} - x_j}^2$. Thus, from~\eqref{eqn:score_istar} we obtain that 
  \begin{align*}
      \sum_{j\in C_{k} \cap S} \norm{x_{k} - x_j}^2 \leq score(m).
  \end{align*}
  Substituting from~\eqref{eqn:score_k} above, we obtain that
  \begin{align*}
      \sum_{j\in C_{k} \cap S} \norm{x_{k} - x_j}^2 \leq (n-f) \max_{i, j \in S} \norm{x_i - x_j}^2.
  \end{align*}
  As $\card{C_{k} \cap S} \geq n - 2f$, $\sum_{j\in C_{k} \cap S} \norm{x_{k} - x_j}^2 \geq (n-2f) \min_{j\in S} \norm{x_{k} - x_j}^2$.  Thus, from above we obtain that
  \begin{align*}
    (n-2f) \min_{j\in S} \norm{x_{k} - x_j}^2 \leq (n-f) \max_{i, j \in S} \norm{x_i - x_j}^2.
  \end{align*}
  This implies that
  \begin{align}
     \min_{j\in S} \norm{x_{k} - x_j} \leq \sqrt{\frac{n-f}{n-2f}} \, \max_{i, j \in S} \norm{x_i - x_j}. \label{eqn:min_j_S}
  \end{align}
  Let $l \in S$ and $j^* \in \arg \min_{j \in S}  \norm{x_{k} - x_j}$. By the triangle inequality, we then obtain that
  \begin{align}
      \norm{x_{k} - {x}_l}  = \norm{x_k - x_{j^*} + x_{j^*} - x_l} \leq \norm{x_k - x_{j^*}} + \norm{x_{j^*} - x_l} \label{eqn:jstar_S}
  \end{align}
  Substituting from above in~\eqref{eqn:min_j_S} and using the fact that $\norm{x_{j^*} - x_l} \leq \max_{i, j \in S} \norm{x_i - x_j}$, we then obtain that
%   By the triangle inequality,
%   \begin{align*}
%       \norm{\sum_{j \in S} (x_{k} - x_{j^*} + x_{j^*} - x_{j})} \leq (n-f) \norm{x_{k} - x_{j^*}} + \sum_{j \in S} \norm{x_{j^*} - x_j}.
%   \end{align*}
%   Substituting from above in~\eqref{eqn:jstar_S}, we obtain that
%   \begin{align}
%       \norm{x_{k} - \overline{x}_S} \leq \norm{x_{k} - x_{j^*}} + \frac{1}{n-f}\sum_{j \in S} \norm{x_{j^*} - x_j}. \label{eqn:istar_S}
%   \end{align}
%   Recall that $j^* \in \arg \min_{j \in S}  \norm{x_{k} - x_j}$. Therefore, $\norm{x_{k} - x_{j^*}} = \min_{j\in S} \norm{x_{k} - x_j}$ and $\sum_{j \in S} \norm{x_{j^*} - x_j} \leq (n-f) \max_{i, \, j \in S} \norm{x_i - x_j}$. Therefore, from~\eqref{eqn:istar_S} we obtain that 
%   \begin{equation*}
%       \norm{x_{k} - \overline{x}_S} \leq \min_{j\in S} \norm{x_{k} - x_j} + \max_{i, j \in S} \norm{x_i - x_j}.
%   \end{equation*}
%   Substituting from~\eqref{eqn:min_j_S} above we obtain that 
  \begin{align}
      \norm{x_{k} - {x}_l} \leq \left(1+\sqrt{\frac{n-f}{n-2f}}\right) \max_{i, j \in S} \norm{x_i - x_j}. \label{eqn:case_2}
  \end{align}
  The above proves~\eqref{eqn:krum_proof} in case ii).
  
  \noindent As~\eqref{eqn:krum_proof} holds true in either case (see~\eqref{eqn:case_1} and~\eqref{eqn:case_2}), the lemma holds true.
  \end{proof}
\end{lemma}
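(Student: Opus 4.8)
The plan is to fix an arbitrary subset $S \subset [n]$ with $\card{S} = n-f$, an index $k \in \MkrumSet(\q)$, and an index $l \in S$, and then to split the argument according to whether $k$ itself belongs to $S$. If $k \in S$, then $x_k$ and $x_l$ both lie in $\{x_i : i \in S\}$, so $\norm{x_k - x_l} \leq \max_{i,j \in S}\norm{x_i - x_j}$; since the claimed factor $1 + \sqrt{(n-f)/(n-2f)}$ is at least $1$, the inequality follows immediately. Hence the entire content of the lemma lies in the case $k \notin S$.

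For $k \notin S$, the first step is a counting observation: because $\card{\MkrumSet(\q)} = \q \leq n-f = \card{S}$, the set $S$ cannot be contained in $\MkrumSet(\q)$, so I can choose an index $m \in S$ with $m \notin \MkrumSet(\q)$. I would then bound its Krum score from above. Since $C_m$ collects the nearest neighbors of $x_m$, the sum $\sum_{j \in C_m}\norm{x_m - x_j}^2$ is dominated by $\sum_{j \in S}\norm{x_m - x_j}^2$, and bounding each summand by the $S$-diameter gives $score(m) \leq (n-f)\max_{i,j\in S}\norm{x_i - x_j}^2$. Because $k \in \MkrumSet(\q)$ while $m \notin \MkrumSet(\q)$, the defining minimality of $\MkrumSet(\q)$ as the set of smallest-score indices yields $score(k) \leq score(m)$, and therefore $score(k) \leq (n-f)\max_{i,j\in S}\norm{x_i - x_j}^2$.

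The crux is to turn this bound on the aggregate score into a bound on the single nearest distance $\min_{j\in S}\norm{x_k - x_j}$. To do so I would restrict attention to the intersection $C_k \cap S$. An inclusion–exclusion count, using $\card{C_k \cup S} \leq n$, shows $\card{C_k \cap S} \geq n - 2f$, which is strictly positive precisely because $f < n/2$; in particular $C_k \cap S \neq \emptyset$. Dropping all terms outside $C_k \cap S$ in the sum defining $score(k)$ and lower-bounding each remaining term by $\min_{j\in S}\norm{x_k - x_j}^2$ gives $(n-2f)\min_{j\in S}\norm{x_k - x_j}^2 \leq score(k) \leq (n-f)\max_{i,j\in S}\norm{x_i - x_j}^2$, whence $\min_{j\in S}\norm{x_k - x_j} \leq \sqrt{(n-f)/(n-2f)}\,\max_{i,j\in S}\norm{x_i - x_j}$. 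Finally, choosing $j^* \in \argmin_{j\in S}\norm{x_k - x_j}$ and applying the triangle inequality $\norm{x_k - x_l} \leq \norm{x_k - x_{j^*}} + \norm{x_{j^*} - x_l}$, where the second term is at most the $S$-diameter since $j^*, l \in S$, produces exactly the factor $1 + \sqrt{(n-f)/(n-2f)}$ and closes the case.

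I expect the crux step to be the main obstacle: converting a bound on a \emph{sum} of squared distances (the score) into a bound on the \emph{smallest individual} distance. The whole argument hinges on showing that $C_k \cap S$ is large enough — of size at least $n - 2f$ — so that it is nonempty, lets me lower-bound the restricted sum by $(n-2f)\min_{j\in S}\norm{x_k - x_j}^2$, and anchors the subsequent triangle inequality at a point $j^*$ lying inside $S$. The hypothesis $f < n/2$ is used exactly here to guarantee $n - 2f > 0$; without it the intersection could be empty and the nearest distance would be uncontrolled.
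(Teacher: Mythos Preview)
Your proposal is correct and follows essentially the same argument as the paper: the same case split on $k \in S$ versus $k \notin S$, the same choice of an index $m \in S \setminus \MkrumSet(\q)$ to bound $score(k)$ via $score(m)$, the same inclusion--exclusion lower bound on $\card{C_k \cap S}$ to convert the score bound into a bound on $\min_{j\in S}\norm{x_k - x_j}$, and the same closing triangle inequality through $j^*$. One small refinement worth noting: since $k \notin S$ and $k \notin C_k$, you actually have $\card{C_k \cup S} \leq n-1$, which is what yields $\card{C_k \cap S} \geq (n-f-1)+(n-f)-(n-1)=n-2f$ rather than $n-2f-1$.
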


We present below a proposition characterizing the resilient averaging property of Multi-Krum$^*_q$. Note that the resilience coefficient of Krum$^*$ can be immediately derived from this proposition by substituting $q = 1$.

\noindent \fcolorbox{black}{gainsboro!40}{
\parbox{0.97\textwidth}{\centering
\begin{proposition}
   If $f < n/2$, and $\q \leq n-f$ then Multi-Krum$^*_q$ is an $(f, \, \lambda)$-resilient averaging rule for $$\lambda = \left(1 + \sqrt{\frac{n-f}{n-2f}}\right) \cdot \min \left\{1,\frac{n-q}{n-f}\right\}.$$
\end{proposition}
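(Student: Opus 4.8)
The plan is to fix an arbitrary subset $S \subseteq [n]$ with $\card{S} = n-f$, abbreviate $D := \max_{i,j\in S}\norm{x_i - x_j}$ and $\lambda_0 := 1 + \sqrt{\tfrac{n-f}{n-2f}}$ (well defined since $f < n/2$), and partition both $M(q)$ and $S$ via $A := M(q)\cap S$, $B := M(q)\setminus S$, $C := S\setminus M(q)$. Writing $p := \card{B}$ we have $\card{A} = q-p$ and $\card{C} = n-f-q+p$, and since $B$ lies both inside $M(q)$ and outside $S$ we get $p \le \min(q,f)$. The first step is to rewrite the aggregation error as a combination of two differences of averages whose endpoints both lie in "good" sets. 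Collecting the $A$-, $B$- and $C$-terms in $\tfrac1q\sum_{i\in M(q)}x_i - \tfrac1{n-f}\sum_{j\in S}x_j$ and using the identity $s_A + s_B = \tfrac{\card{C}}{n-f}$, where $s_A := \tfrac{(q-p)(n-f-q)}{q(n-f)}\ge 0$ and $s_B := \tfrac pq\ge 0$, a direct computation gives
\[
\text{Multi-Krum}^{*}_q(x_1,\ldots,x_n) - \overline{x}_S = s_A\,(\overline{x}_A - \overline{x}_C) + s_B\,(\overline{x}_B - \overline{x}_C),
\]
with the convention that a term with zero coefficient is dropped (this absorbs the case $A=\emptyset$; the case $C=\emptyset$ forces $M(q)=S$, for which the left-hand side is $0$ and we are done).

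The second step is to bound the two pieces separately. Since $A,C\subseteq S$, writing $\overline{x}_A-\overline{x}_C = \tfrac1{\card{A}\card{C}}\sum_{i\in A}\sum_{j\in C}(x_i-x_j)$ and applying the triangle inequality yields $\norm{\overline{x}_A-\overline{x}_C}\le D$. For the other piece, every pair $(i,j)$ with $i\in B\subseteq M(q)$ and $j\in C\subseteq S$ is exactly of the form covered by Lemma~\ref{lemma:mkrum} (using the hypothesis $q\le n-f$), so $\norm{x_i-x_j}\le\lambda_0 D$, whence $\norm{\overline{x}_B-\overline{x}_C}\le\lambda_0 D$. Combining these gives $\norm{\text{Multi-Krum}^{*}_q(x_1,\ldots,x_n)-\overline{x}_S}\le (s_A+\lambda_0 s_B)\,D =: g(p)\,D$.

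The third and final step is to maximize $g$ over the admissible range $0\le p\le\min(q,f)$: as a function of $p$, $g$ is affine with slope $\tfrac1q\big(\lambda_0-1+\tfrac q{n-f}\big)>0$, so $g(p)\le g(\min(q,f))$. If $q\le f$ then $g(q)=\lambda_0$, which equals $\lambda_0\min\{1,\tfrac{n-q}{n-f}\}$ since then $n-q\ge n-f$. If $q>f$ then, using the factorization $q(n-q)-f(n-f)=(q-f)(n-q-f)$, the claim $g(f)\le\lambda_0\tfrac{n-q}{n-f}$ reduces to $(q-f)(n-q-f)\le\lambda_0(q-f)(n-q-f)$, which holds because $\lambda_0\ge 1$, $q-f>0$, and $n-q-f\ge 0$ (as $q\le n-f$); since also $\tfrac{n-q}{n-f}\le 1$ here, this gives $g(f)\le\lambda_0\min\{1,\tfrac{n-q}{n-f}\}$. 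In all cases $\norm{\text{Multi-Krum}^{*}_q(x_1,\ldots,x_n)-\overline{x}_S}\le\lambda_0\min\{1,\tfrac{n-q}{n-f}\}\,D$, and as $S$ was an arbitrary set of size $n-f$, Definition~\ref{def:rational} is satisfied with the claimed $\lambda$. The main obstacle is Step~1: a naive triangle inequality applied to $\tfrac1q\sum_{i\in M(q)}(x_i-\overline{x}_S)$ only delivers the coefficient $\lambda_0$, so one must pick the decomposition so that the portion of $M(q)$ inside $S$ cancels against $S$ and contributes the small factor $\tfrac{n-f-q}{n-f}$ instead of $1$; the companion algebraic identity in Step~3 is what makes the bound collapse exactly to $\lambda_0\tfrac{n-q}{n-f}$ for $q>f$.
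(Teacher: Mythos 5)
Your proof is correct, and while it leans on the same key ingredient as the paper --- the pairwise bound of Lemma~\ref{lemma:mkrum} between any $x_k$ with $k \in \MkrumSet(\q)$ and any $x_l$ with $l \in S$ --- the combinatorial bookkeeping is genuinely different. The paper splits into two cases: for $\q \le f$ it bounds each $\norm{x_i - \overline{x}_S}$, $i \in \MkrumSet(\q)$, crudely by $\lambda_0 \max_{i,j\in S}\norm{x_i-x_j}$; for $\q > f$ it extracts a set $P \subset S \cap \MkrumSet(\q)$ of size exactly $\q - f$, rewrites the error as $\frac{1}{\q(n-f)}\norm{A-B}$ with $A$ and $B$ each a multiset of $\q(n-\q)$ vectors, and matches vectors pairwise, applying the lemma to each pair to get the factor $\frac{n-\q}{n-f}$. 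You instead prove a single exact identity parameterized by the overlap size $p = \card{\MkrumSet(\q)\setminus S}$, namely $\text{Multi-Krum}^{*}_\q - \overline{x}_S = s_A(\overline{x}_A - \overline{x}_C) + s_B(\overline{x}_B - \overline{x}_C)$ (which I checked: $s_A + s_B = \card{C}/(n-f)$ holds), bound the $A$--$C$ piece by the diameter alone since both sets sit inside $S$, reserve the lemma only for the $B$--$C$ piece, and then use monotonicity of the affine function $g(p)$ to reduce to $p = \min(\q,f)$; the algebraic identity $q(n-q)-f(n-f)=(q-f)(n-q-f)$ closes the case $\q > f$. This buys a unified treatment of both regimes (the paper's case i) is just $g(\q)=\lambda_0$), and in fact a slightly sharper intermediate bound $g(\min(\q,f))$ that makes transparent where the loss occurs (only the vectors of $\MkrumSet(\q)$ outside $S$ pay the factor $\lambda_0$), at the cost of the small optimization step; the paper's matching argument avoids any optimization but needs the separate, cruder case $\q \le f$. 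Both yield exactly the claimed $\lambda = \lambda_0 \min\{1, \frac{n-\q}{n-f}\}$.
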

}}

\begin{proof}
   Consider a set of vectors $S$ such that $\card{S} = n-f$. As in the proof of Lemma~\ref{lemma:mkrum}, we consider two different cases separately; {\bf case i)} $\q \leq f$, and {\bf case ii)} $\q > f$.
   
   \noindent {\bf Case i)} Let $\q \leq f$. By triangle inequality and Lemma \ref{lemma:mkrum}, we obtain that 
   \begin{align*}
       \norm{\text{Multi-Krum}^{*}_\q \left(x_1, \ldots, \, x_n \right) - \overline{x}_S} &= \norm{\frac{1}{\q} \sum_{i \in \MkrumSet(\q)} x_i - \overline{x}_S} \leq \frac{1}{\q} \sum_{i \in \MkrumSet(\q)} \norm{x_i - \overline{x}_S} \leq \frac{1}{\q} \sum_{i \in \MkrumSet(\q)} \norm{x_i - \frac{1}{n-f}\sum_{j\in S}x_j} \\
       &\leq \frac{1}{\q (n-f)} \sum_{i \in \MkrumSet(\q)} \sum_{j \in S}\norm{x_i - x_j} \\
       &\leq \frac{1}{\q (n-f)} \sum_{i \in \MkrumSet(\q)} \sum_{j \in S} \left(1+\sqrt{\frac{n-f}{n-2f}}\right) \max_{i, j \in S} \norm{x_i - x_j}\\
       &= \left(1+\sqrt{\frac{n-f}{n-2f}}\right) \max_{i, j \in S} \norm{x_i - x_j}. 
   \end{align*}
   Thus, the proposition holds true in case i).
   
   \noindent {\bf Case ii)} Let us now consider $\q > f$. We have $\card{S \cap \MkrumSet(\q)} = \card{S} + \card{\MkrumSet(\q)} - \card{S \cup \MkrumSet(\q)} \geq (n - f) + q - n = q -f > 0$. Therefore, there exists a set $P$ with cardinality $q-f$ such that $P \subset S \cap \MkrumSet(\q)$. Hence we get
   \begin{align}
       &\norm{\text{Multi-Krum}^{*}_\q \left(x_1, \ldots, \, x_n \right) - \overline{x}_S} = \norm{\frac{1}{n-f}\sum_{i \in S}x_i - \frac{1}{\q} \sum_{i \in \MkrumSet(\q)} x_i} \\
       &=
       \norm{\frac{1}{n-f}\sum_{i \in S\setminus P}x_i - \frac{1}{\q} \sum_{i \in \MkrumSet(\q)\setminus P} x_i - \left(\frac{1}{\q}-\frac{1}{n-f}\right)\sum_{i \in P}x_i}\\
       &= \frac{1}{q(n-f)}\norm{q\sum_{i \in S\setminus P}x_i - \left((n-f) \sum_{i \in \MkrumSet(\q)\setminus P} x_i + \left(n-f-\q\right)\sum_{i \in P}x_i\right)} = \frac{1}{q(n-f)} \norm{A-B}, \label{eq:krum_proof}
   \end{align}
   where 
   \begin{equation}
       A \coloneqq q\sum_{i \in S\setminus P}x_i \quad \text{and} \quad B \coloneqq \left((n-f) \sum_{i \in \MkrumSet(\q)\setminus P} x_i + \left(n-f-\q\right)\sum_{i \in P}x_i\right).
   \end{equation}
 Since $\card{S\setminus P} = (n-f) - (q-f) = n - q$, $A$ is a sum of $q(n-q)$ (potentially repetitive) vectors all of which belong to $S$. Also, $f(n-f)+(n-f-q)(q-f) = q(n-q)$. Thus, $B$ is also a sum of $ q(n-q)$ (potentially repetitive) vectors all of which belong to $\MkrumSet(\q)$. We now match each vector in $A$ to a vector in $B$. Using the triangle inequality and Lemma \ref{lemma:mkrum}, we the obtain
   \begin{equation}
       \norm{A-B} \leq q (n-q) \left(1+\sqrt{\frac{n-f}{n-2f}}\right) \max_{i, j \in S} \norm{x_i - x_j}. 
   \end{equation}
   Combining above with \eqref{eq:krum_proof}, we then obtain 
   \begin{equation}
       \norm{\text{Multi-Krum}^{*}_\q \left(x_1, \ldots, \, x_n \right) - \overline{x}_S} \leq \frac{n-q}{n-f} \left(1+\sqrt{\frac{n-f}{n-2f}}\right) \max_{i, j \in S} \norm{x_i - x_j}.
   \end{equation}
   This shows that the proposition holds true in case ii). 
   
   Combing the conclusions for cases i) and ii) concludes the proof
\end{proof}

\subsection{Geometric Median (GM)}
\label{sec:GM}
% % \niru{+++ (NG) GM need not be unique +++} {\color{red}True. The following arguments are correct for any GM\\}
 For input vectors $x_1, \ldots, \, x_n$, their geometric median, denoted by $\text{GM}(x_1, \ldots, x_n)$, is defined to be a vector that minimizes the sum of the distances to these vectors. Specifically, we have
 \begin{equation*}
     \textnormal{GM}(x_1, \ldots, x_n) \in \argmin_{z \in \R^d} \sum_{i = 1}^n \norm{z-x_i}.
 \end{equation*}
 For obtaining the resilience coefficient of GM, we make use of the following three lemmas. Below, we denote by $\text{Conv}(x_1, \ldots, \, x_n)$ the {\em convex hull} of $x_1, \ldots, \, x_n$, i.e., 
 \begin{align*}
     \textnormal{Conv}(x_1, \ldots, \, x_n) = \left\{ \sum_{i = 1}^n a_i x_i ~ \vline ~ \sum_{i = 1}^n a_i = 1, ~ a_i \geq 0, \, \forall i \in [n] \right\}
 \end{align*}

% \niru{ +++ (NG) by \underline{inside} you meant the {\em interior}? +++ I am slightly changing the lemma statement to make it more precise. Please correct me if I am wrong.}
% {\color{red}No, not interior. I meant  a point in the convex hull (either boundary or interior). what you've written is completely correct and much better, thanks!}
% \begin{lemma} [Proposition 6 in~\citep{Geomed21}]
% \label{lemma:GM_convex}
%     The GM of a set of vectors lies inside the convex hull of the vectors.
% \end{lemma}
\begin{lemma}
\label{lem:dia_cov}
Let $y$ and $z$ be any two points in $\textnormal{Conv}(x_1, \ldots, \, x_n)$. Then, $\norm{y - z} \leq \max_{i, \, j \in [n]} \norm{x_i - x_j}$.
\end{lemma}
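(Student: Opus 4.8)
The plan is to exploit the defining representation of points of a convex hull as convex combinations and then push a single triangle inequality through. First I would fix points $y, z \in \textnormal{Conv}(x_1, \ldots, \, x_n)$ and write, by definition of the convex hull, $y = \sum_{i=1}^n a_i x_i$ and $z = \sum_{j=1}^n b_j x_j$ with $a_i, b_j \geq 0$ for all $i, j \in [n]$ and $\sum_{i=1}^n a_i = \sum_{j=1}^n b_j = 1$.

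The key observation is that the difference $y - z$ is itself a convex combination of the pairwise differences $x_i - x_j$. Indeed, since $\sum_{j} b_j = 1$ we have $y = \sum_{i,j} a_i b_j \, x_i$, and since $\sum_{i} a_i = 1$ we have $z = \sum_{i,j} a_i b_j \, x_j$; subtracting yields $y - z = \sum_{i,j} a_i b_j \, (x_i - x_j)$. Moreover the coefficients $a_i b_j$ are non-negative and satisfy $\sum_{i,j} a_i b_j = \big(\sum_i a_i\big)\big(\sum_j b_j\big) = 1$, so they form a probability distribution over the index pairs $(i,j)$.

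Then applying the triangle inequality and bounding each $\norm{x_i - x_j}$ by $\max_{k, l \in [n]} \norm{x_k - x_l}$ gives
\[
\norm{y - z} = \norm{\sum_{i,j} a_i b_j (x_i - x_j)} \leq \sum_{i,j} a_i b_j \norm{x_i - x_j} \leq \Big( \sum_{i,j} a_i b_j \Big) \max_{k, l \in [n]} \norm{x_k - x_l} = \max_{k, l \in [n]} \norm{x_k - x_l},
\]
which is exactly the claimed bound. There is no real obstacle here: the only point requiring a little care is the "double sum" rewriting of $y - z$ so that the coefficients $a_i b_j$ are seen to be a probability distribution over index pairs; once that is set up, the conclusion follows from one use of the triangle inequality and is purely geometric, independent of Algorithm~\ref{algo}.
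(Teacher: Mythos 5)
Your proof is correct and follows essentially the same route as the paper: both express $y - z$ as the convex combination $\sum_{i,j} a_i b_j (x_i - x_j)$ with weights summing to $1$ and then apply the triangle inequality together with the bound $\norm{x_i - x_j} \leq \max_{k,l \in [n]} \norm{x_k - x_l}$. No gaps.
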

\begin{proof}
   By definition, suppose that $y = \sum_{i = 1}^n a_i x_i$ and $z = \sum_{i = 1}^n b_i x_i$ such that $\sum_{i = 1}^n a_i = 1$, $\sum_{i = 1}^n b_i = 1$, and $a_i\geq0$, $b_i\geq0$ for $i\in[n]$. We then obtain
   \begin{align*}
       \norm{y-z} &= \norm{\sum_{i = 1}^n a_i x_i - z} = \norm{\sum_{i = 1}^n a_i (x_i - z)} = \norm{\sum_{i = 1}^n a_i \left(x_i - \sum_{j = 1}^n b_j x_j\right)} = \norm{\sum_{i = 1}^n a_i \left(\sum_{j = 1}^n b_j (x_i-x_j)\right)}.
   \end{align*}
   Using triangle inequality we obtain that
   \begin{align*}
       \norm{y-z} &\leq \sum_{i = 1}^n a_i \left(\sum_{j = 1}^n b_j \norm{x_i-x_j}\right)
       \leq  \sum_{i = 1}^n a_i \left(\sum_{j = 1}^n b_j \max_{k, \, l \in [n]} \norm{x_k - x_l}\right) = \max_{k, \, l \in [n]} \norm{x_k - x_l} \sum_{i = 1}^n a_i \left(\sum_{j = 1}^n b_j \right) \\
       & = \max_{k, \, l \in [n]} \norm{x_k - x_l}.
   \end{align*}
   Hence, the proof.
\end{proof}

\begin{lemma} [Proposition 6 in~\citep{Geomed21}]
\label{lemma:GM_convex}
For any input vectors $x_1, \ldots, x_n \in \mathbb{R}^d$, the following holds true: $$\textnormal{GM}(x_1, \ldots, x_n) \in \textnormal{Conv}(x_1, \ldots, \, x_n).$$
\end{lemma}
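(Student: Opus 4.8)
The plan is to give a short self-contained argument for this classical fact (the cited reference being one possible source). Write $C \coloneqq \textnormal{Conv}(x_1, \ldots, \, x_n)$ and $g(z) \coloneqq \sum_{i=1}^{n} \norm{z - x_i}$, so that $\textnormal{GM}(x_1, \ldots, x_n)$ is by definition a minimizer of $g$ over $\R^d$. Since $C$ is the convex hull of finitely many points, it is a nonempty compact convex subset of $\R^d$, and in particular closed, so the Euclidean projection $\Pi_C$ onto $C$ is well defined and single-valued. I would argue by contradiction: suppose some minimizer $z^*$ of $g$ satisfies $z^* \notin C$, and set $z' \coloneqq \Pi_C(z^*)$, so that $z' \neq z^*$.

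The key step is to invoke the variational characterization of the projection, namely $\iprod{z^* - z'}{x - z'} \leq 0$ for every $x \in C$. Applying this with $x = x_i \in C$ and expanding the square gives, for every $i \in [n]$,
\begin{align*}
    \norm{z^* - x_i}^2 &= \norm{z^* - z'}^2 + 2 \iprod{z^* - z'}{z' - x_i} + \norm{z' - x_i}^2 \\
    &\geq \norm{z^* - z'}^2 + \norm{z' - x_i}^2,
\end{align*}
since $\iprod{z^* - z'}{z' - x_i} = - \iprod{z^* - z'}{x_i - z'} \geq 0$. Because $z^* \neq z'$ we have $\norm{z^* - z'}^2 > 0$, hence $\norm{z^* - x_i} > \norm{z' - x_i}$ for all $i$; summing over $i$ yields $g(z^*) > g(z')$, which contradicts the minimality of $z^*$. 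Therefore $z^* \in C$, and since this holds for every minimizer of $g$, the statement follows without any appeal to uniqueness of the geometric median.

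I do not expect a genuine obstacle here; the only point requiring a little care is that one must exploit the strict Pythagorean gap $\norm{z^* - z'}^2 > 0$ rather than the merely non-strict $1$-Lipschitzness of $\Pi_C$, since a non-strict inequality would not contradict minimality when the geometric median is non-unique. Alternatively, one may simply cite Proposition~6 of~\citep{Geomed21} verbatim; in that case the only thing to verify is that the referenced statement is phrased for an arbitrary finite collection of vectors in $\R^d$, which it is.
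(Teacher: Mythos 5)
Your proof is correct. Note, however, that the paper itself offers no argument for this lemma: it is imported verbatim as Proposition~6 of the cited reference, so there is no internal proof to compare against. What your write-up adds is a short self-contained derivation via the obtuse-angle (variational) characterization of the Euclidean projection onto the compact convex set $\textnormal{Conv}(x_1,\ldots,x_n)$: projecting a point outside the hull strictly decreases its distance to every $x_i$ (by the strict Pythagorean gap $\norm{z^*-z'}^2>0$), so the sum of distances strictly decreases, contradicting minimality. You are right that the strictness is the one delicate point, and your handling of it is what makes the argument apply to \emph{every} minimizer — which matters here because the paper defines $\textnormal{GM}$ as an arbitrary element of the $\argmin$ (the geometric median need not be unique, e.g.\ for collinear inputs), so the conclusion holds for whichever minimizer is returned. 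In short: the citation route buys brevity, your route buys self-containedness, and both establish the statement as used in the resilience-coefficient derivation for GM.
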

% \begin{proof}
%   Proposition 6 in \citep{Geomed21}.
% \end{proof}

For a non-empty set $S \subseteq [n]$. In the remaining, we denote by $\{x_i\}_{i \in S}$ the set of vectors which index is in $S$, i.e., $\{x_i, \, i \in S\}$.

\begin{lemma}[Theorem 1 (Part 1) in~\citep{Geomed21}]
\label{lemma:GM_resilient}
    For any set $S \subseteq [n]$ such that $\card{S} > n/2$,
    \begin{equation*}
      \norm{\textnormal{GM}(x_1, \ldots, x_n) - \textnormal{GM}\left(\{x_i\}_{i\in S} \right)}\leq \frac{1}{\sqrt{1-\frac{(n-\card{S})^2}{\card{S}^2}}} \max_{j \in S} \norm{x_j - \textnormal{GM}\left(\{x_i\}_{i\in S} \right)}.
    \end{equation*}
\end{lemma}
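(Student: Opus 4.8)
The plan is to reduce the statement to a first-order optimality condition for the \emph{global} geometric median and combine it with an elementary geometric estimate, in the spirit of the classical robustness argument for geometric medians. Throughout, write $k := \card{S} > n/2$, $m^\star := \textnormal{GM}(x_1,\ldots,x_n)$, $m_S := \textnormal{GM}\left(\{x_i\}_{i\in S}\right)$, $r := \max_{j\in S}\norm{x_j - m_S}$ and $\rho := \norm{m^\star - m_S}$. Since $1/\sqrt{1-(n-k)^2/k^2} \ge 1$, the inequality is immediate when $\rho \le r$, so I would assume $\rho > r > 0$ and introduce the unit vector $e := (m^\star - m_S)/\rho$.

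First I would use optimality of $m^\star$: as $m^\star$ minimises $z \mapsto \sum_{i=1}^n \norm{z - x_i}$, the zero vector belongs to the subdifferential of this map at $m^\star$, so there exist $u_i$ with $\norm{u_i}\le 1$, equal to $(m^\star - x_i)/\norm{m^\star - x_i}$ whenever $x_i \ne m^\star$, with $\sum_{i=1}^n u_i = 0$. Hence $\sum_{i\in S} u_i = -\sum_{i\notin S} u_i$, and taking norms gives $\norm{\sum_{i\in S} u_i} \le n - k$, so in particular $\iprod{e}{\sum_{i\in S} u_i} \le n-k$. (The assumption $\rho > r$ forces $x_i \ne m^\star$ for every $i\in S$, so these $u_i$ are genuine normalised gradients.)

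Next I would establish a pointwise lower bound on $\iprod{e}{u_i}$ for $i\in S$. Writing $d_i := \norm{m_S - x_i} \le r$ and $s_i := \iprod{e}{m_S - x_i}$ (so $\card{s_i}\le d_i$), the decomposition $m^\star - x_i = (m_S - x_i) + \rho e$ gives $\iprod{e}{m^\star - x_i} = s_i + \rho \ge \rho - d_i > 0$ and $\norm{m^\star - x_i}^2 = d_i^2 + 2\rho s_i + \rho^2$. The crux is the algebraic identity
\[
(s_i+\rho)^2 - \Bigl(1 - \tfrac{d_i^2}{\rho^2}\Bigr)\bigl(d_i^2 + 2\rho s_i + \rho^2\bigr) = \Bigl(s_i + \tfrac{d_i^2}{\rho}\Bigr)^2 \ge 0,
\]
which yields $\iprod{e}{u_i} = (s_i+\rho)/\norm{m^\star - x_i} \ge \sqrt{1 - d_i^2/\rho^2} \ge \sqrt{1 - r^2/\rho^2}$.

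Finally, summing over $i\in S$ and combining with the optimality estimate gives $k\sqrt{1 - r^2/\rho^2} \le \iprod{e}{\sum_{i\in S} u_i} \le n - k$; squaring and rearranging yields $\rho \le r/\sqrt{1 - (n-k)^2/k^2}$, which is exactly the claim since $r = \max_{j\in S}\norm{x_j - m_S}$. The main obstacle I anticipate is the pointwise bound: a naive triangle-inequality estimate on $\iprod{e}{u_i}$ only delivers the weaker constant $n/(2k-n)$, so obtaining the sharp factor $\sqrt{1-r^2/\rho^2}$ really requires the exact identity above (equivalently, locating the minimiser $s_i = -d_i^2/\rho$ exactly). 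The non-smoothness of $\norm{\cdot}$ at the data points is only a minor technicality, handled by the subgradient formulation together with the reduction to the case $\rho>r$.
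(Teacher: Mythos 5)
Your proof is correct, and there is nothing internal to compare it against: the paper does not prove this lemma but imports it verbatim as Theorem~1 (Part~1) of \citep{Geomed21}. Your self-contained derivation is the standard robustness argument for the geometric median and it does recover the cited constant exactly: the subgradient optimality condition $\sum_{i=1}^n u_i = 0$ at the global median, the resulting bound $\norm{\sum_{i\in S} u_i} \leq n-\card{S}$ contributed by the minority, and the sharp pointwise estimate $\iprod{e}{u_i} \geq \sqrt{1-d_i^2/\rho^2}$ are all sound; I checked your algebraic identity, which holds, and the implicit minimiser $s_i=-d_i^2/\rho$ is admissible precisely because $d_i \leq r < \rho$, so the square-root step is legitimate (both sides nonnegative, denominator nonzero). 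Two small observations. First, your argument never uses optimality of $\textnormal{GM}\left(\{x_i\}_{i\in S}\right)$, only that it is some point within distance $r$ of every $x_j$, $j\in S$; you therefore prove the slightly stronger statement that the global geometric median lies within $r/\sqrt{1-(n-\card{S})^2/\card{S}^2}$ of \emph{any} reference point $y$ with $\max_{j\in S}\norm{x_j-y}\leq r$, which immediately implies the lemma and is, if anything, more convenient for the way the paper uses it (it would even let one bypass Lemma~\ref{lemma:GM_convex} in the subsequent proposition). Second, the reduction ``assume $\rho > r > 0$'' silently omits the case $r=0<\rho$; but your own pointwise bound with $d_i=0$ then gives $\card{S} \leq n-\card{S}$, contradicting $\card{S}>n/2$, so this is a cosmetic gap, not a real one.
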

% \begin{proof}
%   Theorem 1 (Part 1) in \citep{Geomed21}.
% \end{proof}

By combing the above lemmas, we can devise the following result.

\noindent \fcolorbox{black}{gainsboro!40}{
\parbox{0.97\textwidth}{\centering
\begin{proposition}
   If $f < n/2$ then the \textnormal{GM} is an $(f, \, \lambda)$-resilient averaging rule for $\lambda =1+\frac{n-f}{\sqrt{(n-2f)n}}.$
\end{proposition}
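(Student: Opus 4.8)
The plan is to fix an arbitrary set $S \subseteq [n]$ with $\card{S} = n - f$ and bound $\norm{\textnormal{GM}(x_1, \ldots, x_n) - \overline{x}_S}$ by routing through the intermediate point $\textnormal{GM}\!\left(\{x_i\}_{i\in S}\right)$, the geometric median of the sub-collection indexed by $S$. By the triangle inequality,
\begin{align*}
\norm{\textnormal{GM}(x_1, \ldots, x_n) - \overline{x}_S} \leq \norm{\textnormal{GM}(x_1, \ldots, x_n) - \textnormal{GM}\!\left(\{x_i\}_{i\in S}\right)} + \norm{\textnormal{GM}\!\left(\{x_i\}_{i\in S}\right) - \overline{x}_S},
\end{align*}
and I would bound each term separately using the three lemmas already established.

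For the first term, observe that $f < n/2$ gives $\card{S} = n - f > n/2$, so Lemma~\ref{lemma:GM_resilient} applies with $n - \card{S} = f$ and yields
\begin{align*}
\norm{\textnormal{GM}(x_1, \ldots, x_n) - \textnormal{GM}\!\left(\{x_i\}_{i\in S}\right)} \leq \frac{1}{\sqrt{1 - \frac{f^2}{(n-f)^2}}} \, \max_{j\in S} \norm{x_j - \textnormal{GM}\!\left(\{x_i\}_{i\in S}\right)}.
\end{align*}
A one-line computation, $(n-f)^2 - f^2 = (n-2f)\,n$, rewrites the prefactor as $\frac{n-f}{\sqrt{(n-2f)n}}$. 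For the second term (and to control the $\max_{j\in S}$ above), I would invoke Lemma~\ref{lemma:GM_convex} to place $\textnormal{GM}\!\left(\{x_i\}_{i\in S}\right)$ inside $\textnormal{Conv}\!\left(\{x_i\}_{i\in S}\right)$; since $\overline{x}_S$ is by definition a convex combination of $\{x_i\}_{i\in S}$, it also lies in that convex hull. Applying Lemma~\ref{lem:dia_cov} to the sub-collection $\{x_i\}_{i\in S}$ then gives both $\norm{\textnormal{GM}\!\left(\{x_i\}_{i\in S}\right) - \overline{x}_S} \leq \max_{i,j\in S}\norm{x_i - x_j}$ and, for every $j \in S$, $\norm{x_j - \textnormal{GM}\!\left(\{x_i\}_{i\in S}\right)} \leq \max_{i,j\in S}\norm{x_i - x_j}$.

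Substituting these bounds into the triangle inequality yields
\begin{align*}
\norm{\textnormal{GM}(x_1, \ldots, x_n) - \overline{x}_S} \leq \left(1 + \frac{n-f}{\sqrt{(n-2f)n}}\right) \max_{i,j\in S}\norm{x_i - x_j},
\end{align*}
and since $S$ was an arbitrary subset of size $n-f$, this is exactly the $(f,\lambda)$-resilient averaging property with the claimed $\lambda$. Overall this is a bookkeeping argument assembled from Lemmas~\ref{lem:dia_cov}, \ref{lemma:GM_convex}, and \ref{lemma:GM_resilient}; I do not anticipate a genuine obstacle. The only points needing a little care are the algebraic simplification of the prefactor and verifying that the hypothesis $\card{S} > n/2$ of Lemma~\ref{lemma:GM_resilient} is precisely what $f < n/2$ provides.
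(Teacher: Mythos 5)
Your proposal is correct and follows essentially the same route as the paper's proof: the same triangle-inequality split through $\textnormal{GM}\!\left(\{x_i\}_{i\in S}\right)$, the same use of Lemma~\ref{lemma:GM_resilient} with the simplification $(n-f)^2 - f^2 = (n-2f)n$, and the same application of Lemmas~\ref{lemma:GM_convex} and~\ref{lem:dia_cov} to bound both the max term and the distance to $\overline{x}_S$ by the diameter. No gaps.
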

}}

\begin{proof}
   Consider any set $S \subseteq [n]$ such that $\card{S} = n - f > n/2$. By triangle inequality we obtain that
   \begin{align*}
       \norm{\textnormal{GM}(x_1, \ldots, x_n) - \overline{x}_{S}} \leq \norm{\textnormal{GM}(x_1, \ldots, x_n)-\textnormal{GM}\left(\{x_i\}_{i\in S} \right)} + \norm{\textnormal{GM}\left(\{x_i\}_{i\in S} \right) - \overline{x}_{S}}.
   \end{align*}
   Substituting from Lemma~\ref{lemma:GM_resilient} above we obtain that 
   \begin{align}
      \norm{\textnormal{GM}(x_1, \ldots, x_n) - \overline{x}_{S}} \leq \frac{n-f}{\sqrt{(n-2f)n}} \max_{j \in S} \norm{x_j - \textnormal{GM}\left(\{x_i\}_{i\in S} \right)} + \norm{\textnormal{GM}\left(\{x_i\}_{i\in S} \right) - \overline{x}_{S}}. \label{eqn:before_lemma_cov}
   \end{align}
   From Lemma~\ref{lemma:GM_convex}, we know that $\textnormal{GM}\left(\{x_i\}_{i\in S} \right) \in \textnormal{Conv}\left(\{x_i\}_{i \in S}\right)$. Thus, owing to Lemma~\ref{lem:dia_cov}, we have  $$\norm{x_j - \textnormal{GM}\left(\{x_i\}_{i\in S} \right)} \leq \max_{k, l \in S} \norm{x_k - x_l},  \, \forall j \in S.$$ Similarly, as $\overline{x}_{S} \in \textnormal{Conv}\left(\{x_i\}_{i \in S}\right)$, we get  
   $$\norm{\textnormal{GM}\left(\{x_i\}_{i\in S} \right) - \overline{x}_{S}} \leq \max_{k, l \in S} \norm{x_k - x_l}.$$ Using these in~\eqref{eqn:before_lemma_cov} we obtain that
   \begin{align*}
       \norm{\textnormal{GM}(x_1, \ldots, x_n) - \overline{x}_{S}} &\leq \frac{n-f}{\sqrt{(n-2f)n}} \max_{i, j \in S} \norm{x_i - x_j} + \max_{i, j \in S} \norm{x_i - x_j} = \left(1+\frac{n-f}{\sqrt{(n-2f)n}}\right) \max_{i, j \in S} \norm{x_i - x_j}.
   \end{align*}
   As $S$ is an arbitrary subset of $[n]$ of size $n - f$, by Definition~\ref{def:rational}, the above proves the proposition.
\end{proof}

\subsection{Coordinate-Wise Median (CWMed)}
\label{sec:CWMed}
For input vectors $x_1, \ldots, \, x_n$, their coordinate-wise median, denoted by $\text{CWMed}(x_1, \ldots, x_n)$, is defined to be a vector whose $k$-th coordinate, for all $k \in [d]$, is defined to be
% As the name suggests, for each coordinate $k \in [d]$, CWMed computes the median of the $k$-th coordinate of all vectors, i.e., 
\begin{equation}
    \left[\text{CWMed}\left(x_1, \ldots, x_n\right)\right]_k \coloneqq \text{Median}\left( [x_1]_k, \ldots [x_n]_k \right). \label{eqn:def_cwmed}
\end{equation}
Before analyzing CWMed, we prove a useful lemma for the median operator.
\begin{lemma}
\label{lemma:median_property}
Consider a set of $n$ real numbers $\{y_1, \ldots, \, y_n\}$. If $f < n/2$ then for any subset $S \subseteq [n]$ with $\card{S} = n -f$ we obtain that
%  Consider any subset $S$ of the set of n real numbers $\{y_1, \ldots, y_n\}$ such that $\card{S} = n -f$. We then have
  \begin{equation}
     \card{\left\{i \in S ~ \vline ~ y_i \leq \textnormal{Median}(y_1, \ldots, y_n)\right\}} \geq \frac{n}{2}-f  \quad \text{and} \quad \card{\left\{i \in S ~ \vline ~ y_i \geq \textnormal{Median}(y_1, \ldots, y_n)\right\}} \geq \frac{n}{2}-f.
  \end{equation}
\end{lemma}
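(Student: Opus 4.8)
\textbf{Proof plan for Lemma~\ref{lemma:median_property}.}

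The plan is to reduce both claimed inequalities to a simple counting argument about the median. Recall that for $n$ real numbers the median $\mu \coloneqq \textnormal{Median}(y_1, \ldots, y_n)$ has the property that at least $\lceil n/2 \rceil$ of the $y_i$ satisfy $y_i \leq \mu$ and at least $\lceil n/2 \rceil$ satisfy $y_i \geq \mu$ (this holds for both the odd-$n$ convention and any reasonable even-$n$ convention, and in particular with the weaker bound $n/2$ that we only need here). So first I would fix the set $A \coloneqq \{ i \in [n] \mid y_i \leq \mu \}$ and observe $\card{A} \geq n/2$.

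Next, the key step is to intersect $A$ with $S$. Since $\card{S} = n - f$ and $A, S \subseteq [n]$, the inclusion--exclusion bound gives
\[
  \card{A \cap S} \;\geq\; \card{A} + \card{S} - n \;\geq\; \frac{n}{2} + (n - f) - n \;=\; \frac{n}{2} - f.
\]
This is exactly the first inequality, since $\{ i \in S \mid y_i \leq \mu\} = A \cap S$. The second inequality follows by the identical argument applied to $A' \coloneqq \{ i \in [n] \mid y_i \geq \mu \}$, using $\card{A'} \geq n/2$; nothing changes. I would note that the hypothesis $f < n/2$ is what makes the bound $n/2 - f$ a positive quantity (so the conclusion is non-vacuous), but strictly the inclusion--exclusion step does not need it.

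There is no real obstacle here; the only point requiring a word of care is pinning down the ``$\card{A} \geq n/2$'' fact for whichever median convention the paper uses, so I would state it explicitly as the one structural property of $\textnormal{Median}$ being invoked and then let the set arithmetic do the rest. The whole argument is two lines of counting once that property is in hand.
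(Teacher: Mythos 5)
Your proposal is correct and follows essentially the same route as the paper: the paper also invokes the defining property that at least $n/2$ of the $y_i$ lie on each side of the median and then concludes by the same counting over the $f$ indices outside $S$, which is exactly your inclusion--exclusion step made explicit. Nothing is missing.
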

\begin{proof}
Consider an arbitrary set $S \subseteq [n]$ with $\card{S} = n-f$. By the definition of the median operator, we have
\[\card{\left\{i \in [n] ~ \vline ~ y_i \leq \textnormal{Median}(y_1, \ldots, y_n)\right\}} \geq \frac{n}{2} \quad \text{and} \quad \card{\left\{i \in S ~ \vline ~ y_i \geq \textnormal{Median}(y_1, \ldots, y_n)\right\}} \geq \frac{n}{2}.\]x
As $\card{S} = n-f > f$, the proof follows immediately from above.
\end{proof}

\noindent \fcolorbox{black}{gainsboro!40}{
\parbox{0.97\textwidth}{\centering
\begin{proposition}
   If $f <n/2$ then \textnormal{CWMed} is an $(f, \, \lambda)$-resilient averaging rule for $\lambda =\frac{n}{2(n-f)} \min \left\{2\sqrt{n-f}, \sqrt{d}\right\}$.
\end{proposition}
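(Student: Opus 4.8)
The plan is to reuse the coordinate-wise strategy already employed for CWTM and MeaMed: prove a one-dimensional estimate bounding the distance from the median to the mean of an $(n-f)$-subset by a fraction of that subset's range, and then pass from coordinate-wise diameters to the Euclidean diameter via Lemma~\ref{lemma:diameter}.

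First I would fix an arbitrary $S \subseteq [n]$ with $\card{S} = n-f$ and a coordinate $k \in [d]$. Let $m_k \coloneqq \textnormal{Median}([x_1]_k, \ldots, [x_n]_k)$, let $a_1 \leq \cdots \leq a_{n-f}$ be the values $\{[x_i]_k : i \in S\}$ sorted in non-decreasing order, and set $D_k \coloneqq \max_{i,j\in S}\absv{[x_i]_k - [x_j]_k} = a_{n-f} - a_1$. Since $f < n/2$, the integer $p \coloneqq \lceil n/2 - f\rceil$ satisfies $1 \leq p \leq n-f$. Applying Lemma~\ref{lemma:median_property} to the reals $[x_1]_k, \ldots, [x_n]_k$ and the set $S$ shows that at least $p$ of the $a_i$ are $\leq m_k$ and at least $p$ are $\geq m_k$; hence $a_p \leq m_k \leq a_{n-f-p+1}$, and moreover $n-f-p \leq n - f - (n/2 - f) = n/2$.

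Next I would bound $\absv{m_k - [\overline{x}_S]_k}$, where $[\overline{x}_S]_k = \frac{1}{n-f}\sum_{i=1}^{n-f}a_i$. Splitting the sum after index $n-f-p$ and using $a_i \geq a_1$ on the first block and $a_i \geq a_{n-f-p+1}$ on the last $p$ terms gives $[\overline{x}_S]_k \geq \frac{1}{n-f}\big((n-f-p)a_1 + p\,a_{n-f-p+1}\big)$, hence
\[ a_{n-f-p+1} - [\overline{x}_S]_k \leq \frac{n-f-p}{n-f}\big(a_{n-f-p+1} - a_1\big) \leq \frac{n-f-p}{n-f}D_k. \]
Symmetrically, splitting after index $p$ and using $a_i \leq a_p$ on the first $p$ terms and $a_i \leq a_{n-f}$ on the rest yields $[\overline{x}_S]_k - a_p \leq \frac{n-f-p}{n-f}D_k$. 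Since $a_p \leq m_k \leq a_{n-f-p+1}$, the two estimates combine (according to whether $m_k \geq [\overline{x}_S]_k$ or not) into $\absv{m_k - [\overline{x}_S]_k} \leq \frac{n-f-p}{n-f}D_k \leq \frac{n}{2(n-f)}D_k$.

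Finally, using $[\textnormal{CWMed}(x_1,\ldots,x_n)]_k = m_k$, I would square and sum over $k \in [d]$ to get $\norm{\textnormal{CWMed}(x_1,\ldots,x_n) - \overline{x}_S} \leq \frac{n}{2(n-f)}\sqrt{\sum_{k\in[d]}D_k^2}$, and then invoke Lemma~\ref{lemma:diameter} with the set of vectors $\{x_i : i \in S\}$ (of cardinality $n-f$) to conclude $\sqrt{\sum_{k}D_k^2} \leq \min\{2\sqrt{n-f},\sqrt{d}\}\max_{i,j\in S}\norm{x_i-x_j}$. Since $S$ was an arbitrary subset of $[n]$ of size $n-f$, Definition~\ref{def:rational} holds with $\lambda = \frac{n}{2(n-f)}\min\{2\sqrt{n-f},\sqrt{d}\}$. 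I expect the only delicate point to be the one-dimensional median-to-mean bound: one must track carefully which order statistics dominate which block of the sorted sum, though no inequality beyond rearranging that sum is needed, and the ceiling in $p$ is harmless since only $p \geq n/2 - f$ together with $1 \leq p \leq n-f$ are used.
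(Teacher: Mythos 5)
Your proposal is correct and follows essentially the same route as the paper's proof: both use Lemma~\ref{lemma:median_property} to obtain the per-coordinate bound $\absv{[\overline{x}_S]_k - \textnormal{Median}([x_1]_k,\ldots,[x_n]_k)} \leq \frac{n}{2(n-f)}\max_{i,j\in S}\absv{[x_i]_k-[x_j]_k}$ and then invoke Lemma~\ref{lemma:diameter} to pass to the Euclidean diameter. Your bookkeeping via the sorted order statistics $a_p$ and $a_{n-f-p+1}$ with $p=\lceil n/2-f\rceil$ is just a slightly more explicit (and, regarding integrality of the counts, marginally more careful) version of the paper's split of $S$ into the values above the median and the rest bounded by the subset minimum/maximum.
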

}}

\begin{proof}
   Consider a $S \subset [n]$ such that $\card{S} = n-f$. As $f < n/2$, from Lemma~\ref{lemma:median_property} we obtain that
\begin{equation*}
    \min_{i \in S} [x_i]_k\leq \textnormal{Median} \left( [x_1]_k, \ldots [x_n]_k \right) \leq  \max_{i \in S} [x_i]_k.
\end{equation*}
This implies that 
\begin{equation}
    \textnormal{Median} \left( [x_1]_k, \ldots [x_n]_k \right) - (\max_{i \in S} [x_i]_k - \min_{i \in S} [x_i]_k) \leq \min_{i \in S} [x_i]_k. \label{eqn:cw_med_1}
\end{equation}
Note that, by Lemma~\ref{lemma:median_property}, at least $n/2 - f$ values in $\{y_i, ~ i \in S\}$ are greater than or equal to $\textnormal{Median}\left( [x_1]_k, \ldots [x_n]_k \right)$. Thus, as the remaining $n/2$ values in $\{y_i, ~ i \in S\}$ are greater than or equal to $ \min_{i \in S} [x_i]_k$, we obtain that
\begin{align*}
    [\overline{x}_S]_k = \frac{1}{n-f} \sum_{i \in S} [x_i]_k \geq \frac{1}{n-f} \left( \left(\frac{n}{2}-f \right) \textnormal{Median} \left( [x_1]_k, \ldots [x_n]_k \right) + \frac{n}{2}  \min_{i \in S} [x_i]_k\right)
\end{align*}
Substituting from~\eqref{eqn:cw_med_1} above we obtain that
\begin{align}
    &[\overline{x}_S]_k \geq \frac{1}{n-f} \left( \left(\frac{n}{2}-f \right) \textnormal{Median} \left( [x_1]_k, \ldots [x_n]_k \right) + \frac{n}{2}  \left(\textnormal{Median} \left( [x_1]_k, \ldots [x_n]_k \right) - (\max_{i \in S} [x_i]_k - \min_{i \in S} [x_i]_k)\right)\right) \nonumber \\
    &= \textnormal{Median} \left( [x_1]_k, \ldots [x_n]_k \right) - \frac{n}{2(n-f)} (\max_{i \in S} [x_i]_k - \min_{i \in S} [x_i]_k). \label{eqn:cw_med_2}
\end{align}
Similarly, we can show that 
\begin{equation}
    [\overline{x}_S]_k \leq \textnormal{Median} \left( [x_1]_k, \ldots [x_n]_k \right) + \frac{n}{2(n-f)} (\max_{i \in S} [x_i]_k - \min_{i \in S} [x_i]_k). \label{eqn:cw_med_3}
\end{equation}
From~\eqref{eqn:cw_med_2} and~\eqref{eqn:cw_med_3} we obtain that
\begin{equation*}
    \absv{ [\overline{x}_S]_k - \textnormal{Median} \left( [x_1]_k, \ldots [x_n]_k \right)} \leq \frac{n}{2(n-f)} \max_{i,j \in S} ([x_i]_k - [x_j]_k).
\end{equation*}
Finally, substituting from Lemma~\ref{lemma:diameter} we obtain that
\begin{align*}
    \norm{\overline{x}_S - \text{CWMed} (x_1, \ldots, x_n)} &= \sqrt{\sum_{k\in [d]}\absv{ [\overline{x}_S]_k - \text{Median} \left( [x_1]_k, \ldots [x_n]_k \right)}^2}\\
    &\leq \frac{n}{2(n-f)} \sqrt{ \sum_{k\in [d]} \left(\max_{i, j \in S} \card{[x_i]_k - [x_j]_k} \right)^2}\\ &\leq \frac{n}{2(n-f)} \min \left\{2\sqrt{n-f}, \sqrt{d}\right\}\max_{i, j \in S}  \norm{x_i - x_j}.
\end{align*}
The above concludes the proof.
\end{proof}

\subsection{Centered Clipping (CC)}
\label{sec:CC}
This aggregation rule was proposed by~\cite{Karimireddy2021}. Specifically, given the input vectors $x_1, \ldots, \, x_n \in \R^d$, upon choosing a {\em clipping parameter} $c_\tau \geq 0$, we compute a sequence of vectors $v_0, \ldots, \, v_L$ in $\R^d$ such that for all $l \in [L]$,
\begin{equation*}
    v_l \xleftarrow{} v_{l-1} + \frac{1}{n} \sum_{i \in [n]} (x_i - v_{l-1}) \min \left\{1,\frac{c_\tau}{\norm{x_i - v_{l-1}}} \right\}
\end{equation*}
where $v_0$ may be chosen arbitrary. Then, $\text{CC}(x_1,\ldots,x_n) = v_{L}$.

According to~\citet{Karimireddy2021}, by setting specific values for parameters $c_\tau$ and $L$, CC can satisfy the condition of $(f, \, \lambda)$-resilient averaging for $\lambda = 20\sqrt{10} f/n$ when $f<n/9.7$. However, they rely on extra information that is often not possible in practice. Specifically, the values for parameters $c_\tau$ and $L$ depend on the maximal variance of the honest gradients $\sigma$, and we must also know a bound on the initial estimate error $\condexpect{}{\norm{\overline{x_\mathcal{H}}-v_0}^2}$ where $\overline{x_\mathcal{H}}$ is the average of the honest vectors. Analyzing CC under standard assumptions and without any extra information remains an open question.

% in order for CC to satisfy their notion of robustness, called $(\delta,c)$-robust aggregation, we must set specific values for parameters $\tau$ and $L$ that depend on the maximal variance of the honest gradients $\sigma$. Additionally, we also need to know a bound on the initial estimate error $\condexpect{}{\norm{\overline{x_\mathcal{H}}-v_0}^2}$ where $\overline{x_\mathcal{H}}$ is the average of the honest vectors. Using this extra information,~\citet{Karimireddy2021} prove that CC is a $(\delta, c)$-robust aggregator for $f<n/9.7$.  %It is another criteria of robustness which is stronger than $(f, \, \lambda)$-Resilient averaging and is not satisfied by any other aggregation rule. 

\subsection{Comparative Gradient Elimination (CGE)}
\label{sec:CGE}
For input vectors $x_1, \ldots, \, x_n$, let $\tau$ denote a permutation on $[n]$ that sorts the input vectors based on their norm and in non-decreasing order, i.e., $\norm{x_{\tau(1)}}\leq \norm{x_{\tau(2)}} \leq \ldots \leq \norm{x_{\tau(n)}}$. CGE outputs the average of the $n-f$ vectors with smallest norm~\cite{gupta2021byzantine}, i.e., 
\begin{equation*}
    \text{CGE}(x_1,\ldots,x_n) = \frac{1}{n-f} \sum_{i=1}^{n-f} x_{\tau(i)}.
\end{equation*}
In general, CGE is \emph{not} resilient averaging as shown below using a counter-example. 

\begin{proof}[Counter-example]
   Consider input vectors $x_1, \ldots, \, x_n$ and a subset $S \subset [n]$ with $\card{S} = n-f$ such that $x_i = x$ for all $i \in S$ where $\norm{x} > 0$. If $\norm{x_j} < \norm{x}$ for all $j \in S \setminus [n]$, and $\sum_{j \in [n] \setminus S} x_j \neq f \times x$ then
\[\text{CGE}(x_1,\ldots,x_n) = \frac{1}{n-f} \left( \sum_{j \in [n] \setminus S} x_j + (n-2f) \, x \right) \neq x. \]
As $\overline{x_S} = x$ and $\max_{i, \, j \in S}\norm{x_i - x_j} = 0$, from above we obtain that, for all $\lambda \geq 0$,
\[\norm{\text{CGE}(x_1,\ldots,x_n) - \overline{x_S}} = \norm{\text{CGE}(x_1,\ldots,x_n) - x} > 0 = \lambda \max_{i, \, j \in S}\norm{x_i - x_j}.\]
Thus, by Definition~\ref{def:rational}, CGE is \emph{not} resilient averaging.
\end{proof}

\newpage
\section{Additional Information on the Experimental Setup}\label{app:exp_setup}

\subsection{Attacks Simulating Byzantine Behavior}
In the experiments of this paper, we use four state-of-the-art attacks that we refer to as \textit{empire}~\cite{empire}, \textit{little}~\cite{little}, \textit{sign-flipping}~\cite{allen2020byzantine}, and \textit{label-flipping}~\cite{allen2020byzantine}. The first two attacks rely on the same core idea. Let $\zeta$ be fixed a non-negative real number and let $a_t$ be the attack vector at time step $t$. At every time step $t$, all Byzantine workers send $\overline{g_t} + \zeta a_t$ to the server, where $\overline{g_t}$ is an estimate of the true gradient at step $t$. The specific details of these attacks are mentioned below.
\begin{itemize}
    \item \textbf{Fall of Empires.} In this attack, $a_t = - \overline{g_t}$. All Byzantine workers thus send $(1 - \zeta) \overline{g_t}$ at step $t$. In our experiments, we set $\zeta = 1.1$ for \textit{empire}, corresponding to $\epsilon = 0.1$ in the notation of the original paper.
    \item \textbf{Little is Enough.} In this attack, $a_t = - \sigma_t$, where $\sigma_t$ is the opposite vector of the coordinate-wise standard deviation of $\overline{g_t}$. In our experiments, we set $\zeta = 1$ for \textit{little}.
\end{itemize}
The remaining attacks rely on different primitives. Specifically, they are defined as follows.
\begin{itemize}
    \item \textbf{Sign-flipping.} In this attack, every Byzantine worker sends the negative of its gradient to the server.
    \item \textbf{Label-flipping.} In this attack, every Byzantine worker computes its gradient on flipped labels before sending it to the server. Since the labels for MNIST, Fashion-MNIST, and CIFAR-10 are in $\{0, 1, ..., 9\}$, the Byzantine workers flip the labels by computing $l' = 9 - l$ for every training datapoint of the batch, where $l$ is the original label and $l'$ is the flipped/modified label.
\end{itemize}

\subsection{Dataset Pre-processing}
MNIST receives an input image normalization of mean $0.1307$ and standard deviation $0.3081$.
Fashion-MNIST is horizontally flipped.
CIFAR-10 is horizontally flipped and we apply a per-channel normalization with means $0.4914, 0.4822, 0.4465$ and standard deviations $0.2023, 0.1994, 0.2010$.

\subsection{Detailed Model Architecture}\label{app:model_arch}
In this section, we discuss the different models tested in our experimental study. In particular, we experimented with one \textit{convolutional} model and one simple \textit{feed-forward neural network} for both MNIST and Fashion-MNIST, as well as one \textit{convolutional} model for CIFAR-10. In order to present the architecture of the different models, we use the compact notation introduced in \cite{distributed-momentum}.

\noindent \fcolorbox{black}{gainsboro!40}{
\parbox{0.98\textwidth}{
L(\#outputs) represents a \textbf{fully-connected linear layer}, R stands for \textbf{ReLU activation}, S stands for \textbf{log-softmax}, C(\#channels) represents a \textbf{fully-connected 2D-convolutional layer} (kernel size 3, padding 1, stride 1), M stands for \textbf{2D-maxpool} (kernel size 2), B stands for \textbf{batch-normalization}, and D represents \textbf{dropout} (with fixed probability 0.25).
}}

\paragraph{Convolutional Model for CIFAR-10.}%\label{app:model_cifar}
The convolutional model used for CIFAR-10, introduced in~\cite{little}, can thus be written in the following way:
\begin{center}
(3,32×32)-C(64)-R-B-C(64)-R-B-M-D-C(128)-R-B-C(128)-R-B-M-D-L(128)-R-D-L(10)-S.
\end{center}

\paragraph{Convolutional Model for (Fashion-)MNIST.}
We adopt the same notation introduced earlier, with the only difference that C(\#channels) now represents a fully-connected 2D-convolutional layer of kernel size 5, padding 0, and stride 1. The convolutional model we used for MNIST and Fashion-MNIST can thus be written in the following way:
\begin{center}
C(20)-R-M-C(20)-R-M-L(500)-R-L(10)-S.
\end{center}

\paragraph{Simple Feed-forward Network for (Fashion-)MNIST.}
We consider a feed-forward neural network composed of two fully-connected linear layers of respectively 784 and 100 inputs (for a total of $d = 79\,510$ parameters) and terminated by a \emph{softmax} layer of 10 dimensions. ReLU is used between the two linear layers. For this particular model, we used the Cross Entropy loss, a total number of workers $n = 15$, a constant learning rate $\gamma = 0.5$, and a clipping parameter $C=2$. We also add an $\ell_2$-regularization factor of $10^{-4}$. Note that some of these constants are reused from the literature on BR, especially from~\cite{little, empire, distributed-momentum}.
\section{Additional Experimental Results}\label{app:exp_results}

% \textbf{Reproducibility.}
% All our experiments (training + graphs) are reproducible in one command.
% Please see \texttt{code/README.md} in the supplementary material.
% Additional graphs are available in \texttt{plots/}.

\subsection{Results on Fashion-MNIST}\label{app:exp_results_FM}
\begin{figure*}[!ht]
    \centering
    \includegraphics[width=\textwidth]{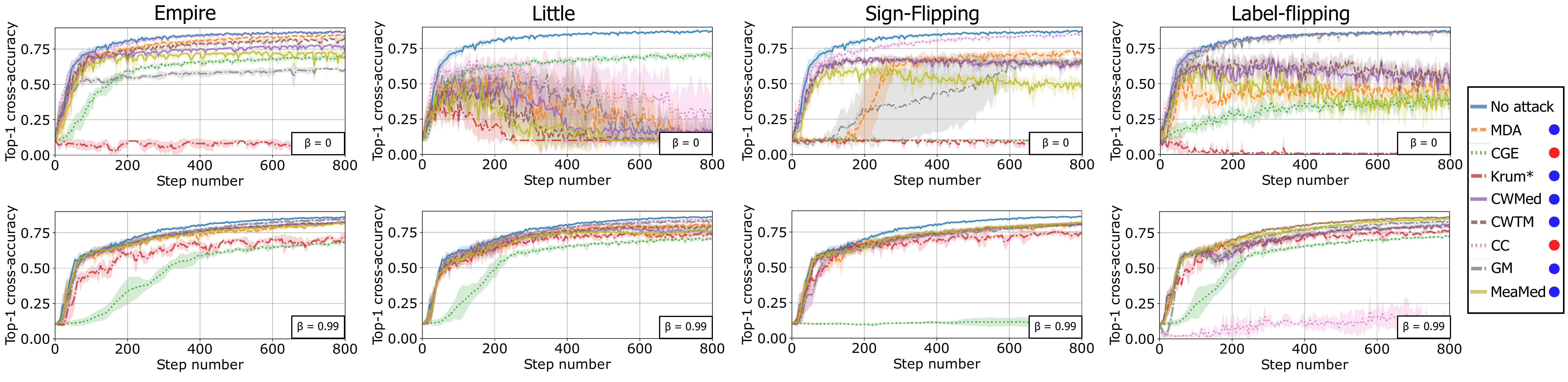}
    \caption{The $1$st and $2$nd rows correspond to experiments performed on Fashion-MNIST with $\beta = 0$ and $\beta = 0.99$, respectively. The different columns show the performance of the learning under the \textit{empire}, \textit{little}, \textit{sign-flipping}, and \textit{label-flipping} attacks with $f = 5$ Byzantine workers.}
    \label{fig:plots_fashion}
\end{figure*}
We also perform experiments (similar to those described in Section~\ref{sec:experiments}) on the Fashion-MNIST dataset. In Figure~\ref{fig:plots_fashion}, we display the top-1 cross accuracies achieved by different aggregation rules on the Fashion-MNIST dataset in a distributed system comprising $n=15$ workers, out of which $f=5$ are Byzantine executing four different state-of-the-art attacks. We compare the performances under two momentum settings: $\beta = 0$ (i.e., momentum is not used) and $\beta = 0.99$.\\
We can clearly see from Figure~\ref{fig:plots_fashion} the improvement that momentum brings to the learning in every single Byzantine setting (i.e., in each of the four attack scenarios), especially for the six resilient averaging aggregation rules (MDA, CWTM, CWMed, MeaMed, Krum$^*$, and GM). However, the performance of CGE seems unaffected by the increase in momentum especially under the \textit{empire}, \textit{little}, and \textit{sign-flipping} attacks. Furthermore, CC displays poor performance under \textit{little} for $\beta = 0$ and under \textit{label-flipping} for $\beta = 0.99$, indicating that there always seems to exist at least one setting where CC (and CGE) display poor performance. All these observations clearly echo the main takeaway of our experiments in Section~\ref{sec:experiments}, where using \textbf{both} a $(f, \lambda)$-resilient averaging aggregation rule together with momentum seems to be crucial to mitigate the effect of Byzantine workers and dramatically improve the learning in an arbitrary adversarial setting (i.e., when the executed attack is not known beforehand).

\subsection{The case of CC - $\beta = 0.9$}\label{app:exp_results_CC}
\begin{figure*}[!ht]
    \centering
    \includegraphics[width=\textwidth]{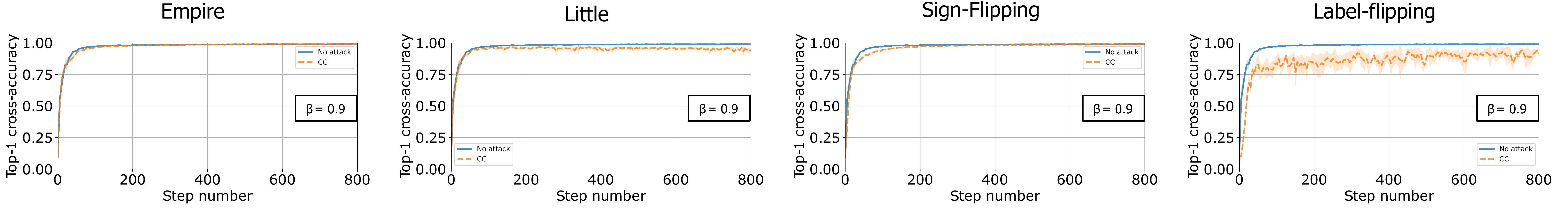}
    \includegraphics[width=\textwidth]{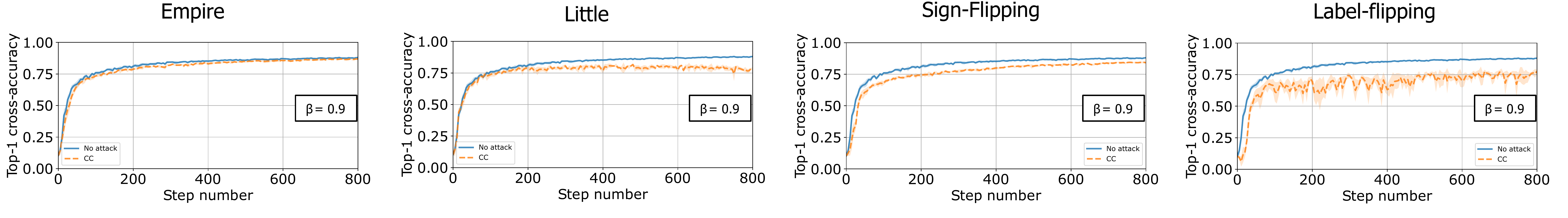}
    \caption{The $1$st and $2$nd rows correspond to experiments performed on MNIST and Fashion-MNIST, respectively, using $\beta = 0.9$ and the CC aggregation rule. The different columns show the performance of the learning under the \textit{empire}, \textit{little}, \textit{sign-flipping}, and \textit{label-flipping} attacks with $f = 5$ Byzantine workers. The ``No attack'' curve is also provided as a baseline for comparison. }
    \label{fig:plots_mnist_CC_0.99}
\end{figure*}
In Figure~\ref{fig:plots_mnist_CC_0.99}, we show the performance of CC (which is not an $(f, \lambda)$-resilient averaging rule) on the MNIST and Fashion-MNIST datasets, with $\beta = 0.9$ and $f = 5$ Byzantine workers. CC displays good performance against all four attacks for that particular value of $\beta$. Essentially, CC can consistently work for some values of momentum ($\beta = 0.9$), while others significantly deteriorate its performance in some cases (see $\beta = 0.99$ in Figure~\ref{fig:plots} of the main paper). Precisely characterizing the impact of momentum on CC's performance remains arguably an open question.

\subsection{Results on MNIST With 7 Byzantine Workers}\label{app:exp_results_M_0.999}
\begin{figure*}[!ht]
    \centering
    \includegraphics[width=\textwidth]{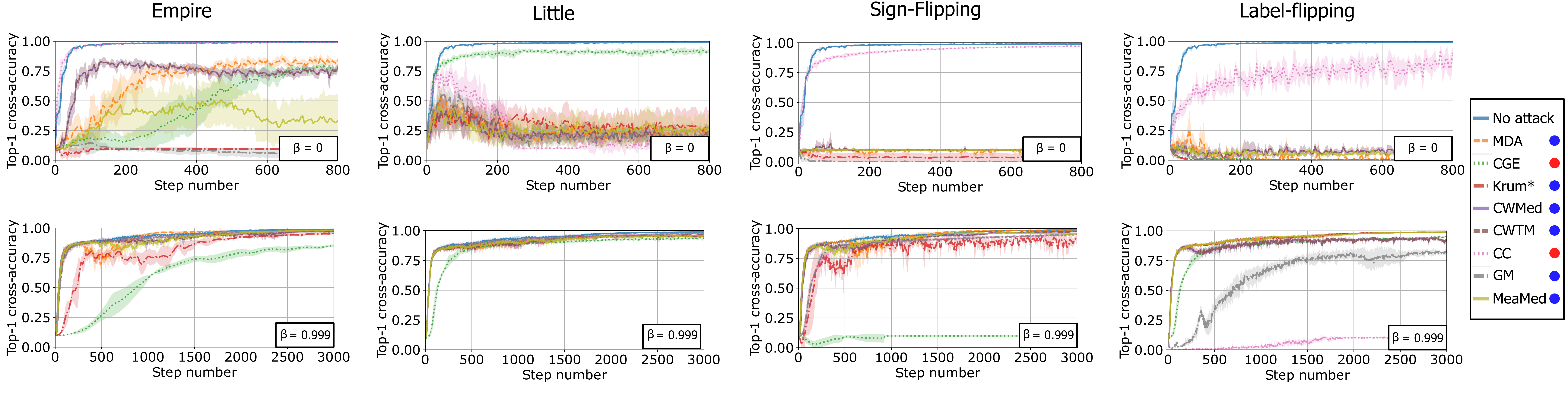}
    \caption{The $1$st and $2$nd rows correspond to experiments performed on MNIST with $\beta = 0$ and $\beta = 0.999$, respectively. The different columns show the performance of the learning under the \textit{empire}, \textit{little}, \textit{sign-flipping}, and \textit{label-flipping} attacks with $f = 7$ Byzantine workers. Note that in this experiment, we use Multi-Krum$^*$ with $q = n - f$ instead of Krum$^*$.}
    \label{fig:plots_mnist_0.999}
\end{figure*}
In this paragraph, we present some learning performances on the MNIST dataset in four adversarial settings where $f = 7$ out of 15 workers are Byzantine. It turns out that in such an extreme adversarial scenario where $f$ reaches the maximum tolerable value of $\floor{\frac{n}{2}}$, an even larger value of $\beta$, and thus more learning steps, are needed to guarantee a good performance in the presence of Byzantine workers. In Figure~\ref{fig:plots_mnist_0.999}, we consider two values for $\beta$ ($0$ and $0.999$), and showcase the advantages of using momentum in such a setting.\\
The observations to be made here are very similar to the ones already stated in Sections~\ref{sec:experiment-results} and \ref{app:exp_results_FM}. In a few words, we can clearly see that setting $\beta$ to 0.999 improves the top-1 cross-accuracies of all six $(f, \lambda)$-resilient averaging aggregation rules. However, for the two non-resilient averaging rules (CC and CGE), momentum need not improve the learning.

\end{document}